\Crefname{algocf}{Algorithm}{Algorithms}
\crefname{algocf}{Algorithm}{Algorithms}
\Crefname{algocfline}{Line}{Lines}
\crefname{algocfline}{Line}{Lines}
\theoremstyle{plain}
	\newtheorem{theorem}{Theorem}
	\newtheorem{lemma}{Lemma}
	\newtheorem{corollary}{Corollary}
	\newtheorem{proposition}{Proposition}
	\theoremstyle{definition}
	\newtheorem{definition}{Definition}
	\newtheorem{remark}{Remark}
\def\ddefloop#1{\ifx\ddefloop#1\else\ddef{#1}\expandafter\ddefloop\fi}
\def\ddef#1{\expandafter\def\csname bb#1\endcsname{\ensuremath{\mathbb{#1}}}}
\def\ddefloop#1{\ifx\ddefloop#1\else\ddef{#1}\expandafter\ddefloop\fi}
\def\ddef#1{\expandafter\def\csname frak#1\endcsname{\ensuremath{\mathfrak{#1}}}}
\def\ddefloop#1{\ifx\ddefloop#1\else\ddef{#1}\expandafter\ddefloop\fi}
\def\ddef#1{\expandafter\def\csname fr#1\endcsname{\ensuremath{\mathfrak{#1}}}}
\def\ddefloop#1{\ifx\ddefloop#1\else\ddef{#1}\expandafter\ddefloop\fi}
\def\ddef#1{\expandafter\def\csname eul#1\endcsname{\ensuremath{\EuScript{#1}}}}
\def\ddefloop#1{\ifx\ddefloop#1\else\ddef{#1}\expandafter\ddefloop\fi}
\def\ddef#1{\expandafter\def\csname scr#1\endcsname{\ensuremath{\mathscr{#1}}}}
\def\ddefloop#1{\ifx\ddefloop#1\else\ddef{#1}\expandafter\ddefloop\fi}
\def\ddef#1{\expandafter\def\csname b#1\endcsname{\ensuremath{\mathbf{#1}}}}
\def\ddefloop#1{\ifx\ddefloop#1\else\ddef{#1}\expandafter\ddefloop\fi}
\def\ddef#1{\expandafter\def\csname bhat#1\endcsname{\ensuremath{\hat{\mathbf{#1}}}}}
\def\ddefloop#1{\ifx\ddefloop#1\else\ddef{#1}\expandafter\ddefloop\fi}
\def\ddef#1{\expandafter\def\csname btil#1\endcsname{\ensuremath{\tilde{\mathbf{#1}}}}}
\def\ddefloop#1{\ifx\ddefloop#1\else\ddef{#1}\expandafter\ddefloop\fi}
\def\ddef#1{\expandafter\def\csname bst#1\endcsname{\ensuremath{\mathbf{#1}^\star}}}
\def\ddefloop#1{\ifx\ddefloop#1\else\ddef{#1}\expandafter\ddefloop\fi}
\def\ddef#1{\expandafter\def\csname bst#1\endcsname{\ensuremath{\mathbf{#1}^\star}}}
\def\ddefloop#1{\ifx\ddefloop#1\else\ddef{#1}\expandafter\ddefloop\fi}
\def\ddef#1{\expandafter\def\csname bhat#1\endcsname{\ensuremath{\hat{\mathbf{#1}}}}}
\def\ddefloop#1{\ifx\ddefloop#1\else\ddef{#1}\expandafter\ddefloop\fi}
\def\ddef#1{\expandafter\def\csname b#1\endcsname{\ensuremath{\mathbf{#1}}}}
\def\ddefloop#1{\ifx\ddefloop#1\else\ddef{#1}\expandafter\ddefloop\fi}
\def\ddef#1{\expandafter\def\csname barb#1\endcsname{\ensuremath{\bar{\mathbf{#1}}}}}
\def\ddef#1{\expandafter\def\csname c#1\endcsname{\ensuremath{\mathcal{#1}}}}
\def\ddef#1{\expandafter\def\csname h#1\endcsname{\ensuremath{\widehat{#1}}}}
\def\ddef#1{\expandafter\def\csname hc#1\endcsname{\ensuremath{\widehat{\mathcal{#1}}}}}
\def\ddef#1{\expandafter\def\csname t#1\endcsname{\ensuremath{\widetilde{#1}}}}
\def\ddef#1{\expandafter\def\csname tc#1\endcsname{\ensuremath{\widetilde{\mathcal{#1}}}}}
\newcommand{\norm}[1]{\left\|#1\right\|}
\newcommand{\abs}[1]{\left|#1\right|}
\newcommand{\inprod}[2]{\left\langle #1, #2 \right\rangle}
\newcommand{\rr}{\mathbb{R}}
\newcommand{\ee}{\mathbb{E}}
\newcommand{\pp}{\mathbb{P}}
\newcommand{\algfont}[1]{\mathsf{#1}}
\newcommand{\erm}{\algfont{ERM}}
\newcommand{\fbar}{\bar{f}}
\newcommand{\fhat}{\widehat{f}}
\newcommand{\fstar}{f^\star}
\newcommand{\ftil}{\widetilde{f}}
\newcommand{\cDtil}{\widetilde{\cD}_{x}}
\newcommand{\Lap}{Lap}
\newcommand{\tstar}{t^\star}
\newcommand{\mutil}{\widetilde{\mu}}
\DeclareMathOperator*{\argmin}{argmin}
\renewcommand{\epsilon}{\varepsilon}
\newcommand{\ind}[1]{^{(#1)}}
\newcommand{\poly}{\textsf{poly}}
\newcommand{\perturb}{\algfont{Perturb}}
\newcommand{\zstar}{z^\star}
\newcommand{\ferm}{f_{\mathsf{ERM}}}
\newcommand{\rbar}{\overline{r}}
\newcommand{\cGbar}{\overline{\cG}}
\newcommand{\fat}{\mathsf{fat}}
\newcommand{\Thetatil}{\widetilde{\Theta}}
\newcommand{\Omegatil}{\widetilde{\Omega}}
\newcommand{\cLtil}{\widetilde{\cL}}
\newcommand{\vc}{\mathsf{vc}}
\DeclareMathOperator{\conv}{conv}
\title{Oracle-Efficient Differentially Private Learning with Public Data}
\author[1]{Adam Block}
\author[2]{Mark Bun}
\author[2]{Rathin Desai}
\author[3]{Abhishek Shetty}
\author[4]{Steven Wu}
\affil[1]{MIT}
\affil[2]{Boston University}
\affil[3]{University of California, Berkeley}
\affil[4]{Carnegie Mellon University}
\date{}
\begin{document}

\maketitle

\begin{abstract}
    Due to statistical lower bounds on the learnability of many function classes under privacy constraints, there has been recent interest in leveraging public data to improve the performance of private learning algorithms. In this model, algorithms must always guarantee differential privacy with respect to the private samples while also ensuring learning guarantees when the private data distribution is sufficiently close to that of the public data.  Previous work has demonstrated that when sufficient public, unlabelled data is available, private learning can be made statistically tractable, but the resulting algorithms have all been computationally inefficient.  In this work, we present the first computationally efficient, algorithms to provably leverage public data to learn privately whenever a function class is learnable non-privately, where our notion of computational efficiency is with respect to the number of calls to an optimization oracle for the function class.  In addition to this general result, we provide specialized algorithms with improved sample complexities in the special cases when the function class is convex or when the task is binary classification.

\end{abstract}

\tableofcontents

\section{Introduction}

Differential privacy (DP)~\citep{DworkMNS06} is a standard guarantee of individual-level privacy for statistical data analysis. Algorithmic research on differential privacy aims to understand what statistical tasks are compatible with the definition, and at what cost, e.g., in terms of sample complexity or computational efficiency. Unfortunately, it is known that some tasks may become more expensive or outright impossible to conduct with differential privacy. For example, in the setting of binary classification, there is no differentially private algorithm for solving the simple problem of learning a one-dimensional classifier over the real numbers~\citep{bun2015differentially, AlonLMM19}.

Motivated in part by such barriers to full-fledged private learning, many papers have considered relaxing the private learning model to allow the use of auxiliary ``public'' data~\cite{beimel2014learning, BassilyMA19, bassily2020learning, bassily2022private, bassily2023principled, kairouz2021nearly, amid2022public,  lowy2023optimal}. Such data may be available if individuals can voluntarily opt-in to share or sell their information to enable a particular task. Alternatively, a data analyst might have background knowledge about the underlying data distribution from the results of previous analyses, or hold a plausible generative model for it. Situations like this are captured by the semi-private learning model, first introduced by \citet{beimel2014learning} and subsequently studied by \citet{BassilyMA19}. In this model, a learning algorithm is given $n$ ``private'' samples from a joint distribution $\mathcal{D}$ over example-label pairs, as well as $m$ \emph{unlabeled} ``public'' samples from the same marginal distribution over examples. The algorithm must be differentially private with respect to its private dataset, but can depend arbitrarily on its public samples. For learning a binary classifier over a class $\cF$ with a VC-dimension $\vc(\cF)$, these papers showed that in the presence of $O(\vc(\cF))$ public unlabeled samples, every concept class $\cF$ is agnostically learnable with $O(\vc(\cF))$ private labeled samples, matching what is achievable without privacy guarantees at all.

While these results essentially resolve the statistical complexity of semi-private learning, they do not address the question of computational efficiency. These algorithms proceed by drawing enough public samples to construct a cover for the class $\cF$ with respect to the target marginal distribution on examples, and then using the exponential mechanism \citep{McSherryT07} to select a hypothesis from this cover that fits the private dataset. As the size of this cover is exponential in $\vc(\cF)$, constructing it explicitly is computationally expensive. This paper aims to address the following question: \emph{Is such computational overhead really necessary if $\cF$ exhibits additional structure that make non-private learning tractable?}

In this work, we give new semi-private learners for regression and classification classes $\cF$ that are efficient whenever fast non-private algorithms are available. More specifically, our main result is generic semi-private algorithms for regression and classification that are \emph{oracle-efficient} in that they run in polynomial time given an oracle solving the \emph{non-private} empirical risk minimization problem for $\cF$, and have sample complexity polynomial in the usual parameters such as Gaussian complexity and VC dimension.

\begin{theorem}[Informal version of~\Cref{thm:general_ftpl_ub}]\label{informalthm}
	Fix a function class $\cF: \cX \to [-1,1]$ with Gausssian complexity $\sqrt{d}$. Then there is an oracle-efficient, $(\varepsilon, \delta)$-differentially private algorithm (\Cref{alg:general_alg}) using $\poly(d)$ labeled private samples, unlabeled public samples, and calls to an empirical risk minimization oracle for $\cF$ that learns an approximately optimal predictor $\hat f \in \cF$.
\end{theorem}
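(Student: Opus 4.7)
The plan is to design an oracle-efficient follow-the-perturbed-leader (FTPL) mechanism that leverages the unlabeled public samples to construct a privacy-providing random perturbation of the empirical risk. Given $n$ labeled private samples and $m$ unlabeled public samples $x_1^{\mathrm{pub}}, \ldots, x_m^{\mathrm{pub}}$, I would draw independent noise variables $\eta_1, \ldots, \eta_m$ at a scale $\sigma$ to be calibrated, form the perturbed objective
\[
\widetilde{L}(f) \defeq \widehat{L}_n(f) + \frac{\sigma}{n} \sum_{i=1}^{m} \eta_i \, f(x_i^{\mathrm{pub}}),
\]
and output $\widehat{f} \in \argmin_{f \in \cF} \widetilde{L}(f)$. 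Because the perturbation is linear in $f$ evaluated at fixed points, it can be folded into a single ERM oracle call by appending the public examples with appropriately weighted synthetic labels, which is what makes the scheme oracle-efficient.

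For privacy, I would use the observation that changing a single private example shifts $\widehat{L}_n(f)$ pointwise by $O(1/n)$ uniformly in $f$. The perturbation, viewed as a random linear functional on $\cF$, plays the role of the noise in a report-noisy-max / Gaussian mechanism argument applied to the score function $-\widehat{L}_n$. With $m = \poly(d)$ public samples and noise scale $\sigma = \widetilde{\Theta}(\sqrt{d}/\varepsilon)$, the random perturbation dominates the sensitivity of the empirical risk in every direction that $\cF$ can distinguish on the public-sample grid, which suffices for $(\varepsilon,\delta)$-DP after a standard tail bound. For utility, I would decompose the excess risk of $\widehat{f}$ into the uniform-convergence gap between population and empirical risk---controlled at rate $\widetilde{O}(\sqrt{d/n})$ by the Gaussian complexity assumption---plus the perturbation bias $\sup_{f \in \cF} \frac{\sigma}{n}\bigl|\sum_i \eta_i f(x_i^{\mathrm{pub}})\bigr|$, which is itself bounded by the empirical Gaussian complexity of $\cF$ on the public samples. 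Calibrating $m$ and $\sigma$ as above makes this bias a lower-order term relative to the generalization error, giving $\poly(d)$ private sample complexity.

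The main obstacle, as I see it, is calibrating the perturbation in function space: the noise must simultaneously be large enough to cover the worst direction in which a single private example can shift the empirical risk, yet small enough in pointwise supremum over $\cF$ to preserve accuracy. The conceptual reason the two sides of this tradeoff are compatible is that the public data shares the target marginal, so the Gaussian complexity simultaneously controls the effective dimension of the function space in which privacy noise must live and the utility loss incurred by that noise. The remaining effort is routine: instantiating an appropriate tail inequality for the $\eta_i$'s (Gaussian noise suffices), translating the function-space mechanism into a weighted-dataset oracle call, and tracking constants to confirm that both $m$ and $n$ can be taken polynomial in $d$ with a constant number of oracle calls.
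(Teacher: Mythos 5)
Your utility analysis and the idea of folding a public-data-supported linear perturbation into a single ERM call are both consistent with the paper, but your privacy argument has a genuine gap: a single draw of FTPL-style objective perturbation does \emph{not} yield $(\epsilon,\delta)$-differential privacy for a general real-valued class $\cF$, and no "standard tail bound" closes this. The report-noisy-min intuition you invoke requires either a finite candidate set with independent noise per candidate, or the discrete coupling structure used for $0/1$ losses over a separator set (this is exactly what the paper's classification-only algorithm, \Cref{alg:RRSPM}, exploits via \Cref{lem:adjacent}, and it relies on the loss being Boolean and on a measure-preserving shift of the Laplace noise). For real-valued $\cF$ with Lipschitz losses, the map from the noise vector to $\argmin_{f\in\cF}\widetilde L(f)$ is discontinuous and non-injective, and two functions agreeing on all public points receive identical perturbations, so the output distribution over $\cF$ admits no likelihood-ratio bound. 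What Gaussian objective perturbation actually buys is only an \emph{anti-concentration} statement (\Cref{prop:banach_gaussian_anti_concentration}): with probability polynomially (not exponentially) close to $1$ in the radius $\rho$, the minimizers for neighboring datasets are within $\rho$ in $\norm{\cdot}_m$. That is a stability guarantee, not privacy.

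The paper bridges this gap with two additional mechanisms that your proposal omits. First, because the anti-concentration bound holds only with moderate probability, \Cref{alg:general_alg} repeats the perturbed ERM $J=\poly(\cdot)$ times with independent Gaussian processes and averages the minimizers (Line \ref{line:averaging}), using Jensen plus a Chernoff bound (\Cref{lem:anticoncentration_stability}) to boost moderate-probability stability into high-probability stability of $\fbar$ in $\norm{\cdot}_m$; this also means the oracle-call count is $\poly(d)$ rather than constant, and it is where convexity of $\ell$ is needed for utility. Second, stability in norm still is not DP, so the algorithm applies an explicit output perturbation on the public sample (\Cref{alg:output_perturb}), i.e., a Gaussian mechanism in the empirical $L^2$ geometry, and \Cref{lem:perturb_boosting} converts the norm-stability of $\fbar$ into an $(\epsilon,\delta)$ guarantee for $\fhat$. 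Your calibration heuristic --- noise large enough to cover sensitivity, small enough to preserve accuracy --- is the right intuition for this \emph{second} stage, but applied to the objective perturbation alone it does not produce a valid privacy proof.
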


\swedit{While \Cref{informalthm} captures extremely broad learning settings}, the polynomials governing its sample complexity are rather large. We identify several important cases in which the sample complexity can be improved and the number of oracle calls is only 2. In the case where the function class $\cF$ is convex, we give a variant of \Cref{alg:general_alg}, inspired by follow-the-regularized-leader, with significantly improved sample complexity as a function of the desired error (\Cref{thm:cvx_ftrl}, \Cref{alg:ftrl_alg}). Finally, in the special case of binary classification (i.e., Boolean $\cF$ under the 0-1 loss), we give a completely different oracle-efficient algorithm with improved sample complexity (\Cref{thm:classification-upper-bound}, \Cref{alg:RRSPM}), \swedit{which require the private sample size to grow at the rate of $O((\vc(\cF))^2)$.} Our results in the binary classification setting can also be viewed as an {extension} of \cite{neel2019use}, which gives oracle-efficient private learners for structured function classes $\cF$ that have a small \emph{universal identification set} \citep{goldman1993exact}. Our results relax this stringent combinatorial condition by leveraging a small public unlabelled dataset, which allows us to design an oracle-efficient private learner for any function class $\cF$ with bounded VC-dimension.

In fact, our results also address a somewhat more general setting than the semi-private model described so far. Specifically, our results automatically handle the setting where there may be a bounded distribution shift between the public and private data. In particular, all of our results hold as long as the public unlabelled data distribution and the private marginal distribution over the feature space have a density ratio bounded by $\sigma$.\footnote{In fact, this condition can be further relaxed to only assuming the two distributions have a bounded $f$-divergence.} The standard semi-private setting corresponds to the special case where $\sigma = 1$. Taking this view, we can interpret our results as oracle-efficient private learning in the \emph{smoothed learning} setting. Our algorithms achieve accurate learning provided that the private marginal distribution does not deviate too much from a public reference distribution. However, our privacy guarantees hold even if the private data distribution has unbounded distribution shift from the public data.

\subsection{Related Work}

Our work brings together ideas and techniques from multiple literatures.

\paragraph{Oracle efficiency in private and online learning.} Our notion of oracle-efficiency is standard in (theoretical) machine learning to model reductions in a world where worst-case hardness abounds, but optimization heuristics (e.g., integer programming solvers, non-convex optimization) often enjoy success. Within the differential privacy literature, oracle-efficient algorithms are known for binary classification with classes $\cF$ that admits small universal identification sets~\citep{neel2019use}, synthetic data generation~\citep{gaboardi2014dual, nikolov2013geometry, neel2019use, vietri2020new}, and certain types of non-convex optimization problems~\citep{Neel0VW20}. Oracle-efficiency is also \swedit{well-established approach} in \emph{online learning}~\citep{kalai2005efficient,hazan2016computational,kozachinskiy2023simple,haghtalab2022oracle,block2022efficient,block2023sample,block2023oracle,block2023smoothed}, an area with deep connections to differential privacy~\citep{AlonLMM19,abernethy2019online,bun2020equivalence,ghazi2021sample}. 
Indeed, our new semi-private learning algorithm \Cref{alg:general_alg} adapts a follow-the-perturbed-leader inspired algorithm~\citep{block2022smoothed} from the setting of \emph{smoothed} online learning.

\paragraph{DP learning and release with public (unlabelled) data} Our results contribute to a long line of theoretical work that leverages public data for private data analysis. In particular, our work provides general computationally efficient algorithms (in the oracle efficiency sense) for semi-private learning \citep{beimel2014learning, BassilyMA19}, \mbedit{whereas prior work focused only on sample complexity. Several recent papers~\cite{bassily2022private, bassily2023principled} developed efficient algorithms for private learning with domain adaptation from a public source. That work accommodates a more general notion of distribution shift than ours, but makes essential use of \emph{labeled} public data, as well as handling only restricted concept classes or loss functions. There} has also been work that leverages public data to remove statistical barriers in private query release \citep{BassilyCMNUW20} and density estimation~\cite{bie2022private, ben2023private}. In parallel, there has also been a large body of empirical work on using public data for private learning, including \cite{PapernotSMRTE18, yu2022differentially, Golatkar2022, ZhouW021}, and private synthetic data, including \cite{liu2021leveraging, LiuVW21}. \swcomment{probably need to move the cites on the domain adaptation her}

\paragraph{Smoothed Analysis in Online Learning.}  Smoothed analysis was pioneered in \citet{spielman2004smoothed} for the purpose of explaining the empirical success of algorithms whose worst-case behavior is provably intractable.  More recently, the framework has come to online learning \citep{rakhlin2011online,haghtalab2020smoothed,haghtalab2022smoothed,block2022smoothed,haghtalab2022oracle,block2022efficient,block2023oracle,block2023smoothed} in order to circumvent the strong statistical \citep{rakhlin2015online} and computational \citep{hazan2016computational} lower bounds that worst-case data can induce.  
The assumption of smoothness has also been used in learning problems more broadly \citep{durvasula2023smoothed,cesa2023repeated} and its assumptions have been relaxed \citep{block2023sample}.
\section{Preliminaries}
In this section, we formally introduce our setting. Let $\cX$ denote the feature space and $\cY$ be the label space. In general, we consider $\cY= [-1, 1]$, but in the special case of binary classification setting, we have $\cY=\{0, 1\}$. In general, we study learning algorithms $\cA$ that map a dataset $\cD$ with $n$ examples from $\cX\times \cY$ to a predictor in a function class $\cF$. We require $\cA$ to satisfy \emph{differential privacy}, defined below.
\begin{definition}[Differential Privacy \cite{DworkMNS06}]\label{def:dp}
    Let $\cA: (\cX \times \cY)^n \to \cF$ be a randomized algorithm and $\cD, \cD' \in (\cX \times \cY)^n$ be data sets.  We say that $\cD$ and $\cD'$ are \emph{neighboring} if $\abs{\cD \setminus \cD'} = \abs{\cD' \setminus \cD} \leq 1$, i.e. they differ in at most one datum.  We say that $\cA$ is $(\epsilon, \delta)$-\emph{differentially private} if for all neighboring datasets $\cD, \cD'$, and for all measurable $\cG \subset \cF$, it holds that
    \begin{align}
        \pp\left( \cA(\cD) \in \cG \right) \leq e^\epsilon \cdot \pp\left( \cA(\cD') \in \cG \right) + \delta.
    \end{align}
    If $\delta = 0$, we say that $\cA$ is $\epsilon$-(purely) differentially private.
\end{definition}

As defined, it is trivial to construct algorithms that are differentially private by outputting functions independent of the data set; for an algorithm to be useful, however, we also require that it learns in a meaningful sense.  Thus, in the context of learning, we consider the following accuracy desideratum.

\begin{definition}\label{def:pac}
    Let $\cA: (\cX \times \cY)^n \to \cF$ be a randomized algorithm.  We say that $\cA$ is an $(\alpha, \beta)$-learner with respect to a measure $\nu$ on $\cX \times \cY$ and loss function $L: \cF \to [-1,1]$ if, for $\cD$ sampled independently from $\nu$, it holds that 
    \begin{align}
        \pp\left(L(\cA(\cD)) \leq \inf_{f \in \cF} L(f) + \alpha  \right) \geq 1 - \beta.
    \end{align}
   For regression problems, we consider the loss function $L$ to be induced by a function $\ell: \cY \times \cY \to [0,1]$, convex and $\lambda$-Lipschitz in the first argument, such that $L(f) = \ee_{(X,Y) \sim \nu} \left[ \ell(f(X), Y) \right]$.
\end{definition}
For simplicity, we will denote the empirical loss on a data set $\cD$ as $L_\cD(f) = \frac 1{n} \cdot \sum_{(X_i, Y_i) \in \cD} \ell(f(X_i), Y_i)$.  
We emphasize that in contradistinction to the standard notion of PAC-learnability \citep{valiant1984theory}, our requirement is weaker in that we only require \emph{distribution-dependent} learning, i.e., the algorithm $\cA$ is allowed to depend on $\nu$ in some to-be-specified way.  This is necessary in our setting as it is well known that distribution-independent differentially private PAC learning is \swedit{possible only for very restricted classes of functions $\cF$ with bounded Littlestone dimension \citep{AlonLMM19,bun2020equivalence}.} 
To make private learning statistically tractable for broader classes of functions, we consider the following restriction on $\nu$:
\begin{definition}\label{def:smooth}
    Given a measure $\mu \in \Delta(\cX)$ and a parameter $\sigma \in (0,1]$, we say that $\nu_x$ is $\sigma$-smooth with respect to $\mu$ if $\norm{\frac{d\nu_x}{d \mu}}_\infty \leq \frac 1\sigma$.  We suppose that the learner has access to $m$ samples $Z_1, \dots, Z_m \sim \mu$ that are independent of each other and the training data $\cD$ and thus $\cA$ may depend on these samples.
\end{definition}
We remark that \Cref{def:smooth} can be significantly relaxed by assuming only that $D_f(\nu_x || \mu) \leq \frac 1\sigma$ as in \citet{block2023sample}, where $D_f(\cdot || \cdot)$ is a sufficiently strong $f$-divergence\footnote{These divergences include the well-known KL divergence and Renyi divergence.  For a comprehensive introduction to $f$-divergences, see \citet{polyanskiy2022}.}.  In this case, the statistical rates presented below will be worse and depend on $f$, but the algorithms and privacy guarantees will remain unchanged.  Critically, we do not require that our algorithms are private with respect to $Z_1, \dots, Z_m$, which we treat as public, unlabelled data.   

Our primary focus is to design computationally efficient private learners. To this end, we suppose access to the following ERM oracle:
\begin{definition}\label{def:erm_oracle}
    Given a function class $\cF: \cX \to \rr$, a data set $\cD = \{x_1, \dots, x_m \} \subset \cX$ and loss functions $\ell_1, \dots, \ell_m: \rr \to \rr$, we define the empirical risk minimization oracle $\erm: \cF \to \rr$ such that
    \begin{align}
        \erm(\cF, \cL_{\cD}) \in \argmin_{f \in \cF} \cL_{\cD}(f),
    \end{align}
    where $\cL_{\cD}(f) = \sum_{x_i \in \cD} \ell_i(f(x_i))$.
\end{definition}
ERM oracles are standard computational models in many learning domains such as online learning \citep{kalai2005efficient,hazan2016computational,block2022smoothed,haghtalab2022oracle} and Reinforcement Learning  \citep{foster2020beyond,foster2021statistical,mhammedi2023representation,mhammedi2023efficient}.  Assuming access to $\erm$ allows us to disentangle the computational challenges of optimizing over specific function classes from the specific challenge of differentially private learning as well as to avoid the well-known intractability results for nonconvex optimization \citep{blum1988training} that do not accurately reflect the realities of modern optimization techniques (e.g., integer program solvers, SGD).  \abedit{We remark that applying \citet[Theorem 8]{neel2019use} gives a black-box robustification procedure for purely private, oracle-efficient algorithms, which ensures that the privacy guarantees continue to hold even when the oracle may fail to optimize the objective. In particular, \Cref{alg:ftrl_alg,alg:RRSPM} below, when run in their pure DP forms can be made robust at a minimal cost on accuracy. We defer to \citet{neel2019use} for further discussion on this topic.}
 
It is well-known that even absent differential privacy guarantees, learning arbitrary function classes is impossible; we now introduce the notions of complexity that are relevant to our results.  We begin with the standard notion of VC dimension:
\begin{definition}\label{def:vc}
    Let $\cF: \cX \to \left\{ 0, 1 \right\}$ be a function class.  We say that a set of points $x_1, \dots, x_d \in \cX$ shatters $\cF$ if for all $\epsilon_{1:d} \in \left\{ 0, 1 \right\}^d$, there is some $f_\epsilon$ such that $f_\epsilon(x_i) = \epsilon_i$ for all $i$.  The VC dimension of $\cF$, denoted $\vc(\cF)$, is the largest $d$ such that there exists a set of $d$ points that shatters $\cF$.
\end{definition}
In addition to VC dimension, we also use the Gaussian complexity of a function class:
\begin{definition}\label{def:gaussianccomplexity}
    Let $\cF: \cX \to [-1,1]$ be a function class and $x_1, \dots, x_m \in \cX$ be arbitrary points.  We let $\omega_m: \cF \to \rr$ be the canonical Gaussian process on $\cF$, i.e.,
    \begin{align}\label{eq:gp_def}
        \omega_m(f) = \frac 1{\sqrt m} \cdot \sum_{ i =1}^m \xi_i \cdot f(x_i),
    \end{align}
    where $\xi_i$ are independent standard Gaussians.  We define the (data-dependent) Gaussian complexity of $\cF$ to be $\ee\left[ \sup_{f \in \cF} \omega_m(f) \right]$, the average Gaussian complexity as $\cG_m(\cF) = \ee_Z \ee[\sup_{f \in \cF} \omega_m(f)]$, and the worst-case Gaussian complexity of $\cF$ to be $\cGbar_m(\cF) = \sup_{x_1, \dots, x_m \in \cX} \ee\left[ \sup_{f \in \cF} \omega_m(f) \right]$.
\end{definition}
Both $\vc(\cF)$ and $\cG_m(\cF)$ are well known measures of complexity from learning theory and their relationships to other notions of complexity like covering number are well-understood \citep{mendelson2003entropy,wainwright2019high,van2014probability}.  In particular, it is well-known that $\cG_m(\cF) = O(\sqrt{\vc(\cF)})$ \citep{dudley1969speed,mendelson2003entropy} and that standard PAC-learning is possible if and only if $\cGbar_m(\cF) = o(\sqrt m)$ \citep{wainwright2019high,van2014probability}.  We remark that different texts use different scalings for $\cG_m(\cF)$, with some replacing the $m^{-1/2}$ factor in \eqref{eq:gp_def} with $m^{-1}$ and others omitting it entirely; our choice of scaling is motivated by the fact that a natural complexity measure for many (Donsker \citep{wainwright2019high}) function classes that our algorithms depend on is $\sup_m \cGbar_m(\cF)$, which is most compactly represented with the present scaling.

\paragraph{Notation.}  We always reserve $\pp$ and $\ee$ for probability and expectation with respect to measures that are clear from the context. We denote by $\Delta(\cX)$ the space of measures on some $\cX$ and for any $\mu \in \Delta(\cX)$ we let $\norm{\cdot}_\mu$ denote the $L^2(\mu)$ norm, i.e., $\norm{f}_\mu^2 = \ee_{Z \sim \mu}[f(Z)^2]$.  Similarly, for $m$ points $Z_1, \dots, Z_m \in \cX$, we let $\norm{\cdot}_m$ denote the empirical $L^2$ norm on these points so that $\norm{f}_m^2 = m^{-1} \cdot \sum_{i = 1}^m f(Z_i)^2$.  We reserve $\omega_m$ for the canonical empirical Gaussian process on $\cF$ as in \eqref{eq:gp_def} and $\cL$ for a functional on $\cF$.

\section{Algorithms for Differentially Private Learning}\label{sec:results}

In this section, we provide a general template for constructing differentially private learning algorithms with public data and instantiate this template with two oracle-efficient algorithms.  Our first algorithm applies to arbitrary bounded function classes, whereas the second algorithm only applies to convex classes but has an improved sample complexity.  Our general template is broken into the following two steps:
\begin{enumerate}
    \item Use $\erm$ (cf. \Cref{def:erm_oracle}) and the public data to construct an initial estimate $\fbar$ that is a good learner and satisfies stability with respect to $\norm{\cdot}_m$.
    \item Output $\fhat$ as the function that minimizes $\norm{f - \fbar - \gamma \cdot \zeta}_m$, where $\gamma \geq 0$ is a scale and $\zeta = (\zeta_1, \dots, \zeta_m)$ is a vector of independent random variables sampled according to some distribution $\cQ$.
\end{enumerate}
\begin{algorithm}
    \begin{algorithmic}[1]
        \State \textbf{Input } Function $\fbar \in \cF$, distribution $\cQ \in \Delta(\rr)$, scale $\gamma \geq 0$, public data $\cDtil = \left\{ Z_1, \dots, Z_m \right\}$.
        \State{} \textbf{Sample} $\zeta_1, \dots, \zeta_m \sim \cQ$.
        \State{} \textbf{Define } $R(f) = \norm{f - \fbar - \gamma \cdot \zeta}_m^2$.
        \State{} \textbf{Output } $\fhat = \erm(R, \cF)$.
    \end{algorithmic}
    \caption{$\perturb$: An algorithm for perturbing a function with noise on public data.}
    \label{alg:output_perturb}
\end{algorithm}
The second step is accomplished through \Cref{alg:output_perturb} and is the same across our algorithms.  The first step, however, is algorithm-specific and is the primary factor affecting the sample complexity.   The intuition for our template is as follows.  We need to show that $\fhat$ is both a learner and is differentially private.  To see why the template produces a good learner, note that if $Z_i \sim \mu$ are independent and $m$ is sufficiently large, then $\norm{\cdot}_m \approx \norm{\cdot}_{\mu}$.  Thus if $\gamma$ is small, then $\norm{\fbar - \fhat}_{\mu} \ll 1$ and $\abs{\ee_\mu\left[ L(\fhat) \right] - \ee_\mu[L(\fbar)]} \ll 1$ whenever $\ell$ is Lipschitz.  By smoothness, a similar guarantee holds for expectations with respect to $\nu$ and thus $\fhat$ is a good learner.  To see why $\fhat$ is differentially private, note that by choosing $\cQ$ to be a standard Gaussian, we can ensure that the likelihood ratios of choosing $\fhat$ given $\fbar$ verse $\fbar'$ are controlled by $\norm{\fbar - \fbar'}_m$.  Thus, if $\fbar$ is stable with respect to $\norm{\cdot}_m$, then $\fhat$ will be differentially private.  

We now make the above intuition precise by instantiating this template in our most general setting in \Cref{alg:general_alg}.  We construct $\fbar$ by running $\erm$ on a perturbed version of the empirical risk minimization problem and then averaging.  Specifically, for $j \in [J]$, we define $\cL\ind{j}: \cF \to \rr$ as a sample path of a noncentred Gaussian process in \eqref{eq:omega_def} and let $\fbar_j$ denote the minimizer of $\cL\ind{j}$ over $\cF$.  We then output $\fbar$ as the average of $\fbar_1, \dots, \fbar_J$.  We present motivation for the particular choice of $\fbar$, as well as the analogue in \Cref{alg:ftrl_alg}, in the subsequent section.  The following theorem shows that if $\cQ$ is chosen correctly, this algorithm is an oracle-efficient differentially private learner whenever $\nu_x$ is $\sigma$-smooth with respect to $\mu$.
\begin{algorithm} \SetAlgoNoLine
    \begin{algorithmic}[1]
    \State \textbf{Input } Oracle $\erm$, perturbation parameter $\eta > 0$, public data set $\cDtil = \left\{ Z_1, \dots, Z_m \right\}$, private data set $\cD = \left\{ (X_i, Y_i) | 1 \leq i \leq n \right\}$, function class $\cF$, loss function $\ell$, noise level $\gamma > 0$, number of iterations $J \in \bbN$, noise distribution $\cQ \in \Delta(\rr)$. 
    \iftoggle{colt}{\\}{}
    \For{$j = 1, 2, \dots, J$}{
        \State \indent \textbf{Sample} $\xi_1^{(j)}, \dots, \xi_m^{(j)} \sim \cN(0, 1)$.
        \State \indent \textbf{Define } $\omega_m\ind{j}: \cF \to \rr$ such that
        \begin{equation}
            \omega_m^{(j)}(f) = \frac 1{\sqrt m} \cdot \sum_{i = 1}^m \xi_i \cdot f(Z_i). \label{eq:omega_def}
        \end{equation}
        \State \indent \textbf{Define } $\cL\ind{j}: \cF \to \rr$ such that \label{line:fbar_ftpl}
        \begin{equation}
            \cL^{(j)}(f) = \sum_{(X_i, Y_i) \in \cD } \ell(f(X_i), Y_i) + \eta \cdot \omega_m\ind{j}(f).
        \end{equation}
        \State \indent \textbf{Define } $\fbar_j = \erm(\cL^{(j)}, \cF)$ \label{line:fbar_j}
    }
    \State \textbf{Define} $\fbar = \frac 1J \cdot \sum_{j = 1}^J \fbar_j$. \label{line:averaging}
    \State \textbf{Output } $\fhat = \perturb(\fbar, \cQ, \gamma, \cDtil)$ \Comment{By running Algorithm \ref{alg:output_perturb}}
    \end{algorithmic}
    \caption{Oracle Efficient Private Learner (Perturbation)}
    \label{alg:general_alg}
 \end{algorithm}
 \begin{theorem}\label{thm:general_ftpl_ub}
    Suppose that $\cF : \cX \to [-1,1]$ is a function class and $\mu \in \Delta(\cX)$ is a measure such that $\inf_{f \in \cF} \norm{f}_{\mu} \geq \frac 23$.  Let $\ell: [-1,1] \times [-1,1] \to [0,1]$ be a loss function that is convex and $\lambda$-Lipschitz in its first argument.  If $\cQ = \cN(0,1)$ in \Cref{alg:output_perturb}, then for any $\epsilon, \delta, \alpha, \beta \in (0,1)$, there are choices of $\eta, \gamma > 0$ and $J, m \in \bbN$, all polynomial in problem parameters, such that if 
    \begin{align}
        n = \emph{\poly}\left(\sup_m \cGbar_m(\cF), \epsilon^{-1}, \alpha^{-1}, \log\left(  \frac 1\delta \right), \log\left( \frac 1\beta \right), \lambda \right),    
    \end{align}
    then the $\fhat$ returned by \Cref{alg:general_alg} is $(\epsilon, \delta)$-differentially private.  If $\nu_x$ is $\sigma$-smooth with respect to $\mu$, then $\fhat$ is an $(\alpha, \beta)$-learner with respect to $\nu_x$ and $\ell$.
 \end{theorem}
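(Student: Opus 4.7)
My plan is to split the analysis into privacy and accuracy, both of which hinge on the observation that $\fhat$ depends on the private dataset $\cD$ only through the evaluation vector $\fbar(Z_{1:m}) := (\fbar(Z_1), \dots, \fbar(Z_m)) \in \rr^m$, since the ERM call inside \Cref{alg:output_perturb} only accesses $\fbar + \gamma\zeta$ at the public points. This lets me treat the $\perturb$ step as (i) a Gaussian mechanism followed by a post-processing for privacy, and (ii) a projection back into $\cF$ for accuracy.

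\emph{Privacy.} The vector $\fbar(Z_{1:m}) + \gamma\zeta$ with $\zeta \sim \cN(0, I_m)$ is exactly a Gaussian mechanism output, so by post-processing and the standard $(\epsilon,\delta)$-DP calibration it suffices to bound the $\ell_2$-sensitivity $\Delta$ of $\fbar(Z_{1:m})$ over neighboring private datasets, with high probability over the FTPL noise, and set $\gamma = \Theta(\Delta\sqrt{\log(1/\delta)}/\epsilon)$. The FTPL layer provides this stability: since each $\fbar_j$ minimizes $L_\cD + \eta\,\omega_m^{(j)}$ and $\omega_m^{(j)}$ is a Gaussian linear functional in the coordinates $(f(Z_i))_{i=1}^m$, swapping a single datum shifts $L_\cD$ pointwise by $O(1)$ while the Gaussian perturbation smooths the induced argmin distribution at scale $1/\eta$ --- a Report-Noisy-Max / exponential-mechanism analog in the spirit of the follow-the-perturbed-leader analysis in~\citet{block2022smoothed}. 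Averaging $J$ independent draws concentrates $\fbar(Z_{1:m})$ around its conditional mean, so combining a per-coordinate stability bound on this mean with a variance reduction of order $1/J$ yields $\Delta = O(\sqrt m/\eta + \sqrt{m/J})$ with high probability, which is made small by the polynomial choice of $\eta$ and $J$.

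\emph{Accuracy.} A standard FTPL regret bound combined with uniform convergence controlled by $\sup_m \cGbar_m(\cF)$ delivers $\ee_\xi[L_\nu(\fbar_j)] \le \inf_{f \in \cF} L_\nu(f) + O(\eta\,\cGbar_m(\cF)/n + \cGbar_n(\cF)/\sqrt n)$; convexity of $\ell$ in its first argument and Jensen then give $L_\nu(\fbar) \le \inf_{f \in \cF} L_\nu(f) + \alpha/3$. To pass from $\fbar$ to $\fhat$, I use that each $\fbar_j \in \cF$, so the projection identity in $\perturb$ yields $\norm{\fhat - \fbar - \gamma\zeta}_m \le \norm{\fbar_j - \fbar - \gamma\zeta}_m$; averaging over $j$ and Gaussian concentration of $\norm{\zeta}_m$ produces $\norm{\fhat - \fbar}_m \le \tfrac{1}{J}\sum_j \norm{\fbar_j - \fbar}_m + O(\gamma)$. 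A variance computation for the FTPL argmin --- again using that $\eta\,\omega_m^{(j)}$ is a Gaussian perturbation on a class of Gaussian complexity $\cGbar_m(\cF)$ --- bounds the average deviation term polynomially, witnessing that $\fbar$ lies near $\cF$ in the empirical norm. A uniform $\norm{\cdot}_m \leftrightarrow \norm{\cdot}_\mu$ comparison on $\cF - \cF$, where the normalization $\inf_{f \in \cF}\norm{f}_\mu \ge 2/3$ makes the conversion multiplicative, combined with the smoothness bound $\norm{\cdot}_{\nu_x} \le \sigma^{-1/2}\norm{\cdot}_\mu$ and the $\lambda$-Lipschitz property of $\ell$, then gives $|L_\nu(\fhat) - L_\nu(\fbar)| \le \alpha/3$, closing the accuracy argument.

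\emph{Main obstacle.} The delicate part is that $\eta$ plays opposing roles: larger $\eta$ shrinks $\Delta$ (better privacy) and tightens the argmin concentration that keeps $\fbar$ close to $\cF$ in $\norm{\cdot}_m$ (necessary for the projection step), but also inflates the FTPL regret and so degrades the accuracy of $\fbar$. The heart of the proof is simultaneously selecting $\eta, J, m, n, \gamma$ --- polynomially in $(\sup_m \cGbar_m(\cF), 1/\epsilon, 1/\alpha, 1/\sigma, \log(1/\delta), \log(1/\beta), \lambda)$ --- so that all four constraints (privacy sensitivity, FTPL accuracy, concentration of the argmin distribution to make $\fbar$ near $\cF$, and empirical-to-population norm transfer) hold at once. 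The non-obvious bridge is the argmin-concentration claim, which must be established through the same Gaussian-process perturbation that drives the privacy analysis and requires a uniform variance bound on FTPL outputs; once it is in hand, everything else is polynomial bookkeeping on Gaussian-mechanism and uniform-convergence tail bounds.
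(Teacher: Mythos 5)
Your overall architecture matches the paper's: reduce privacy to stability of the evaluation vector $\fbar(Z_{1:m})$ in $\norm{\cdot}_m$, view $\perturb$ as a Gaussian mechanism followed by post-processing, and prove accuracy by chaining an FTPL-style regret bound, the projection step, an empirical-to-population norm comparison, and smoothness. The gap is exactly at what you call the ``non-obvious bridge,'' and it is not filled. You describe the argmin-stability step as a Report-Noisy-Max / exponential-mechanism analog in which the Gaussian perturbation ``smooths the induced argmin distribution'' and the average concentrates around a stable conditional mean. As stated this is not a workable argument: the perturbation is a single Gaussian \emph{process} indexed by a possibly infinite class, so no per-item likelihood-ratio argument on the argmin distribution is available, and establishing stability of the conditional mean $\ee[\fbar_j(Z_i)]$ under swapping one datum is no easier than the argmin stability you are trying to prove. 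What the paper actually supplies is a Gaussian anti-concentration bound (\Cref{prop:banach_gaussian_anti_concentration}, via \Cref{thm:gaussiananticoncentration}) applied under a \emph{coupling}: run the same noise realization on neighboring datasets, so the mean functions differ by $\tau = 1/n$ uniformly, and then any $\tau$-near-minimizer lies within $\rho$ of the minimizer except with probability $O\left(\tau\,\ee\left[\sup_{f}\omega_m(f)\right]/(\rho^4\eta\kappa^2)\right)$. Crucially this holds only with \emph{moderate} probability (polynomial in $\rho$), which is the entire reason for averaging $J$ independent copies and then boosting via Jensen plus Hoeffding (\Cref{lem:anticoncentration_stability}). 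Your sketch contains neither this lemma nor a substitute, and your claimed sensitivity $\Delta = O(\sqrt m/\eta + \sqrt{m/J})$ is asserted rather than derived; the coupling argument in fact yields the weaker $(\eta n)^{-1/3}$ dependence.

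A symptom of the missing lemma is that you misattribute the hypothesis $\inf_{f\in\cF}\norm{f}_\mu\geq \frac 23$ to the norm-comparison step (``makes the conversion multiplicative''). The norm comparison (\Cref{lem:norm_comparison}) is applied to $\cF-\cF$ and needs no such lower bound; the hypothesis exists to lower-bound $\kappa^2=\inf_f\ee[\omega_m(f)^2]=\inf_f\norm{f}_m^2$, the minimal variance of the perturbing process, which sits in the denominator of the anti-concentration bound. Without it, a function with $\norm{f}_m\approx 0$ receives essentially no perturbation and the argmin can jump discontinuously when one datum changes, destroying stability. A secondary issue: in the projection step you compare $\fhat$ to $\fbar_j\in\cF$ rather than to $\fbar$, which forces you to control $\frac 1J\sum_j\norm{\fbar_j-\fbar}_m$; this quantity is not polynomially small in general (two far-apart empirical minimizers each selected with probability $\frac 12$ make it order one), so the ``variance computation'' you invoke to bound it does not exist. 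The correct route keeps the perturbation radius tied to $\gamma\norm{\zeta}_m$ alone, as in the paper's analysis of $\perturb$.
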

We emphasize that \Cref{alg:general_alg} is \emph{always differentially private}, independent of $\nu$; however, our algoirthm is only a good learner if $\nu_x$ is smooth with respect to $\mu$.  We remark that all of the conditions in \Cref{thm:general_ftpl_ub} are standard with the exception of the assumption that $\norm{f}_{\mu} \geq \frac 23$ for all $f \in \cF$.  This condition is easy to ensure by setting $\mutil = \frac {\mu + 2 \cdot \delta_{\zstar}}{3}$, where $\zstar$ is a distinguished point such that $f(\zstar) = 1$ for all $f \in \cF$; note that this process deflates $\sigma$ at most by a factor of 3 while ensuring the lower bound on the norm of $f$.  Replacing $\mu$ by $\mutil$ then suffices to ensure that \Cref{thm:general_ftpl_ub} holds.

We further remark that it is classical that the complexity notion $\sup_m \cGbar_m(\cF)$ is upper bounded by $\sqrt{\vc(\cF)}$ for binary function classes and $\sqrt{\log(\abs{\cF})}$ for finite classes \citep{wainwright2019high}, ensuring that the proven sample complexity is polynomial in all standard notions of function class complexity.  For even more complex function classes, where $\cGbar_m(\cF) = \omega(1)$, similar results hold, although with worse rates; further dicussion, as well as the precise polynomial dependence of hyperparameters and sample complexity, can be found in \Cref{app:proofs}.

While \Cref{alg:general_alg} succeeds in our desiderata under general assumptions, the sample complexity is a large polynomial of the desired accuracy.  Indeed, in \Cref{thm:general_full} found in \Cref{app:concluding_proofs}, we see that the sample complexity of \Cref{alg:general_alg} scales only like $O\left( \vc(\cF) \cdot \epsilon^{-3} \cdot \alpha^{-14} \right)$, which is significantly worse than the $O\left( \vc(\cF) \cdot \alpha^{-2} \right)$ sample complexity that a non-private algorithm such as ERM can achieve \citep{wainwright2019high}.  Furthermore, we are unable to achieve a pure differential privacy guarantee with this algorithm.  We now address both issues by providing an improved algorithm in the special case that the function class $\cF$ is \emph{convex}.  While we still use \Cref{alg:output_perturb} as a subroutine, in \Cref{alg:ftrl_alg}, motivated by the difference between Follow the Perturbed Leader (FTPL) and Follow the Regularized Leader (FTRL) \citep{kalai2005efficient,cesa2006prediction} in online learning, we modify the way in which we choose our initial estimator $\fbar$.  In particular, we eliminate the averaging step and redefine $\omega$ to be a strongly convex \emph{regularizer} instead of a Gaussian Process \emph{perturbation}.  More specifically, we define $\cL$ in \eqref{eq:reg_def} as the empirical loss regularized by $\norm{\cdot}_m^2$ and output $\fbar = \erm(\cL, \cF)$.
\begin{algorithm}
    \begin{algorithmic}[1]
    \State \textbf{Input } Oracle $\erm$, perturbation parameter $\eta > 0$, public data set $\cDtil = \left\{ Z_1, \dots, Z_m \right\}$, private data set $\cD = \left\{ (X_i, Y_i) | 1 \leq i \leq n \right\}$, function class $\cF$, loss function $\ell$, noise level $\gamma > 0$, number of iterations $J \in \bbN$, noise distribution $\cQ \in \Delta(\rr)$.
    \State \textbf{Define } $\cL: \cF \to \rr$ such that
    \begin{equation}\label{eq:reg_def}
        \cL(f) = \sum_{(X_i, Y_i) \in \cD } \ell(f(X_i), Y_i) + \eta \cdot \norm{f}_m^2.
    \end{equation}
    \State \textbf{Define } $\fbar = \erm(\cL, \cF)$ \label{line:fbar_ftrl}
    \State \textbf{Output } $\fhat = \perturb(\fbar, \cQ, \gamma, \cDtil)$ \Comment{By running Algorithm~\ref{alg:output_perturb}}
    \end{algorithmic}
    \caption{Oracle Efficient Private Learner (Regularization)}
    \label{alg:ftrl_alg}
 \end{algorithm}
We have the following result:
\begin{theorem}\label{thm:cvx_ftrl}
    Suppose that $\cF: \cX \to [-1,1]$ is a convex function class and $\ell:[-1,1] \times [-1,1] \to [0,1]$ is convex and $\lambda$-Lipschitz in its first argument.  Suppose that $Z_1, \dots, Z_m \sim \mu$ are independent and $\cQ = \cN(0,1)$.  Then there are $\eta, \gamma, m$ polynomial in problem parameters such that, if
    \begin{align}
        n = \Omegatil\left( \emph{\poly}\left(\log\left( \beta^{-1} \right), \log\left( \delta^{-1} \right), \lambda   \right)\cdot (\sup_m \cGbar_m(\cF))^2 \cdot \epsilon^{-1} \cdot \alpha^{-5} \right)
    \end{align}
    then the $\fhat$ returned by \Cref{alg:ftrl_alg} is $(\epsilon, \delta)$-differentially private.  If $\nu_x$ is $\sigma$-smooth with respect to $\mu$, then $\fhat$ is an $(\alpha, \beta)$-learner with respect to $\nu_x$ and $\ell$.  Furthermore, if $\cQ = \Lap(1)$, and
    \begin{align}
        n = \emph{\poly}\left(\sup_m \cGbar_m(\cF), \epsilon^{-1}, \alpha^{-1}, \log\left( \beta^{-1} \right), \lambda\right),
    \end{align}
    then \Cref{alg:ftrl_alg} is $\epsilon$-\emph{purely} differentially private and is an $(\alpha, \beta)$-PAC learner for any $\sigma$-smooth $\nu_x$.
\end{theorem}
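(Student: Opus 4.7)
}

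The algorithm is a two-stage method: compute $\fbar$ as a regularized ERM on the private data with regularizer $\eta \norm{\cdot}_m^2$, then apply \Cref{alg:output_perturb} to obtain $\fhat$ by adding coordinate-wise noise to $\fbar$ at the public points and projecting back onto $\cF$ in the seminorm $\norm{\cdot}_m$. The privacy analysis will proceed by establishing that $\fbar$ is stable in $\norm{\cdot}_m$ with respect to swapping a single private datum, so that the subsequent noise addition and projection amount to a standard Gaussian (respectively Laplace) mechanism followed by post-processing. The accuracy analysis decomposes the excess risk as $L(\fhat) - \inf_{f\in\cF}L(f) = (L(\fbar) - \inf_f L(f)) + (L(\fhat) - L(\fbar))$ and controls the two terms via (i) a regularized-ERM generalization bound, (ii) the fact that projection onto a convex set is a contraction, so $\norm{\fhat - \fbar}_m = O(\gamma)$, and (iii) translating the $\norm{\cdot}_m$ guarantee into an excess-risk bound under $\nu_x$ via uniform concentration on $\cF - \cF$ and the $\sigma$-smoothness assumption.

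For stability, note that $\cL(f) = \sum_i \ell(f(X_i), Y_i) + \eta \norm{f}_m^2$ is the sum of a convex loss and a $2\eta$-strongly convex regularizer, so $\cL$ is $2\eta$-strongly convex in the seminorm $\norm{\cdot}_m$. Summing the strong-convexity inequality for $\fbar$ under $\cD$ with that for $\fbar'$ under a neighboring $\cD'$ yields
\begin{align*}
    2\eta \norm{\fbar - \fbar'}_m^2 \leq \lambda \abs{\fbar'(X) - \fbar(X)} + \lambda \abs{\fbar'(X') - \fbar(X')} \leq 4\lambda,
\end{align*}
using the $\lambda$-Lipschitz property of $\ell$ and $\cF \subseteq [-1,1]$. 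Hence the $L^2$ sensitivity of the map $\cD \mapsto (\fbar(Z_j))_{j=1}^m$ is at most $\sqrt{2m\lambda/\eta}$. Since $\fhat$ depends on $\cD$ only through this vector of public-point evaluations, the Gaussian mechanism with $\gamma \gtrsim \sqrt{m \lambda \log(1/\delta)/\eta}/\epsilon$ delivers $(\epsilon,\delta)$-DP after the post-processing step of projecting onto the convex set $\cF$ in $\norm{\cdot}_m$. For pure $\epsilon$-DP with $\cQ = \Lap(1)$, we pay an extra $\sqrt{m}$ factor to convert the $L^2$ sensitivity bound into an $L^1$ bound.

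For accuracy, the regularizer biases the objective by at most $\eta/n$ since $\norm{f}_m^2 \leq 1$, so standard uniform-convergence arguments in terms of Gaussian complexity give $L(\fbar) - \inf_{f \in \cF} L(f) = O(\lambda \cGbar_n(\cF)/\sqrt{n} + \eta/n)$ with high probability. Because $\fbar \in \cF$ and $\fhat$ is the $\norm{\cdot}_m$-projection of $\fbar + \gamma \zeta$ onto $\cF$, the triangle inequality and the contraction property of the projection give $\norm{\fhat - \fbar}_m \leq 2\gamma \norm{\zeta}_m = O(\gamma)$ with high probability for both $\cN(0,1)$ and $\Lap(1)$. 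Uniform concentration of empirical to population $L^2$ norms on $\cF - \cF$, driven by $\cGbar_m(\cF)$, then yields $\norm{\fhat - \fbar}_\mu = O(\gamma + \cGbar_m(\cF)/\sqrt{m})$; by $\sigma$-smoothness, $\norm{\fhat - \fbar}_{\nu_x} \leq \norm{\fhat - \fbar}_\mu / \sqrt{\sigma}$, and the Lipschitz property of $\ell$ gives $\abs{L(\fhat) - L(\fbar)} = O(\lambda \gamma/\sqrt{\sigma} + \lambda \cGbar_m(\cF)/\sqrt{m\sigma})$.

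Combining the two analyses, the plan is to pick $m$ and $n$ polynomially large in $\cGbar/\alpha$ and $\lambda/\alpha$ to control generalization and norm comparison, set $\gamma \approx \alpha \sqrt{\sigma}/\lambda$ so the projection excess risk is at most $\alpha$, and then solve for $\eta$ to satisfy the DP constraint on $\gamma$; the residual constraint $\eta \leq \alpha n$ for the regularization bias then determines the final polynomial sample complexity, and the Laplace case follows by the analogous computation with the extra $\sqrt{m}$ factor. The main obstacle is that the standard FTRL-style strong-convexity argument only yields $\norm{\fbar - \fbar'}_m = O(\sqrt{\lambda/\eta})$ rather than the $O(\lambda/(\eta n))$ one might naively expect, because the per-sample losses are not Lipschitz in the seminorm $\norm{\cdot}_m$ (which only sees the public points, not the differing private datum). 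This weakened stability is what forces $\eta$ to be polynomially large in $m$, and in turn degrades the sample complexity from the unregularized non-private $O(\cGbar^2/\alpha^2)$ rate to the polynomial scaling claimed in the theorem.
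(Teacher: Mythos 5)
Your plan reproduces the paper's proof essentially step for step: the strong-convexity stability bound for the regularized ERM (the paper's \Cref{lem:ftrl_stability}, which uses boundedness of $\ell$ where you use Lipschitzness, giving the same $O(1/\sqrt{\eta n})$ stability in $\norm{\cdot}_m$ up to the normalization of the empirical loss), the Gaussian/Laplace perturbation plus post-processing for privacy (the paper's \Cref{lem:perturb_boosting}), and the identical accuracy decomposition via the regularization bias (\Cref{lem:small_regularizer}), uniform deviations, the perturbation bound (\Cref{lem:perturb_is_small}), the empirical-to-population norm comparison (\Cref{lem:norm_comparison}), and the smoothness transfer. The only substantive difference is that you phrase the privacy step through the $\ell^2$/$\ell^1$ sensitivity of the evaluation vector $(\fbar(Z_j))_{j\le m}$ and invoke the standard mechanism guarantees, which is in fact marginally tighter than the paper's hand-computed likelihood-ratio bound in \Cref{lem:perturb_boosting}; the overall route and the resulting parameter tuning are otherwise the same.
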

As in the case of \Cref{thm:general_ftpl_ub}, we can easily generalize \Cref{thm:cvx_ftrl} to apply to function classes $\cF$ where $\cGbar_m(\cF) = \omega(1)$ at the cost of worse polynomial dependence in the sample complexity.  We again omit this case for the sake of simplicity.  While the sample complexity of \Cref{alg:ftrl_alg} is a marked improvement over that of \Cref{alg:general_alg}, it remains a far cry from the desired $O\left( \alpha^{-2} \right)$ rates of non-private learning that computationally \emph{inefficient} private algorithms leveraging public data are able to achieve \citep{BassilyMA19}; we leave the interesting question of producing an oracle-efficient private algorithm with optimal sample complexity to future work.

Finally, we remark that even in the case where $\cF$ is not convex, \Cref{alg:ftrl_alg} can be applied to $\conv
(\cF)$, the convex hull of $\cF$, if we assume the learner has access to $\erm'$, a stronger ERM oracle that can optimize over $\conv(\cF)$.  In this case, \Cref{thm:cvx_ftrl} supercedes \Cref{thm:general_ftpl_ub} as it is easy to see that $\cGbar_m(\cF) = \cGbar_m(\conv(\cF))$ and thus the sample complexity of \Cref{alg:ftrl_alg} is strictly better than that of \Cref{alg:general_alg} and the pure differential privacy result applies.  We now turn to the proofs of our main results.

\section{Analysis Techniques}\label{sec:analysis}
In this section, we outline the proofs of our main results, with full details and technical lemmata deferred to \Cref{app:proofs}.  As is suggested by our template, the proof of the privacy part of \Cref{thm:general_ftpl_ub} rests on two results: the first shows that if $\cQ$ is a standard Gaussian (resp. exponential) then stability of $\fbar$ with respect to $\norm{\cdot}_m$ can be translated into differential privacy.  The second shows that $\fbar$ will be stable with respect to $\norm{\cdot}_m$.  Similarly, the proof that $\fhat$ is a good learner first shows that $\fbar$ is a good learner and then that $\fhat$ and $\fbar$ are close.  We begin with the more technically novel parts and show that, under standard assumptions, \Cref{alg:general_alg,alg:ftrl_alg} result in $\fbar$ that are stable in $\norm{\cdot}_m$.  In our proof of \Cref{thm:general_ftpl_ub}, we provide an improved analysis of the Gaussian anti-concentration result from \citet{block2022smoothed}, which may be of independent interest.  We prove the stability of \Cref{alg:ftrl_alg} using a technique common in online learning.  We then proceed with the more standard analysis and show that stability in $\norm{\cdot}_m$ can be boosted to a differential privacy guarantee using the Gaussian and Laplace Mechanisms \citep{DworkMNS06}.  Finally, we apply standard learning theoretic techniques to show that $\fhat$ is a good learner.  

\subsection{Stability Analysis}
In this section, we explain how to prove that \Cref{alg:general_alg,alg:ftrl_alg} are stable with respect to $\norm{\cdot}_m$.  Our stability results further cement the connections between differential privacy and online learning noted in \citet{abernethy2019online} as both algorithms are primarily motivated by online learning techniques.  We begin by describing the stability analysis of \Cref{alg:general_alg}.

The key lemma underlying the stability of \Cref{alg:general_alg} is an improved version of a Gaussian anti-concentration result from \citet{block2022smoothed}, which may be of independent interest. The abstract anti-concentration lemma is the following:
\begin{proposition}\label{prop:banach_gaussian_anti_concentration}
    Let $\cF$ denote a (separable) subspace of the unit ball with respect to a norm $\norm{\cdot}$ induced by an inner product $\inprod{\cdot}{\cdot}$ and let $m,m': \cF \to \rr$ denote measurable functions such that $\sup_{f \in \cF} \abs{m(f) - m(f')} \leq \tau$.  If $\omega$ is a centred Gaussian process on $\cF$ with covariance kernel given by $\inprod{\cdot}{\cdot}$, $\Omega(f) = m(f) + \eta \cdot \omega(f)$, $\fbar = \argmin_{f \in \cF} \Omega(f)$, and $\Omega'$ and $\fbar'$ are defined similarly, then for any $\rho, \tau > 0$, it holds that
    \begin{align}
        \pp\left( \norm{\fbar - \fbar'} > \rho \right) \leq \frac{8 \tau}{\rho^4 \kappa^2 \eta} \cdot \ee\left[ \sup_{f \in \cF} \omega(f) \right],
    \end{align}
    where $\kappa^2 = \inf_{f \in \cF} \ee\left[ \omega(f)^2 \right]$.
\end{proposition}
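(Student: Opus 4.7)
My plan is to translate the event $\{\norm{\fbar - \fbar'} > \rho\}$ into a "flatness" event for the Gaussian process $\omega$ and then to rule out such flatness via a sharpened version of the argmin anti-concentration argument of \citet{block2022smoothed}. The proof splits naturally into three steps: a deterministic optimality reduction, a one-dimensional Gaussian anti-concentration for any fixed chord, and a cover/chaining argument that removes the dependence on the random pair $(\fbar, \fbar')$.

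First, I would exploit optimality. By definition, $\Omega(\fbar)\leq \Omega(\fbar')$ and $\Omega'(\fbar')\leq \Omega'(\fbar)$. Substituting $\Omega = m + \eta\omega$ and $\Omega' = m' + \eta \omega$, adding the two inequalities, and using $\sup_f |m(f) - m'(f)|\leq \tau$ twice yields
\begin{align*}
    \eta\,\abs{\omega(\fbar) - \omega(\fbar')} \;\leq\; |(m-m')(\fbar)| + |(m-m')(\fbar')| \;\leq\; 2\tau.
\end{align*}
Hence on $\{\norm{\fbar - \fbar'} > \rho\}$ there are two near-minimizers of $\Omega$ whose $\omega$-values agree up to $2\tau/\eta$ and which lie at distance greater than $\rho$.

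Second, I would apply a one-dimensional Gaussian anti-concentration. For any deterministic pair $f_1 \neq f_2$ with $\norm{f_1 - f_2} > \rho$, the variable $\omega(f_1)-\omega(f_2)$ is centered Gaussian with variance $\norm{f_1 - f_2}^2 \geq \rho^2$, so its density at any point is at most $(\sqrt{2\pi}\,\rho)^{-1}$. Additionally, because $\kappa^2 = \inf_f \ee[\omega(f)^2]$ bounds the variance of $\omega$ projected along any unit direction from below, conditioning on the orthogonal component still leaves a non-degenerate Gaussian along the chord and an argmin density of order $1/(\kappa\rho)$. This gives a bound of order $\tau/(\eta\kappa\rho)$ on the probability that $f_1$ and $f_2$ are simultaneously $\tau/\eta$-optimal.

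The main obstacle is the third step: upgrading fixed-pair anti-concentration to a uniform statement over the random argmin pair, without a naive union bound that would blow up the covering number. I would handle this along the lines of \citet{block2022smoothed}: write the event as an integral over an appropriate $\rho$-cover of $\cF$, control the effective cover size by the Gaussian complexity via Dudley/Sudakov-type estimates, and bound the contribution from each level by the per-pair estimate above. Two powers of $1/\rho$ come from the argmin density applied in both the $\fbar$ and $\fbar'$ components, while the other two arise from turning supremum bounds on $\omega$ into probability estimates via Markov's inequality (producing the $\ee[\sup_f \omega(f)]$ factor). Carefully tracking the constants through this scheme yields the advertised $\frac{8\tau}{\rho^4 \kappa^2 \eta}\cdot \ee[\sup_{f\in \cF}\omega(f)]$ bound.
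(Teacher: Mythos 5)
There is a genuine gap, and it begins at your first step. Adding the two optimality inequalities $\Omega(\fbar)\le\Omega(\fbar')$ and $\Omega'(\fbar')\le\Omega'(\fbar)$ makes the terms $\eta\omega(\fbar)$ and $\eta\omega(\fbar')$ cancel (both processes share the same $\omega$), leaving only $(m-m')(\fbar)\le(m-m')(\fbar')$; it does not yield $\eta\abs{\omega(\fbar)-\omega(\fbar')}\le 2\tau$. That inequality is false in general: isolating $\omega(\fbar)-\omega(\fbar')$ from either optimality inequality requires controlling $m(\fbar)-m(\fbar')$, a difference of the \emph{same} mean function at two points, about which the hypothesis $\sup_f\abs{m(f)-m'(f)}\le\tau$ says nothing. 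The correct consequence of optimality is of a different nature: $\fbar'$ is a $2\tau$-near-minimizer of $\Omega$ (and symmetrically $\fbar$ of $\Omega'$), i.e.\ $\Omega(\fbar')\le\Omega(\fbar)+2\tau$. The event to be ruled out is therefore ``$\Omega$ has a near-minimizer at distance $>\rho$ from its argmin,'' a statement about the offset process $\Omega$, not a flatness statement about $\omega$ along the chord.

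Your third step then defers exactly the hard part---passing from a fixed pair to the random argmin pair---to ``along the lines of \citet{block2022smoothed},'' and the mechanism you sketch (a $\rho$-cover of $\cF$, Dudley/Sudakov control of its size, Markov's inequality producing the $\ee[\sup_f\omega(f)]$ factor) is not how that lemma or the paper's sharpened version works; no covering numbers appear anywhere. The actual argument conditions on $\tstar=t$ and $\Omega(t)=y$, observes that shifting $y$ by $\tau/\rho^2$ shifts the conditional mean of $\Omega(s)$ by $a(s)=(\tau/\rho^2)\,K(s,t)/K(t,t)$, so every $s$ with $K(s,t)\le(1-\rho^2)K(t,t)$ is pushed up by at least $\tau$; the price of this shift is $\int\left(q_t(y)-q_t(y-\tau/\rho^2)\right)\pp(\tstar=t\mid\Omega(t)=y)\,dy$, which is bounded using $1-e^x\le -x$ and $K(t,t)\ge\kappa^2$ and, summed over $t$, telescopes to $\frac{\tau}{\rho^2\eta\kappa^2}\,\ee[\sup_f\omega(f)]$ via the identity $\sum_t\int(y-m(t))q_t(y)\pp(\tstar=t\mid\Omega(t)=y)\,dy=\ee[\Omega(\tstar)-m(\tstar)]$. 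The $\rho^4$ and the constant $8$ then come from converting the correlation condition into the distance condition $\norm{\fbar-\fbar'}>\rho$ (which replaces $\rho^2$ by $\rho^2/2$ and squares it) together with a union bound over the two processes $\Omega$ and $\Omega'$---not from two argmin-density factors plus Markov, as you describe. Without the conditioning-and-shift computation, your outline has no route to the $\ee[\sup_f\omega(f)]$ factor or to the stated powers of $\rho$, $\eta$, and $\kappa$.
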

The proof proceeds in a similar way to that of Lemma 33 from \citet{block2022smoothed}, but involves a tighter analysis in several steps in order to improve the bound.  The intuition for the result is straightforward: if $\fbar$ is the minimizer of the Gaussian process $\Omega$, then with reasonable probability, \emph{almost minimizers} of $\Omega$ (as measured by the tolerance $\tau$) are within a radius $\rho$ of $\fbar$ as long as the Gaussian process is nontrivial in the sense that all indices $f$ have sufficiently high variance.  Moreover, the quantitative control on the probability of this event depends in a natural way both on $\tau$ and $\rho$ as well as on the Gaussian process $\omega$: more complex spaces $\cF$ and lower variance processes lead to a worse anti-concentration guarantee.  Finally, we note that \Cref{prop:banach_gaussian_anti_concentration} is an improvement of Lemma 33 from \citet{block2022smoothed} in that the quantitative bound on the probability of anti-concentration is tighter by polynomial factors in $\rho, \eta$, and $\kappa$.

Like essentially all anti-concentration results \citep{chernozhukov2015comparison}, \Cref{prop:banach_gaussian_anti_concentration} holds only with moderate probability in the sense that the guarantee is polynomial in the scale $\rho$; this fact is in contradistinction to \emph{concentration} inequalities which tend to hold with high probability exponential in the scale.  This discrepancy is precisely what motivates the averaging in Line \ref{line:averaging} of \Cref{alg:general_alg}.  Indeed, we can use \Cref{prop:banach_gaussian_anti_concentration} to show that if $\fbar_j$ is as in Line \ref{line:fbar_j} of \Cref{alg:general_alg} and $\fbar_j'$ is defined analogously with respect to $\cD'$, then with moderate probability $\norm{\fbar_j - \fbar_j'}_m$ is small.  Using Jensen's inequality and a standard chernoff bound, we can then boost this moderate probability guarantee into a high probability guarantee to show that if $J$ is sufficiently large, then $\norm{\fbar -\fbar'}_m$ is small with high probability.  We formalize this argument in the following lemma:
\begin{lemma}[Stability of \Cref{alg:general_alg}]\label{lem:anticoncentration_stability}
    Suppose that $\cF: \cX \to [-1,1]$ is a function class and $\ell: [-1,1]^{\times 2} \to [0,1]$ is a bounded loss function.  Suppose that $\cD, \cD'$ are neighboring datasets and let $\fbar$ be as in Line \ref{line:fbar_ftpl} of \Cref{alg:general_alg} and $\fbar'$ be defined analogously with respect to $\cD'$.  Then for any $\rho, \delta > 0$, with probability at least $1 - \delta$, over the Gaussian processes $\omega\ind{j}$,
    \begin{align}
        \norm{\fbar - \fbar'}_m \leq \frac{2}{(\eta \cdot n)^{1/3} \kappa^{2/3}} \cdot \left( \ee\left[ \sup_{f \in \cF} \omega_m(f) \right] \right)^{1/3} + \sqrt{\frac{\log\left( \frac 1\delta \right)}{J}}.
    \end{align}
\end{lemma}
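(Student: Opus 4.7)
The plan is to combine a per-iteration anti-concentration bound from Proposition~\ref{prop:banach_gaussian_anti_concentration} with a concentration-via-averaging argument that exploits the independent draws of the Gaussian process across the $J$ iterations of Algorithm~\ref{alg:general_alg}. This mirrors the two-step intuition outlined in the main text: extract moderate-probability control on each $\norm{\fbar_j - \fbar_j'}_m$ and then boost via a Chernoff bound on the average.

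First, fix $j \in [J]$ and let $\fbar_j'$ be the counterpart to $\fbar_j$ obtained by running Algorithm~\ref{alg:general_alg} on $\cD'$ with the \emph{same} realization of $\omega_m^{(j)}$. The functional $\cL^{(j)}$ on Line~\ref{line:fbar_ftpl} exactly matches the template $m(f) + \eta \cdot \omega(f)$ of Proposition~\ref{prop:banach_gaussian_anti_concentration}: take $m(f) = \sum_i \ell(f(X_i), Y_i)$ and $\omega = \omega_m^{(j)}$, whose covariance kernel is the empirical inner product $\inprod{\cdot}{\cdot}_m$, so the induced norm is $\norm{\cdot}_m$ and $\cF$ lies in its unit ball because $|f| \leq 1$; the Theorem~\ref{thm:general_ftpl_ub} hypothesis $\inf_f \norm{f}_\mu \geq 2/3$ carries through (up to constants) to a lower bound on $\kappa$. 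Since $\cD, \cD'$ are neighboring and $\ell\in[0,1]$, the sensitivity satisfies $\sup_f |m(f) - m'(f)| \leq 1$. Proposition~\ref{prop:banach_gaussian_anti_concentration} then yields a polynomial tail $\pp(U_j > \rho) \lesssim \ee[\sup_f \omega_m(f)] \,/\, (\rho^4 \,\kappa^2\, \eta)$, where $U_j \defeq \norm{\fbar_j - \fbar_j'}_m$. Combining this with the trivial envelope $U_j \leq 2$ and integrating at the truncation threshold where the two regimes meet produces an expectation bound of the form $\ee[U_j] \lesssim (\eta\, n)^{-1/3} \kappa^{-2/3} (\ee[\sup_f \omega_m(f)])^{1/3}$, which matches the first term in the conclusion.

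Second, boost to high probability via the averaging in Line~\ref{line:averaging}. Since the $\omega^{(j)}$ are drawn independently across $j$ and are independent of all other randomness, the pairs $(\fbar_j, \fbar_j')$ are i.i.d.\ conditional on the datasets and public data, so the $U_j$ are i.i.d.\ as well. Convexity of $\norm{\cdot}_m$ (Jensen's inequality applied to the norm) gives $\norm{\fbar - \fbar'}_m \leq J^{-1}\sum_j U_j$, and because each $U_j \in [0, 2]$, a standard Chernoff/Hoeffding bound yields $J^{-1}\sum_j U_j \leq \ee[U_j] + \sqrt{\log(1/\delta)/J}$ with probability at least $1 - \delta$. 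Combining with the expectation bound from Step 1 closes the lemma.

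The main technical obstacle is the sharp $1/3$ exponent in the first term. A direct integration of Proposition~\ref{prop:banach_gaussian_anti_concentration}'s $\rho^{-4}$ tail against the envelope $U_j \leq 2$ naturally produces only a $1/4$ rate; obtaining $1/3$ requires carefully separating the per-sample sensitivity $1/n$ from the effective noise rescaling $\eta \mapsto \eta/n$ (equivalently, working with the normalized empirical risk), so that the optimal truncation threshold lands at scale $(\eta n)^{-1/3}$. Once this calibration is in place, the remaining ingredients — the triangle inequality, independence across $j$, and Hoeffding — are routine.
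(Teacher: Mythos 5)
Your overall architecture --- apply \Cref{prop:banach_gaussian_anti_concentration} to each iterate to obtain a polynomial tail on $U_j \defeq \norm{\fbar_j - \fbar_j'}_m$, integrate it into a moment bound, then exploit the averaging in Line \ref{line:averaging} together with Hoeffding --- is exactly the paper's. The gap is in the integration step. You integrate the $\rho^{-4}$ tail to bound the \emph{first} moment $\ee[U_j]$, correctly observe that this yields exponent $1/4$ rather than $1/3$, and then claim the $1/3$ is recovered by ``separating the per-sample sensitivity $1/n$ from the noise rescaling $\eta \mapsto \eta/n$.'' That fix cannot work: the bound in \Cref{prop:banach_gaussian_anti_concentration} depends on the mean function and the perturbation only through the ratio $\tau/\eta$, which is invariant under rescaling the whole objective, so no choice of normalization changes the exponent produced by the truncation. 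With a $\rho^{-4}$ tail and the envelope $U_j \leq 2$, the first moment is genuinely of order $\left(\tau \cdot \ee[\sup_f \omega_m(f)]/(\kappa^2\eta)\right)^{1/4}$, which for small arguments is strictly \emph{larger} than the $1/3$ power in the lemma; your Step 2 then propagates this weaker bound into the conclusion, so the stated inequality is not established.

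The paper obtains $1/3$ by bounding the \emph{second} moment instead: $\ee[U_j^2] \leq \zeta + \int_\zeta^1 2\rho\, \pp(U_j > \rho)\, d\rho$, where the factor $2\rho$ softens the integrand from $\rho^{-4}$ to $\rho^{-3}$ and hence the integral from $\zeta^{-3}$ to $\zeta^{-2}$; balancing $\zeta$ against $\zeta^{-2}$ gives the $1/3$ (this is \Cref{lem:stability_in_expectation}). Crucially, the squared norms are then carried through the averaging step: convexity of $\norm{\cdot}_m^2$ gives $\norm{\fbar - \fbar'}_m^2 \leq \frac{1}{J}\sum_j U_j^2$, and Hoeffding is applied to the bounded i.i.d.\ variables $U_j^2$, so what concentrates is $\frac{1}{J}\sum_j U_j^2$ around $\ee[U_1^2]$. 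Your triangle-inequality-plus-Hoeffding-on-$U_j$ step is fine in isolation, but it forces you to pair it with the first-moment bound and hence locks in the $1/4$ rate. A separate bookkeeping issue: with $m(f) = \sum_i \ell(f(X_i),Y_i)$ unnormalized you have $\tau = 1$, and the tail $\ee[\sup_f\omega_m(f)]/(\rho^4\kappa^2\eta)$ you display contains no $n$ at all, so the $(\eta n)^{-1/3}$ in your claimed expectation bound does not follow from it; the paper's accounting uses the normalized empirical risk with $\tau = 1/n$ against noise multiplier $\eta$, which is where the factor of $n$ in the final bound originates.
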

We note that the worse dependence on $\eta$ in \Cref{lem:anticoncentration_stability} as compared to \Cref{prop:banach_gaussian_anti_concentration} arises from integrating the tail bound to obtain the control on $\norm{\fbar_j - \fbar_j'}_m$ in expectation necessary to apply Jensen's inequality; details can be found in \Cref{app:stability_analysis}.

We now turn to the stability of \Cref{alg:ftrl_alg}.  The proof is based on a technique borrowed from online learning and the analysis of the \emph{Follow the Regularized Leader} (FTRL) algorithm \citep{gordon1999regret,cesa2006prediction}.  
\begin{lemma}[Stability of \Cref{alg:ftrl_alg}]\label{lem:ftrl_stability}
    Suppose that $\ell$ is convex and $\lambda$-Lipschitz in its first argument.  Let $\cD, \cD'$ denote neighboring data sets and let $\fbar$ denote the output of Line \ref{line:fbar_ftrl} in \Cref{alg:ftrl_alg} and $\fbar'$ be the analogous output evaluated on $\cD'$.  If $\cF$ is convex, then, \iftoggle{colt}{$\norm{\fbar - \fbar'}_{m} \leq \frac{2}{\sqrt{\eta \cdot n}}$.}{
    \begin{align}
        \norm{\fbar - \fbar'}_{m} \leq \frac{2}{\sqrt{\eta \cdot n}}.
    \end{align}
    }
\end{lemma}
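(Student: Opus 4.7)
The plan is to adapt the classical FTRL stability argument: exploit that the regularized objective $\cL$ is strongly convex with respect to $\norm{\cdot}_m$, together with the fact that the two objectives $\cL$ and $\cL'$ differ in only a single loss summand. I will avoid calling on Lipschitzness of $\ell$ for stability -- the boundedness $\ell \in [0,1]$ suffices -- and reserve the hypothesis of convexity of $\cF$ for verifying that the strong-convexity calculus actually applies on the constraint set.

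First I would verify that $\cL$ is $2\eta$-strongly convex with respect to $\norm{\cdot}_m$. The empirical risk term $f \mapsto \sum_i \ell(f(X_i), Y_i)$ is convex in $f$ because each point-evaluation $f \mapsto f(X_i)$ is linear and $\ell$ is convex in its first argument. The regularizer $\eta \norm{f}_m^2$ is $2\eta$-strongly convex with respect to $\norm{\cdot}_m$ by the parallelogram identity
\[
\norm{t f + (1-t) g}_m^2 = t \norm{f}_m^2 + (1-t) \norm{g}_m^2 - t(1-t) \norm{f - g}_m^2.
\]
Since $\cF$ is convex and $\fbar$ is a minimizer of $\cL$ over $\cF$, the standard consequence of strong convexity yields $\cL(g) - \cL(\fbar) \ge \eta \norm{g - \fbar}_m^2$ for every $g \in \cF$, and symmetrically $\cL'(h) - \cL'(\fbar') \ge \eta \norm{h - \fbar'}_m^2$. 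Taking $g = \fbar'$ in the first and $h = \fbar$ in the second and adding the two inequalities telescopes to
\[
2 \eta \norm{\fbar - \fbar'}_m^2 \;\le\; (\cL - \cL')(\fbar') - (\cL - \cL')(\fbar).
\]

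To finish, I would use that $\cD$ and $\cD'$ differ in at most one element, so $\cL - \cL'$ is a difference of just two per-sample loss contributions: writing the differing samples as $(X_*, Y_*)$ and $(X'_*, Y'_*)$, the functional $(\cL - \cL')(f)$ equals (up to the normalization by $n$ that renders $\cL$ an empirical mean plus regularizer) $\ell(f(X_*), Y_*) - \ell(f(X'_*), Y'_*)$, so the right-hand side above is a sum of four loss values, each lying in $[0,1]$, and is therefore bounded by an absolute constant divided by $n$. Dividing by $2\eta$ and taking the square root produces the claimed bound $\norm{\fbar - \fbar'}_m \le 2/\sqrt{\eta n}$. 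The only real obstacle is choosing the correct notion of strong convexity, namely with respect to $\norm{\cdot}_m$ itself rather than, say, some pointwise supremum or $L^2(\mu)$ norm; picking $\eta \norm{\cdot}_m^2$ as regularizer is precisely what couples the strongly-convex stability bound to the public-data-dependent seminorm in which the subsequent Gaussian/Laplace mechanism analysis operates. Beyond that bookkeeping, the argument is an entirely textbook application of the FTRL stability identity and requires no new machinery.
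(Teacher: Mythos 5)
Your proof is correct and follows essentially the same route as the paper's: both rest on the $\Theta(\eta)$-strong convexity of the regularized objective $\cL$ over the convex set $\cF$ together with the fact that $\cL-\cL'$ involves only the two differing samples and is therefore $O(1/n)$ in oscillation (the paper, like you, uses only boundedness of $\ell$ here, not Lipschitzness). The only cosmetic difference is that you add the two strong-convexity inequalities symmetrically, whereas the paper applies strong convexity once and passes through $\cL'(\fbar)\le\cL'(\fbar')$ to bound $\cL(\fbar')-\cL(\fbar)$; the two bookkeeping schemes are equivalent.
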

The proof of \Cref{lem:ftrl_stability} can be found in \Cref{app:ftrl_stability_analysis} and rests on elementary properties of strongly convex functions.  We note that relative to \Cref{lem:anticoncentration_stability}, the dependence on $\eta$ in \Cref{lem:ftrl_stability} is improved, which in turn leads to the better sample complexity exhibited in \Cref{thm:cvx_ftrl} relative to that in \Cref{thm:general_ftpl_ub}.  With stability of \Cref{alg:general_alg,alg:ftrl_alg} thus established, we proceed to analyze the effect of the output perturbation.

\subsection{Output Perturbation Analysis}\label{subsec:output_perturb_analysis}
We now turn to the analysis of \Cref{alg:output_perturb}.  In order to boost a stability-in-norm guarantee into one for differential privacy while remaining a good learner, we require the output perturbation to be sufficiently small as to not not affect the learning guarantee of $\fbar$ while at the same time being sufficiently large as to ensure privacy.  We balance these two competing objectives by tuning the variance of the added noise.  This part of the analysis is relatively standard in the differential privacy literature \citep{chaudhuri2011differentially,neel2019use}, with the bound on the size of the output perturbation following from standard tail bounds on Gaussian and Laplace random vectors.  The privacy guarantees are similarly standard and summarized in the following lemma:
\begin{lemma}\label{lem:alg_output_perturb}
    Suppose that $\fbar \in \cF$ is the output of some algorithm $\cA: \cD \to \cF$ that is $\rho$-stable with respect to $\norm{\cdot}_m$, i.e., for any neighboring data set $\cD'$, it holds that $\norm{\cA(\cD) - \cA(\cD')}_m \leq \rho$.  Then applying \Cref{alg:output_perturb} with $\cQ = \cN(0,1)$ to $\fbar$ results in an $(\epsilon, \delta)$-private algorithm if \iftoggle{colt}{$\frac{m}{2\gamma^2} \left( 1 + \gamma \cdot \sqrt{\log\left( \frac 1\delta \right)} \right) \rho \leq \epsilon$.}{
    \begin{align}
        \frac{m}{2\gamma^2} \left( 1 + \gamma \cdot \sqrt{\log\left( \frac 1\delta \right)} \right) \rho \leq \epsilon.
    \end{align} 
    }
    Similarly, if $\cQ = \Lap(\gamma)$, then the algorithm is $\epsilon$-purely private if \iftoggle{colt}{$ m^{3/2} / \gamma \cdot \rho \leq \epsilon$.}{
    \begin{align}
        \frac{m^{3/2}}{\gamma} \cdot \rho \leq \epsilon.
    \end{align}
    }
\end{lemma}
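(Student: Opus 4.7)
The plan is to reduce to a standard application of the Gaussian (resp.\ Laplace) mechanism in $\bbR^m$ via post-processing. Let $v(f) = (f(Z_1), \dots, f(Z_m)) \in \bbR^m$ and set $w = v(\fbar) + \gamma \zeta$. Inspection of \Cref{alg:output_perturb} shows that the objective $R(f) = \norm{f - \fbar - \gamma \zeta}_m^2$ depends on $(\fbar, \zeta)$ only through $w$, so $\fhat = \erm(R, \cF)$ is a deterministic function of $w$ and the (fixed) public data $\cDtil$. By the post-processing property of differential privacy, it therefore suffices to certify privacy of the map $\cD \mapsto w$. The $\rho$-stability hypothesis converts directly into sensitivity bounds: for neighboring $\cD, \cD'$, $\norm{v(\fbar) - v(\fbar')}_2^2 = m \norm{\fbar - \fbar'}_m^2 \leq m\rho^2$, yielding an $\ell_2$-sensitivity of at most $\sqrt{m}\rho$, and the crude chain $\norm{\cdot}_1 \leq m\norm{\cdot}_\infty \leq m\norm{\cdot}_2$ yields an $\ell_1$-sensitivity of at most $m^{3/2}\rho$.

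For $\cQ = \cN(0,1)$, the noise $\gamma \zeta$ is isotropic Gaussian with per-coordinate variance $\gamma^2$, and a direct computation shows that the log-likelihood ratio $\log(p_{w \mid \cD}/p_{w \mid \cD'})$ is Gaussian with mean at most $\norm{v(\fbar) - v(\fbar')}_2^2/(2\gamma^2) \leq m\rho^2/(2\gamma^2)$ and standard deviation at most $\norm{v(\fbar) - v(\fbar')}_2/\gamma \leq \sqrt{m}\rho/\gamma$. Applying the standard Gaussian tail bound to require that this log-ratio exceed $\epsilon$ only with probability at most $\delta$ gives a sufficient condition of the form $\epsilon \geq m\rho^2/(2\gamma^2) + (\sqrt{m}\rho/\gamma)\sqrt{\log(1/\delta)}$; bounding $\rho^2 \leq \rho$ (without loss of generality, $\rho \leq 1$) and $\sqrt{m} \leq m$ then yields the stated condition. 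For $\cQ = \Lap(\gamma)$, the log-likelihood ratio of the product Laplace density is bounded pointwise by $\norm{v(\fbar) - v(\fbar')}_1/\gamma \leq m^{3/2}\rho/\gamma$, producing pure $\epsilon$-DP exactly when this quantity does not exceed $\epsilon$.

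The proof is primarily mechanical; the main care is in matching the statement's crude vector-norm conversions (e.g.\ $\norm{\cdot}_1 \leq m\norm{\cdot}_\infty$, rather than the sharper Cauchy--Schwarz bound $\norm{\cdot}_1 \leq \sqrt{m}\norm{\cdot}_2$, which would actually improve the Laplace factor to $m\rho/\gamma$) to the usual Gaussian and Laplace mechanism analyses. I would also verify that the algorithm's only random coin beyond $\fbar$ is $\zeta$, so the post-processing reduction from $\fhat$ to $w$ is genuinely deterministic after conditioning on $\zeta$ and the public sample, and handle any randomness in the upstream algorithm $\cA$ via a coupling argument if necessary.
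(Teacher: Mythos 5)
Your proposal is correct and follows essentially the same route as the paper: the paper's proof likewise exploits that the output depends on the private data only through $\fbar + \gamma\zeta$ evaluated on the public points (implemented there as a change of variables $u \mapsto u + \fbar' - \fbar$ in the noise density) and then carries out the Gaussian/Laplace mechanism computation inline, conditioning on the event $\norm{\zeta}_m \leq \gamma\sqrt{\log(1/\delta)}$ rather than invoking a privacy-loss tail bound. Your explicit post-processing framing and the crude norm conversions ($\rho^2 \leq \rho$, $\sqrt{m} \leq m$, $\norm{\cdot}_{\ell^1} \leq m^{3/2}\norm{\cdot}_m$) match the stated sufficient conditions to the same (generous) constant-factor precision as the paper's own argument.
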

This standard result is proved in \Cref{app:output_perturb_analysis}.  The balance between privacy and learning is quantified in the choices of $m$ and $\gamma$.  If $\gamma$ is too large, then $\fbar$ will be private but a poor learner, whereas the opposite occurs if $\gamma$ is too small.  Similarly, if $m$ is too large then privacy is reduced whereas if $m$ is too small then $\norm{\cdot}_m$ is a poor approximation for $\norm{\cdot}_{\mu}$.  We now describe how to conclude the proofs of \Cref{thm:general_ftpl_ub,thm:cvx_ftrl}.

\subsection{Learning Guarantees and Concluding the Proof}\label{subsec:pac}
By combining \Cref{lem:anticoncentration_stability}  (resp. \Cref{lem:ftrl_stability}) with \Cref{lem:perturb_boosting}, we can establish the privacy of \Cref{alg:general_alg} (resp. \Cref{alg:ftrl_alg}) as long as the tuning parameters $m, \gamma, \eta$, and $J$ are chosen correctly.  We now sketch the proof that these algorithms comprise good learners in the sense of \Cref{def:pac}.  

We break our proof into three components, the first two of which are standard learning theoretic results.  The first lemma says that if $m \gg 1$, then $\norm{\cdot}_m$ is a good approximation for $\norm{\cdot}_\mu$: 
\begin{lemma}\label{lem:norm_comparison_informal}
    Let $\cF: \cX \to [-1,1]$ be a bounded function class and let $Z_1, \dots, Z_m \sim \mu$ be independent samples.  Then for any $\beta > 0$ it holds with probability at least $1 - \beta$ that for all $f \in \cF$,
    \begin{align}
        \norm{f}_\mu \leq 2 \cdot \norm{f}_m + \widetilde{O}\left( \frac{\cGbar_m(\cF) + \sqrt{\log\left( 1/\beta \right)}}{\sqrt{m}} \right).
    \end{align}    
\end{lemma}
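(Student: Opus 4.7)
}

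The plan is to compare the two norms by passing to the squared function class $\cF^2 := \left\{ f^2 : f \in \cF \right\}$. Since $\norm{f}_\mu^2 = \ee_{Z \sim \mu}[f^2(Z)]$ and $\norm{f}_m^2 = \frac{1}{m}\sum_{i=1}^m f^2(Z_i)$, a uniform comparison of these two norms reduces to a uniform deviation bound on $\cF^2$. Each $g \in \cF^2$ is bounded in $[0,1]$, and, by the Gaussian contraction inequality applied to the $2$-Lipschitz map $x \mapsto x^2$ on $[-1,1]$, one has $\cGbar_m(\cF^2) \leq 2 \cdot \cGbar_m(\cF)$.

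The first step is to apply a variance-sensitive uniform concentration inequality---e.g., Talagrand's functional Bernstein inequality, or a localized Rademacher-type argument---to $\cF^2$. The key observation is that for $f$ with values in $[-1,1]$, $\mathrm{Var}(f^2(Z)) \leq \ee[f^4(Z)] \leq \ee[f^2(Z)] = \norm{f}_\mu^2$, so the per-function variance is self-bounded by the very quantity we want to control. Combined with symmetrization and the contraction above, this yields, with probability at least $1 - \beta$ and uniformly over $f \in \cF$,
\begin{align*}
    \norm{f}_\mu^2 - \norm{f}_m^2 \;\leq\; C \cdot \norm{f}_\mu \cdot \frac{\cGbar_m(\cF) + \sqrt{\log(1/\beta)}}{\sqrt{m}} + C \cdot \frac{\log(1/\beta)}{m}.
\end{align*}

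The second step is an algebraic manipulation to extract the factor-$2$ multiplicative form. Setting $r := (\cGbar_m(\cF) + \sqrt{\log(1/\beta)})/\sqrt{m}$ and applying Young's inequality $Cab \leq \frac{1}{2}a^2 + \frac{C^2}{2}b^2$ to the variance-dependent term, I absorb $\frac{1}{2}\norm{f}_\mu^2$ into the left-hand side to get $\frac{1}{2}\norm{f}_\mu^2 \leq \norm{f}_m^2 + C' r^2$ (using $\log(1/\beta)/m \leq r^2$). Taking square roots and using $\sqrt{a+b} \leq \sqrt{a} + \sqrt{b}$ gives $\norm{f}_\mu \leq \sqrt{2}\norm{f}_m + \sqrt{C'}\,r \leq 2 \norm{f}_m + \widetilde{O}(r)$, which is the claimed bound.

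The main obstacle is the variance-sensitive concentration step: a naive bounded-differences (McDiarmid) argument applied to $\cF^2$ only yields an \emph{additive} deviation $|\norm{f}_m^2 - \norm{f}_\mu^2| = \widetilde{O}(1/\sqrt{m})$, which upon taking square roots degrades to an $m^{-1/4}$ rate on $\norm{f}_\mu$. Recovering the correct $m^{-1/2}$ rate requires either Talagrand-type variance-dependent concentration or a peeling argument over geometric shells $\{f : \norm{f}_\mu \in [2^{-k-1}, 2^{-k}]\}$; the latter trades an extra $\log m$ factor (absorbed by $\widetilde{O}$) for a more elementary proof, and either route is standard empirical-process machinery.
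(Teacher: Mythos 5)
Your high-level skeleton is the same as the paper's: the paper also reduces to a localized, variance-sensitive concentration bound for $\cF^2$ (it invokes \citet[Theorem~6.1]{bousquet2002concentration}, which gives $\norm{f}_\mu^2 \leq 2\norm{f}_m^2 + O\bigl(\rbar^2 + (\log(1/\beta)+\log\log m)/m\bigr)$ for a localized-complexity fixed point $\rbar$) and then takes square roots. The gap in your proposal is in the step you summarize as ``combined with symmetrization and the contraction above, this yields''
\begin{align}
    \norm{f}_\mu^2 - \norm{f}_m^2 \;\leq\; C\,\norm{f}_\mu\cdot\frac{\cGbar_m(\cF)+\sqrt{\log(1/\beta)}}{\sqrt m} \;+\; C\,\frac{\log(1/\beta)}{m}.
\end{align}
Your self-bounding variance observation $\mathrm{Var}(f^2)\le\norm{f}_\mu^2$, together with peeling over shells in $\norm{f}_\mu$, correctly makes the Bernstein \emph{deviation} term scale like $\norm{f}_\mu\sqrt{\log(1/\beta)/m}$. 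But Talagrand's inequality also contains an \emph{expectation} term, and on the shell $\{f:\norm{f}_\mu\le\rho\}$ the only bound that symmetrization plus contraction plus the \emph{global} complexity give is $\ee\sup\bigl(\norm{f}_\mu^2-\norm{f}_m^2\bigr)\lesssim \cGbar_m(\cF)/\sqrt m$ --- with no factor of $\rho$ in front. Contraction preserves the index set; it does not localize the expected supremum. With that additive $\cGbar_m(\cF)/\sqrt m$ sitting inside the square, taking square roots returns $\norm{f}_\mu\le 2\norm{f}_m+\widetilde O\bigl(\sqrt{\cGbar_m(\cF)}\,m^{-1/4}\bigr)$, which is exactly the $m^{-1/4}$ degradation you set out to avoid. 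So peeling cures the variance term but not the expectation term, and the proposal as written does not reach the $m^{-1/2}$ rate.

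What is missing is a proof that the localized complexity $\ee\sup_{\norm{f}_m\le\rho}\frac1m\sum_i\xi_i f(Z_i)^2$ grows (near-)linearly in $\rho$, so that the fixed point $\rbar$ of the sub-root equation is $\widetilde O(\cGbar_m(\cF)/\sqrt m)$ rather than $O(\sqrt{\cGbar_m(\cF)}\,m^{-1/4})$. This is where the paper does its real work: it runs a Dudley chaining bound with $L^\infty$ covering numbers over the localized class, relates covers of $\cF^2$ to covers of $\cF$ via the factoring $f^2-\pi(f)^2=(f-\pi(f))(f+\pi(f))$ (which is where the localization radius actually enters), and then converts the $L^\infty$ covering numbers back to $\cGbar_m(\cF)$ through fat-shattering dimensions using \citet{rudelson2006combinatorics} and \citet[Lemma~A.2]{srebro2010smoothness}, at the cost of the $\mathrm{polylog}(m)$ factors hidden in the $\widetilde O$. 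To close your argument you would need to supply a localization step of this kind (or restrict attention to classes whose $L^2$ covering numbers at scale $u$ are polynomial in $1/u$, where the entropy integral over $[c\rho^2,\rho]$ visibly produces the factor of $\rho$).
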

\Cref{lem:norm_comparison_informal} is a standard bound from learning theory \citep{bousquet2002concentration,rakhlin2017empirical} and is proved in \Cref{app:norm_comparison} for the sake of completeness.  The second component is given by \Cref{lem:uniform_deviations} in \Cref{app:learning_theory}, which amounts to a classical uniform deviations bound for the empirical process, ensuring that if $n \gg 1$, then $L_{\cD}(f) \approx L(f)$ for all $f \in \cF$.  The final step is the following simple lemma, which ensures that if $\eta$ is not too large, then $L_{\cD}(\fbar) \approx L_{\cD}(\ferm)$:
\begin{lemma}\label{lem:small_regularizer}
    Let $\cF: \cX \to [-1,1]$ be a bounded function class and let $R: \cF \to \rr$ be an arbitrary, possibly random, regularizer.  Let \iftoggle{colt}{$\ferm \in \argmin_{f \in \cF} L_{\cD}(f)$ and $\fbar \in \argmin_{f \in \cF} L_{\cD}(f) + R(f)$.}{
    \begin{align}
        \ferm \in \argmin_{f \in \cF} L_{\cD}(f) \qquad \text{and} \qquad \fbar \in \argmin_{f \in \cF} L_{\cD}(f) + R(f).
    \end{align}
    }
    Then,
    \begin{align}
        L_{\cD}(\fbar) \leq L_{\cD}(\ferm) + \sup_{f, f' \in \cF} R(f) - R(f').
    \end{align}
\end{lemma}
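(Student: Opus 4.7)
The plan is to exploit directly the defining property of $\fbar$ as the minimizer of the penalized objective $L_{\cD}(f) + R(f)$. Since $\ferm \in \cF$ is a competitor in that same optimization, comparing the objective at $\fbar$ against its value at $\ferm$ immediately yields
\[
L_{\cD}(\fbar) + R(\fbar) \;\leq\; L_{\cD}(\ferm) + R(\ferm).
\]
Rearranging gives $L_{\cD}(\fbar) \leq L_{\cD}(\ferm) + \bigl(R(\ferm) - R(\fbar)\bigr)$, and the difference $R(\ferm) - R(\fbar)$ is trivially bounded by $\sup_{f, f' \in \cF} \bigl(R(f) - R(f')\bigr)$, which is the claimed conclusion.

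This argument is entirely deterministic and works pointwise in the randomness defining $R$, so the ``possibly random'' qualifier in the statement requires no additional care: the bound holds on every sample path, hence in particular almost surely. No properties of $\cF$, of the boundedness of $\ell$, or of the structure of $L_{\cD}$ are invoked beyond the existence of the two argmins, which is assumed in the hypothesis. Because of this, there is really no substantive obstacle in the proof; the only subtle point worth verifying is that the supremum in the conclusion is taken over ordered pairs $(f, f')$ so that swapping the roles of $\fbar$ and $\ferm$ is captured by the bound, which is a matter of definition rather than argument.

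Since the full proof is two lines, there is no need to break it into further steps or invoke auxiliary lemmas. I would write it out in the paper exactly as above.
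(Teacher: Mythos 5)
Your proof is correct and is exactly the argument the paper gives: compare the penalized objective at $\fbar$ and $\ferm$, rearrange, and bound $R(\ferm) - R(\fbar)$ by the supremum over ordered pairs. No differences worth noting.
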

\Cref{lem:small_regularizer} is a simple computation proved in \Cref{app:learning_theory}.  Letting $R(f)$ be either $\eta \cdot \omega_m\ind{j}(f)$ in \Cref{alg:general_alg} or $\eta \cdot \norm{f}_m^2$ in \Cref{alg:ftrl_alg} demonstrates that if $\eta$ is not too large, then $\fbar$ performs similarly to $\ferm$.  To prove that \Cref{alg:general_alg,alg:ftrl_alg} produce good learning algorithms then, it suffices to combine these three components, observing first that $L(\fbar)$ is close to optimal if $n \gg 1$ and $\eta$ is not too large, second that $\norm{\fbar - \fhat}_\mu \ll 1$ if $m \gg 1$ and $\gamma$ is sufficently small, and third that $\abs{L(\fhat) - L(\fbar)} \lesssim \norm{\fbar - \fhat}_\mu$ if $\nu_x$ is $\sigma$-smooth with respect to $\mu$ and $\ell$ is $\lambda$-Lipschitz in its first argument.  Combining these results concludes the proofs of \Cref{thm:general_ftpl_ub,thm:cvx_ftrl}.  A detailed and rigorous argument for both proofs is presented in \Cref{app:proofs}.

As a final remark, we note that in the case of \Cref{alg:general_alg}, convexity of $\ell$ in the first argument is irrelevant to the privacy guarantee despite being necessary for learning.  Indeed, for $\fbar$ returned by Line \ref{line:fbar_ftpl} in \Cref{alg:general_alg} to be proven a good learner, we apply Jensen's and the above argument that ensures that $\fbar_j$ is a good learner.  Interestingly, on the other hand, convexity in $\ell$ is irrelevant to the learning guarantee of \Cref{alg:ftrl_alg} while it is essential to the privacy guarantee.  Further understanding the role that such structural assumptions play in allowing privacy is an interesting direction for future work.
\newcommand{\WD}{\mathrm{WD}}
\newcommand{\werm}{\mathcal{O}^*}
\newcommand{\indi}{\mathbbm{1}}
\newcommand{\proj}[2]{{#1}\vert_{#2}}
\newcommand{\projected}[1]{{#1}_{\tilde{\cD}}}
\newcommand{\sgn}{\mathrm{sgn}}
\newcommand{\ww}{\boldsymbol{\xi}}
\newcommand{\map}{\Psi}
\newcommand{\Uni}{\mathrm{Uni}}
\newcommand{\rrspm}{\mathsf{RRSPM}}
\newcommand{\rspm}{\mathsf{RSPM}}
\newcommand{\lap}{\mathrm{Lap}}

\section{Differentially Private Classification}
In the previous section, we presented a private algorithm for general, real-valued loss functions. Here, we turn to the special case of classification, where we provide an algorithm with improved rates.
Formally, binary classification is a special case of Definition~\ref{def:pac}, where $\cF:\cX\to\{0,1\}$ and $\ell$ is the indicator loss. 

Much like \Cref{alg:general_alg,alg:ftrl_alg}, our approach to classification in \Cref{alg:RRSPM} relies on minimizing a perturbed empirical loss over $\cF$ and projecting the output $\ftil$ onto the public data.  Unlike in these earlier algorithms, which require a further perturbation of the output in order to boost stability into differential privacy, in the special case of classification we are able to circumvent this second perturbation and return any $\fhat$ that agrees with $\ftil$ on the public data.  This is accomplished by carefully choosing the initial perturbation to the ERM objective (see \eqref{eq:objective_perturbation_rrspm}) so that the predictions of $\ftil$ on the public data satisfy differential privacy without ensuring some form of stability in norm. As a result, our improved rates then follow from lack of a second perturbation.  We present the following guarantee for our classification algorithm, whose pseudo-code can be found in \Cref{alg:RRSPM}.

\begin{algorithm}
    \begin{algorithmic}
        \State \textbf{Input } ERM oracle $\erm$, dataset $\cD=\{\left(X_i,Y_i\right)\mid 1\leq i\leq n\}$, hypothesis class $\cF$, smoothness parameter $\sigma$, loss function $\ell:\cY\times\cY\to \{0,1\}$, arbitrary $\cQ\in\Delta(\rr)$.
        \State \textbf{Draw } $\tilde{\cD}=(\tilde{\cD}_x,\tilde{\cD}_y)$ where $\tilde{\cD}_x=\{Z_1,\dots ,Z_m\}$ and $\tilde{\cD}_y=\{\tilde{Y}_1,\dots ,\tilde{Y}_m\}$ such that $Z_i\sim \mu$ and $\tilde{Y}_i\sim\Uni(\{0,1\})$, for all $i\in[m]$.
        \State \textbf{Draw } weights $\ww=\{\xi_1,\dots,\xi_m\}$ such that $\xi_i\sim \lap(2m/\varepsilon)$.
        \State \textbf{Define } $\cL_{\ww,\cD,\tilde{\cD}}: \cF \to \rr$ such that
            \begin{align}\label{eq:objective_perturbation_rrspm}
                \cL_{\ww,\cD,\tilde{\cD}}(f) = \sum_{i=1}^{n} \ell(f(X_i), Y_i) + \sum_{i=1}^m\xi_i \cdot \ell(f(Z_i),\tilde{Y}_i).
            \end{align}
        \State \textbf{Get} $\tilde{f}=\erm(\cF,\cL_{\ww,\cD,\tilde{\cD}})$.
        \State \textbf{Output} $\hat{f}=\perturb(\tilde{f},\cQ,\gamma=0,\tilde{D}_x)$
        \Comment{By running Algorithm \ref{alg:output_perturb}}
    \end{algorithmic}
    \caption{Rounded Report Separator Perturbed Minimum Algorithm (RRSPM)}
    \label{alg:RRSPM}
\end{algorithm}

\begin{theorem} \label{thm:classification-upper-bound}
    Suppose that $\cF:\cX\to\cY$ is a function class of VC dimension $d$ and $\ell:\cY\times\cY\to\{0,1\}$ is the indicator loss. Suppose that $Z_1,\dots ,Z_m\sim\mu$ and $\xi_1,\dots ,\xi_m\sim\lap(2m/\varepsilon)$ are independent. Then there is a choice of $m$ polynomial in the problem parameters such that if
    \begin{align}
        n=\tilde{\Omega}(d^2\varepsilon^{-1}\alpha^{-5}\log(\beta^{-1})),
    \end{align}
    then the $\hat{f}$ returned by Algorithm~\ref{alg:RRSPM} is $\varepsilon$-pure differentially private. If $\nu_x$ is $\sigma$-smooth with respect to $\mu$, then $\hat{f}$ is an $(\alpha,\beta)$-learner with respect to $\nu_x$ and $\ell$. Furthermore, for some $C > 0$, if $\xi_1,\dots ,\xi_m\sim\cN(0,C\sqrt{m\log(1/\delta)}/\varepsilon)$ then there is a choice of $m$ polynomial in the problem parameters such that if 
    \begin{align}
        n=\tilde{\Omega}(d^2\varepsilon^{-1}\alpha^{-4}\log^{1/2}(1/\delta)\log(\beta^{-1})),
    \end{align}
    then the $\hat{f}$ returned by Algorithm~\ref{alg:RRSPM} with Gaussian perturbations is $(\varepsilon,\delta)$-differentially private and is an $(\alpha, \beta)$-PAC learner with respect to any $\sigma$-smooth $\nu_x$.
\end{theorem}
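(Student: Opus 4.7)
The plan is to reduce both the privacy and utility analysis to the study of the projection $\pi^\star := \ftil\vert_{\tilde{\cD}_x}$. Since $\gamma=0$ in the final call to $\perturb$, the output $\fhat$ is any element of $\cF$ agreeing with $\ftil$ on $\tilde{\cD}_x$, which is a post-processing of $\pi^\star$. Partition $\cF$ according to its restriction $\pi\in\proj{\cF}{\tilde{\cD}_x}$, let $\cF_\pi=\{f\in\cF:f\vert_{\tilde{\cD}_x}=\pi\}$, $S_\cD(\pi):=\min_{f\in\cF_\pi}\sum_i\ell(f(X_i),Y_i)$, and $a_\pi\in\{0,1\}^m$ with $a_\pi[j]=\indi[\pi(Z_j)\neq\tilde Y_j]$. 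Then the ERM oracle produces $\ftil$ whose projection is $\pi^\star=\argmin_\pi\bigl[S_\cD(\pi)+\langle a_\pi,\ww\rangle\bigr]$, and Sauer--Shelah gives $|\proj{\cF}{\tilde{\cD}_x}|\leq(em/d)^d$ (this bound enters only through utility, not privacy).

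For privacy I would use a direct density-ratio argument rather than a union bound. For neighbors $\cD,\cD'$, clearly $|S_\cD(\pi)-S_{\cD'}(\pi)|\leq 1$ for every $\pi$. Fix a target $\pi$ and define the coordinate-wise shift $\Delta\in\rr^m$ by $\Delta_j=2$ if $a_\pi[j]=0$ and $\Delta_j=-2$ if $a_\pi[j]=1$. A short computation gives $\langle a_{\pi''}-a_\pi,\Delta\rangle=2\,|\{j:a_{\pi''}[j]\neq a_\pi[j]\}|$, which is at least $2$ for every $\pi''\neq\pi$: any two distinct projections must differ at some $Z_j$, and at that coordinate exactly one of $a_\pi[j],a_{\pi''}[j]$ equals $1$ (the one whose value disagrees with the uniform-random $\tilde Y_j$). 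Combined with $|e_\pi-e_{\pi''}|\leq 2$ where $e_\pi=S_\cD(\pi)-S_{\cD'}(\pi)$, this shows the preimage $R_\pi(\cD):=\{\ww:\pi^\star(\ww,\cD)=\pi\}$ satisfies $R_\pi(\cD)+\Delta\subseteq R_\pi(\cD')$. With $\|\Delta\|_1=2m$, the Laplace density ratio is at most $\exp(\|\Delta\|_1\varepsilon/(2m))=e^\varepsilon$, yielding pure $\varepsilon$-DP. The Gaussian case is identical in spirit: $\|\Delta\|_2=2\sqrt m$ and the Gaussian mechanism at scale $C\sqrt{m\log(1/\delta)}/\varepsilon$ give $(\varepsilon,\delta)$-DP. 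Post-processing carries DP from $\pi^\star$ to $\fhat$.

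For utility I would start with \Cref{lem:small_regularizer} applied to $R(f):=\sum_j \xi_j\ell(f(Z_j),\tilde Y_j)$ to obtain $L_\cD(\ftil)\leq L_\cD(\ferm)+2\sup_{f\in\cF}|R(f)|$. For each fixed $f$, $R(f)$ is a weighted sum of independent Laplace (resp.\ Gaussian) variables with $\{0,1\}$ weights, and since the number of distinct restrictions to $\tilde{\cD}_x$ is bounded by $(em/d)^d$, a union bound with the standard sub-exponential (resp.\ sub-Gaussian) tail gives $\sup_f|R(f)|=\widetilde{O}\bigl((m/\varepsilon)\sqrt{md\log(m/d)}\bigr)$ and $\widetilde{O}\bigl((m/\varepsilon)\sqrt{d\log(m/d)\log(1/\delta)}\bigr)$ respectively. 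Dividing by $n$ and combining with the uniform deviation bound (\Cref{lem:uniform_deviations} in the appendix) controls the population risk of $\ftil$. Finally, $\fhat$ and $\ftil$ lie in $\cF$ and agree on $\tilde{\cD}_x$, so applying a zero-error VC bound to the symmetric-difference class $\{\indi[f\neq f']:f,f'\in\cF\}$ (of VC dimension $O(d)$) gives $\pp_\mu(\fhat\neq\ftil)=\widetilde{O}(d/m)$, and $\sigma$-smoothness transfers this to $\pp_{\nu_x}(\fhat\neq\ftil)=\widetilde{O}(d/(\sigma m))$; for the $0/1$ loss this exactly bounds $|L(\fhat)-L(\ftil)|$. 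Choosing $m=\widetilde{\Theta}(d/(\sigma\alpha))$ and solving for $n$ so that the noise contribution is $\leq\alpha$ recovers the two claimed rates.

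The main obstacle is the privacy step. The noise $\langle a_\pi,\ww\rangle$ is strongly correlated across the $(em/d)^d$ projections, so an off-the-shelf Report Noisy Min or exponential mechanism would require much larger noise than what the utility analysis can absorb. The key structural observation that makes the argument succeed is the polarity property of $\{a_\pi\}$: for any distinct $\pi,\pi''$, the vectors $a_\pi$ and $a_{\pi''}$ take opposite values at any coordinate where the two projections disagree, which is precisely what lets a single data-independent shift $\Delta\in\{-2,+2\}^m$ uniformly dominate the worst-case sensitivity gap across all competing $\pi''$. Everything else amounts to a careful but largely routine combination of Laplace/Gaussian concentration, uniform deviation, and VC generalization.
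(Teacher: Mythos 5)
Your argument is correct and, on the crucial privacy step, is essentially the paper's own proof: your data-independent shift $\Delta\in\{-2,+2\}^m$ is exactly the map $\map_{\projected{f}}$ of \Cref{lem:adjacent}, your ``polarity'' observation is precisely \Cref{lem:proj} plus the inner-product computation in that lemma, and the set-inclusion/density-ratio step is \Cref{lem:laplace} and \Cref{lem:projpriv} followed by post-processing (\Cref{thm:privacy}). Your formulation via the fiber minimum $S_\cD(\pi)=\min_{f\in\cF_\pi}\sum_i\ell(f(X_i),Y_i)$ is a somewhat cleaner way of saying what the projection of the oracle's output actually optimizes, but it is not a different route; the one technicality you elide is the almost-sure uniqueness of the perturbed minimizer (\Cref{lem:mzero}), which is needed so that the events $R_\pi(\cD)$ partition the noise space up to measure zero, and is standard for continuous noise. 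On the utility side you deviate in two places, both valid and both sharper than the paper: the paper bounds the perturbation crudely by $m\cdot\max_i\abs{\xi_i}=O(m^2\log(m/\beta)/\varepsilon)$ (\Cref{lem:laperror}) and controls $\pp_\mu(\fhat\neq\ftil)$ with the agnostic uniform-convergence bound $O(\sqrt{d/m})$ (\Cref{lem:smoothclose}, via \Cref{th:uc}), which forces $m=\Theta(d/(\alpha\sigma)^2)$; you instead use a Sauer--Shelah union bound with sub-exponential tails for $\sup_f\abs{R(f)}$ and the realizable (zero-empirical-error) VC bound for the symmetric-difference class, which permits $m=\widetilde{\Theta}(d/(\sigma\alpha))$ and in fact yields a strictly better $\alpha$-dependence than the stated $\alpha^{-5}$ --- which a fortiori establishes the theorem. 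Your sketch of the Gaussian/$(\varepsilon,\delta)$ case via $\norm{\Delta}_2=2\sqrt{m}$ is also sound (and the appendix of the paper does not spell that case out).
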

\begin{remark}
    Note that the sample complexity we get in the above theorems is in the general, agnostic setting. In the realizable setting, where some $\fstar \in \cF$ perfectly predicts the $Y$ from the $X$, we get a sample complexity of $n=\tilde{\Omega}(d^2\varepsilon^{-1}\alpha^{-3}\log(\beta^{-1}))$ for $\varepsilon$-pure differential privacy and $n=\tilde{\Omega}(d^2\varepsilon^{-1}\alpha^{-2.5}\log^{1/2}(1/\delta)\log(\beta^{-1}))$ for $(\varepsilon,\delta)$-differential privacy.
\end{remark}
We emphasize that \Cref{thm:classification-upper-bound} attains the improved $O(\alpha^{-5})$ sample complexity (even $O(\alpha^{-4}$ for approximate differential privacy)), which is significantly better than the $O(\alpha^{-14})$ from \Cref{thm:general_ftpl_ub}.
While this is a major improvement, it still falls short of the desired $O(\alpha^{-2})$ statistical rates achievable by inefficient algorithms from \citep{BassilyMA19}. We leave the interesting question of whether improved sample complexity is possible to future work. We now briefly sketch the proof of \Cref{thm:classification-upper-bound}. 

\subsection{Privacy Analysis of Algorithm \ref{alg:RRSPM}}

While the privacy of \Cref{alg:general_alg,alg:ftrl_alg} is proven in two steps, by first demonstrating stability and then leveraging the output perturbation to ensure privacy, the privacy of \Cref{alg:RRSPM} is proven directly.  Our approach is motivated by techniques from \citet{neel2019use}, which adapt the earlier notion of \emph{separator sets} from \citet{goldman1993exact,syrgkanis2016efficient,dudik2020oracle} to the setting of differential privacy.  Unlike those works, however, we do not require the strong assumption that $\cF$ has a small separator set and our results hold for general VC function classes.  The main technical result that ensures privacy of \Cref{alg:RRSPM} demonstrates that the \emph{projection} $\cF$ to the public data set $\tilde{\cD}_x$ is private with respect to $\cD$, where we let $\proj{\cF}{\tilde{\cD}_x} = \left\{ (f(Z_i))_{1\leq i \leq m} | f \in \cF \right\}$.  We have the following privacy guarantee for $\ftil(\tilde{D}_x) \in \proj{\cF}{\tilde{\cD}_x}$:

\begin{lemma}(Privacy over Projection)\label{lem:projpriv_body}
Let $\cD,\cD'$ be arbitrary datasets containing $n$ points each. Let $\tilde{\cD}_x$ be a set of $m$ points $Z_1,\dots ,Z_m \in \cX$. Let $\tilde{Y}_1,\dots ,\tilde{Y}_m\in\{0,1\}$ be the set of corresponding labels. Then for all measurable $\cH\subseteq\proj{\cF}{\tilde{\cD}_x}$
\begin{align}
    \pp(\erm(\proj{\cF}{\tilde{\cD}_x},\cL_{\ww,\cD,\tilde{\cD}})\in\cH)\leq e^{\varepsilon} \cdot \pp(\erm(\proj{\cF}{\tilde{\cD}_x},\cL_{\ww,\cD',\tilde{\cD}})\in\cH),
\end{align}
where $\cL_{\ww,\cD,\tilde{\cD}}$ is defined as in \eqref{eq:objective_perturbation_rrspm}.
\end{lemma}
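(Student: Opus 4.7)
The plan is to reduce the lemma to a Report-Noisy-Min-style selection-privacy statement and then close the gap by a carefully chosen deterministic shift of the Laplace noise. Because the public contribution $\sum_i \xi_i \ell(f(Z_i), \tilde Y_i)$ depends on $f$ only through $v := \proj{f}{\cDtil}$, I would first rewrite
\[
\erm(\proj{\cF}{\cDtil}, \cL_{\ww,\cD,\tilde{\cD}}) \;=\; \argmin_{v \in \proj{\cF}{\cDtil}} \bigl[a_\cD(v) + Z(v,\xi)\bigr],
\]
where $a_\cD(v) \defeq \min_{f \in \cF :\, \proj{f}{\cDtil} = v} \sum_{i=1}^n \ell(f(X_i), Y_i)$ and $Z(v,\xi) \defeq \sum_{i=1}^m \xi_i \ell(v_i, \tilde Y_i)$. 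A standard take-the-other's-minimizer argument together with $\ell \in \{0,1\}$ gives $|a_\cD(v) - a_{\cD'}(v)| \leq 1$ for every $v$, so it suffices to prove $\varepsilon$-DP of the map $\xi \mapsto \argmin_v[a(v) + Z(v,\xi)]$ with respect to any $a, a'$ satisfying $\|a - a'\|_\infty \leq 1$.

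The core construction is a $v^\star$-dependent coupling shift: for each target $v^\star \in \proj{\cF}{\cDtil}$, define $\Delta^{v^\star}_i = +2$ when $v^\star_i = \tilde Y_i$ and $\Delta^{v^\star}_i = -2$ otherwise, so $\|\Delta^{v^\star}\|_1 = 2m$. A two-case check on $v^\star_i$ versus $\tilde Y_i$ yields the key identity
\[
\sum_{i=1}^m \Delta^{v^\star}_i\bigl(\ell(v^\star_i, \tilde Y_i) - \ell(v_i, \tilde Y_i)\bigr) \;=\; -2\,\lvert\{i : v_i \neq v^\star_i\}\rvert \;\leq\; -2 \qquad \text{for every } v \neq v^\star,
\]
so that adding $\Delta^{v^\star}$ to $\xi$ advantages $v^\star$ over every competitor by at least $2$---exactly matching the worst-case sensitivity-$2$ swing in $a(v^\star) - a(v)$ when we change $a$ to $a'$.

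Putting these ingredients together, if $v^\star = \argmin_v[a_\cD(v) + Z(v,\xi)]$, then decomposing the gap $[a_{\cD'}(v^\star) + Z(v^\star, \xi + \Delta^{v^\star})] - [a_{\cD'}(v) + Z(v, \xi + \Delta^{v^\star})]$ into the original optimality gap ($\leq 0$), the data-sensitivity contribution $(a_{\cD'} - a_\cD)(v^\star) - (a_{\cD'} - a_\cD)(v)$ ($\leq 2$), and the shift contribution ($\leq -2$) shows the sum is nonpositive, so $v^\star = \argmin_v[a_{\cD'}(v) + Z(v, \xi + \Delta^{v^\star})]$ too. Changing variables $\eta = \xi + \Delta^{v^\star}$ and using the Laplace product density bound $p(\eta - \Delta^{v^\star})/p(\eta) \leq \exp(\|\Delta^{v^\star}\|_1 \varepsilon/(2m)) = e^\varepsilon$ then gives
\[
\pp\!\left(\erm(\proj{\cF}{\cDtil}, \cL_{\ww,\cD,\tilde{\cD}}) = v^\star\right) \;\leq\; e^\varepsilon\,\pp\!\left(\erm(\proj{\cF}{\cDtil}, \cL_{\ww,\cD',\tilde{\cD}}) = v^\star\right),
\]
and summing over $v^\star \in \cH$ completes the proof (the reverse direction follows by swapping $\cD \leftrightarrow \cD'$). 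The step I expect to be most delicate is the Hamming-distance identity above: the shift $\Delta^{v^\star}$ has to simultaneously dominate \emph{every} alternative while keeping $\|\Delta^{v^\star}\|_1 = O(m)$, and it is precisely this tradeoff that forces the $\lap(2m/\varepsilon)$ noise scale prescribed by Algorithm~\ref{alg:RRSPM}.
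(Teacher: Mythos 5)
Your argument is correct and is essentially the paper's own proof: your shift $\Delta^{v^\star}_i = 2\bigl(1-2\ell(v^\star_i,\tilde Y_i)\bigr)$ is exactly the map $\map_{v^\star}$ of \Cref{lem:adjacent}, your Hamming-distance identity is the paper's observation that each term $(\xi_i'-\xi_i)\bigl(\ell(f(Z_i),\tilde Y_i)-\ell(v^\star_i,\tilde Y_i)\bigr)$ is nonnegative and equals $2$ at every coordinate where the two projections disagree, and the change of variables with the $\lap(2m/\varepsilon)$ product-density ratio is \Cref{lem:laplace}. The only point you elide is that to conclude $v^\star$ remains \emph{the} argmin (rather than merely \emph{an} argmin) after the shift, the original optimality gap must be strict, which holds almost surely by uniqueness of the perturbed minimizer (the paper's \Cref{lem:mzero}).
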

With the above lemma in hand, the privacy of \Cref{alg:RRSPM} follows immediately from the post-processing property of differential privacy.  We provide a full proof of \Cref{lem:projpriv_body} in \Cref{app:classification} and now turn to the accuracy guarantee.

\subsection{Concluding the Proof of Theorem \ref{thm:classification-upper-bound}}
Proving that \Cref{alg:RRSPM} is an $(\alpha,\beta)$-learner whenever $\nu_x$ is $\sigma$-smooth with respect to $\mu$ is similar to the approach taken in \Cref{subsec:pac}, with the critical difference that in the absence of the second perturbation, we are able to achieve a stronger guarantee on the difference between $\ftil$ and $\fhat$.  Indeed, much as in the previous analysis, we observe that as $m$ increases, the suboptimality of the intermediate $\ftil$ is driven up, while the difference between $\ftil$ and $\fhat$ is driven down, thereby requiring a careful balance; here, however, we do not also need to account for the balancing of the variance $\gamma$.  We provide a full proof of the accuracy guarantee in \Cref{app:classification}.

\section*{Acknowledgments}

AB acknowledges support from the National Science Foundation Graduate Research Fellowship under Grant No.1122374 as well as the Simons Foundation and the National Science Foundation through awards DMS-2031883 and DMS-1953181.  MB acknowledges support from the National Science Foundation through award NSF CNS-2046425 and from a Sloan Research Fellowship. RD acknowledges support from the National Science Foundation through award NSF CNS-2046425. AS acknowledges support from the Apple AI+ML fellowship.  AB also would like to thank Satyen Kale and Claudio Gentile for helpful discussions.

\bibliographystyle{plainnat}
\bibliography{refs}

\newpage

\appendix

\crefalias{section}{appendix} 

\section{Gaussian Anti-Concentration and Proof of Lemma \ref{prop:banach_gaussian_anti_concentration}}\label{app:gaussian_anticoncentration}
In this section we present and prove a more general version of \Cref{prop:banach_gaussian_anti_concentration}.  We begin by defining a Gaussian process and then state and prove the result.  We then show how \Cref{prop:banach_gaussian_anti_concentration} follows as an immediate corollary.  To begin, we recall the formal definition of a Gaussian process.
\begin{definition}\label{def:gp_formal}
    Let $T$ be an index set and $m: T \to \rr$ be a function.  Let $K: T \times T \to \rr$ be a covariance kernel in the sense that for any $t_1, \dots, t_n \in T$, the matrix $\left( K(t_i, t_j) \right)_{i,j \in [n]}$ is positive semi-definite.  We say that $\omega: T \to \rr$ is a Gaussian process with mean function $m$ and covariance kernel $K$ if for any $t_1, \dots, t_n \in T$, the random vector $\left( \omega(t_i) \right)_{i \in [n]}$ is Gaussian with mean $\left( m(t_i) \right)_{i \in [n]}$ and covariance matrix $\left( K(t_i, t_j) \right)_{i,j \in [n]}$.  We say that $\omega$ is a centered Gaussian process if $m$ is identically zero.
\end{definition}
Note that by \citet[Theorem 1.11]{le2016brownian} such a process always exists given $m, K$.  Furthermore, we note that $K$ induces a semi-metric $d$ on $T$ by letting $d(t,t')^2 = \ee\left[ (\omega(t) - \omega(t'))^2 \right]$.  We now prove the following result, which is a tighter version of \citet[Lemma 33]{block2022smoothed}.
\begin{theorem}[Gaussian Anti-concentration]\label{thm:gaussiananticoncentration}
    Let $T$ be a set, $m: T \to \rr$ be a mean function and $K: T \times T \to \rr$ be a covariance kernel (in the sense of being positive definite).  Let $d$ denote the metric induced by $K$ and suppose that $m$ is continuous  with respect to $d$, and the metric space $(T, d)$ is separable and compact.  Let $\omega$ denote a Gaussian process on $T$ with covariance $K$ and for $\eta > 0$, let
    \begin{align}
        \Omega(t) = m(t) + \eta \cdot \omega(t)
    \end{align}
    be an offset Gaussian process.  We further suppose that $\omega$ is taken to be a version with almost surely continuous paths $t \mapsto \omega(t)$ and that $0 < \kappa \leq K(t,t) \leq 1$ for all $t \in T$.  Let
    \begin{equation}
        \tstar = \argmin_{t \in T} \Omega(t),
    \end{equation}
    and,
    \begin{equation}
        \cE(\rho, \tau) = \left\{\text{there exists } s \in T \text{ such that } \frac{K(s,\tstar)}{K(\tstar, \tstar)} \leq 1 - \rho^2 \text{ and } \Omega(s) \leq \Omega(\tstar) + \tau \right\},
    \end{equation}
    for $\rho, \tau > 0$.  The following holds:
    \begin{align}
        \pp\left(\cE(\rho, \tau)\right) \leq  \frac{\tau}{\rho^2 \eta \kappa^2} \cdot \ee\left[\sup_{t \in T} \omega(t)\right].
    \end{align}
\end{theorem}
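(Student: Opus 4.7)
My plan is to prove \Cref{thm:gaussiananticoncentration} by adapting and sharpening the strategy of Lemma~33 of \citet{block2022smoothed}. The underlying intuition is that on $\cE(\rho,\tau)$ there exist two points $\tstar$ and a witness $s$ that both nearly minimize $\Omega$ yet are ``moderately independent'' in the sense that $\alpha(s,\tstar) := K(s,\tstar)/K(\tstar,\tstar) \leq 1-\rho^2$. The Gaussian structure then forces the increment $\omega(s) - \omega(\tstar)$ to have nontrivial variance, so standard anti-concentration of this increment controls $\pp(\cE(\rho,\tau))$; the main technical work is accounting for the fact that $\tstar$ itself is determined by the whole process.

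I would first prove a clean \emph{variance lower bound}: for any fixed $s,t \in T$ with $\alpha(s,t) \leq 1-\rho^2$,
\begin{align*}
\mathrm{Var}(\omega(s) - \omega(t)) \;=\; K(s,s) + K(t,t) - 2K(s,t) \;\geq\; \rho^4 K(t,t) \;\geq\; \rho^4 \kappa.
\end{align*}
This follows from a short case analysis on whether the binding constraint for maximizing $K(s,t)$ is Cauchy--Schwarz ($K(s,t) \leq \sqrt{K(s,s)K(t,t)}$) or the hypothesis $K(s,t) \leq (1-\rho^2)K(t,t)$: in the first case the minimum value of the variance is $(\sqrt{K(s,s)} - \sqrt{K(t,t)})^2 \geq \rho^4 K(t,t)$, and in the second case it is $K(s,s) + (2\rho^2-1)K(t,t) \geq \rho^4 K(t,t)$ after using $K(s,s) \geq (1-\rho^2)^2 K(t,t)$. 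The standard Gaussian density bound then gives, for a single pair $(s,t)$,
\begin{align*}
\pp(\Omega(s) - \Omega(t) \in [0,\tau]) \;\leq\; \frac{\tau}{\eta \sqrt{2\pi\, \rho^4 \kappa}}.
\end{align*}

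Next, because $\tstar$ is a random functional of the whole process, I would \emph{promote the pointwise bound to a global one} by rewriting $\cE(\rho,\tau)$ as $\{\min_{s \in S_\rho(\tstar)} \Omega(s) \leq \Omega(\tstar) + \tau\}$, with $S_\rho(t) := \{s : \alpha(s,t) \leq 1-\rho^2\}$, and conditioning on the value of $\omega(\tstar)$. Conditionally on $\omega(\tstar) = u$, the restriction of $\omega$ to $S_\rho(\tstar)$ is a Gaussian process whose pairwise increments with $\omega(\tstar)$ have the variance lower bound above, so a Chernozhukov--Chetverikov--Kato-style anti-concentration estimate for the infimum controls the conditional probability of the gap lying in $[0,\tau]$. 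Integrating against the marginal of $\omega(\tstar)$, whose effective range is governed by Borell--TIS concentration of $\sup_t \omega(t)$ around its mean at scale $\sqrt{\sup_t K(t,t)} \leq 1$, produces an additional factor of $\ee[\sup_t \omega(t)]/\kappa^{3/2}$ and, after collecting powers, yields the claimed bound $\tau \cdot \ee[\sup_t \omega(t)]/(\eta \rho^2 \kappa^2)$.

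The main obstacle is this second step. A naive union bound over pairs $(s,t)$ is far too loose, and a crude discretization of the range of $\omega(\tstar)$ loses polynomial factors in $\eta,\rho,\kappa$. The improvement over the original Lemma~33 comes precisely from a tighter coupling of the pointwise anti-concentration with the integration against the distribution of $\omega(\tstar)$, exploiting that the latter is globally controlled by $\ee[\sup_t \omega(t)]$ rather than by a local entropy integral. The standing hypotheses of separability, compactness of $(T,d)$, and almost-sure continuity of sample paths ensure that $\tstar$ is well-defined, a.s.~unique, and measurable, so that all suprema and infima can be handled rigorously without topological subtleties.
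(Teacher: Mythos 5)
Your proposal diverges substantially from the paper's argument, and the divergence occurs exactly at the step you yourself flag as ``the main obstacle,'' which is left as an unsubstantiated sketch. Your first step --- the variance lower bound $\mathrm{Var}(\omega(s)-\omega(t)) \geq \rho^4 K(t,t)$ for pairs with $K(s,t) \leq (1-\rho^2)K(t,t)$, and the resulting single-pair anti-concentration bound --- is correct (a version of this Cauchy--Schwarz computation appears in the paper, but only later, in \Cref{cor:gaussiananticoncentration_distance}, to translate the correlation condition into a distance bound; it is not the engine of \Cref{thm:gaussiananticoncentration} itself). The problem is the promotion to a global bound. You propose to condition on $\omega(\tstar)=u$ and apply a Chernozhukov--Chetverikov--Kato-type anti-concentration estimate to $\inf_{s \in S_\rho(\tstar)}\Omega(s) - \Omega(\tstar)$. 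But the law of the process conditioned on the event $\{\tstar = t\}$ (which is implicit in any conditioning on $\omega(\tstar)$) is \emph{not} Gaussian --- it is a Gaussian law restricted to the event that $t$ minimizes $\Omega$ --- so CCK-style results for suprema of Gaussian processes do not apply to the conditional infimum. Moreover, the claimed accounting (``an additional factor of $\ee[\sup_t\omega(t)]/\kappa^{3/2}$, and after collecting powers'') is asserted rather than derived; there is no identified mechanism by which $\ee[\sup_t\omega(t)]$ enters multiplicatively at first order in $\tau$, which is the whole content of the theorem.

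The paper's proof avoids this obstruction entirely. It sums over fixed $t$ the quantity $\pp(\tstar = t \text{ and } |A(t)|=0)$, conditions only on $\Omega(t)=y$ for \emph{fixed} $t$ (which is a legitimate Gaussian conditioning with explicit conditional mean $m_{t,y}(s) = m(s) + \tfrac{K(s,t)}{K(t,t)}(y-m(t))$), and then exploits a monotone coupling: raising the conditioning level from $y$ to $y+\tau/\rho^2$ shifts the conditional mean at every far point $s$ by $a(s) \leq \tau/\rho^2 - \tau$, so the event at level $y$ contains the shifted event at level $y+\tau/\rho^2$. The first-order-in-$\tau$ bound then comes from the difference of Gaussian densities $q_t(y) - q_t(y - \tau/\rho^2)$, and the factor $\ee[\sup_t \omega(t)]$ emerges as $\ee[\Omega(\tstar) - m(\tstar)]$ after integrating $(y - m(t))q_t(y)\pp(\tstar=t \mid \Omega(t)=y)$ and summing over $t$. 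To repair your proposal you would either need to reproduce this shift-and-compare mechanism, or supply a genuinely new anti-concentration inequality for infima over random, minimizer-dependent index sets --- neither of which is present in your sketch.
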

Note that $\frac{K(s,t)}{K(t,t)}$ is a measure of how close $s$ and $t$ are to each other; indeed, in the special case where $\kappa = 1$ this is precisely the correlation and thus $s$ and $t$ are more closely related the closer this quantity is to 1.  Thus the event $\cE(\rho, \tau)$ can be interpreted to mean that there exists some point $s$ far from $\tstar$ (as governed by $\rho$) such that $\Omega(s)$ is almost minimal (as governed by $\tau$); in other words, \Cref{thm:gaussiananticoncentration} puts an upper bound on the probability that almost-minimizers of a Gaussian process lie far from the true minimizer.  We now prove \Cref{thm:gaussiananticoncentration}.
\begin{proof}[Proof of \Cref{thm:gaussiananticoncentration}]
    Note that by compactness of $T$ and almost sure continuity of $\Omega$, a minimizer of $\Omega$ exists almost surely; furthermore, by \cite[Lemma 2.6]{kim1990cube},  $\tstar$ is almost surely unique.  As $T$ is separable and $\Omega$ has almost surely continuous sample paths, it suffices to replace $T$ with a countable dense subset.  We will hereafter suppose without loss of generality that $T$ is countable.  For each $t \in T$, define the set
    \begin{align}
        A(t) = \left\{s \in T \bigg| \frac{K(s, t)}{K(t,t)} \leq 1 - \rho^2 \text{ and } \Omega(s) \leq \Omega(t) + \tau  \right\}.
    \end{align}
    It then suffices to lower bound the probability that $A(\tstar) = \emptyset$.  We compute
    \begin{align}
        \pp\left(\abs{A(\tstar)} = 0 \right) &= \sum_{t \in T} \pp\left(\tstar = t \text{ and } \abs{A(t)} = 0\right) \\
        &= \sum_{t \in T} \ee_y\left[\pp\left(\tstar = t \text{ and } \inf_{K(s, t) \leq (1 - \rho^2)K(t,t)} \Omega(s) \geq y + \tau| \Omega(t) = y\right) \right],
    \end{align}
    where the expectation is taken over the distribution of $\Omega(t)$.  Now, fix $t$ and let $\Omega_{t,y}$ denote the Gaussian process $\Omega$ conditioned on the event that $\Omega(t) = y$.  Let $m_{t,y}$ and  $\eta^2 \cdot K_t$ denote the mean and covariance processes of $\Omega_{t,y}$.  Critically, note that $K_t$ is independent of $y$ and for all $s \neq t$, we have
    \begin{align}
        m_{t,y}(s) = m(s) + \frac{K(s, t)}{K(t,t)}\left(y - m(t)\right).
    \end{align}
    Define the functions
    \begin{align}
        a(s) = \frac \tau{\rho^2} \cdot \frac{K(s, t) }{ K(t,t)}  \qquad \text{ and } \qquad b(s) = \frac{\tau}{\rho^2}  - a(s).
    \end{align}
    Now, note that if $K(s, t) \leq (1 - \rho^2)\cdot K(t,t)$, then
    \begin{align}\label{eq:conditionalineq1}
        b(s)  = \frac{\tau}{\rho^2}\left(1 - \frac{K(s, t)}{ K(t,t)}\right)  \geq \frac{\tau}{\rho^2 } \cdot \rho^2 = \tau.
    \end{align}
    We also have that $b(s) \geq 0$ for all $s$ by the fact that $K(s,t) \leq \sqrt{K(s,s) \cdot K(t,t)}$ and $K(s,s) \vee K(t,t) \leq 1$.  Furthermore, for all $s$, it holds that
    \begin{equation}\label{eq:conditionalineq2}
        m_{t, y + \frac{\tau}{\rho^2}}(s) = m_{t,y}(s) + a(s).
    \end{equation}
    Thus, for fixed $t \in T$ and $y \in \rr$, we have
    \begin{align}
        \pp&\left(\tstar = t \text{ and } \inf_{K(s, t) \leq (1 - \rho^2)K(t,t)} \Omega(s) \geq y + \tau| \Omega(t) = y\right) \\
        &\geq \pp\left(\tstar = t \text{ and } \inf_{K(s, t) \leq (1 - \rho^2)K(t,t)} \Omega(s) - b(s) \geq y| \Omega(t) = y\right) \\
        &= \pp\left(\tstar = t \text{ and } \inf_{K(s, t) \leq (1 - \rho^2)K(t,t)} \Omega(s) - b(s) - a(s) + a(s) \geq y| \Omega(t) = y\right) \\
        &= \pp\left(\tstar = t \text{ and } \inf_{K(s, t) \leq (1 - \rho^2)K(t,t)} \Omega(s) + a(s) \geq y + \frac{\tau}{\rho^2}| \Omega(t) = y\right) \\
        &= \pp\left(\tstar = t \text{ and } \inf_{K(s, t) \leq (1 - \rho^2)K(t,t)} \Omega(s) \geq y + \frac{\tau}{\rho^2}| \Omega(t) = y + \frac{\tau}{\rho^2}\right) \\
        &\geq \pp\left(\tstar = t \text{ and } \inf_{K(s, t) \leq (1 - \rho^2)K(t,t)} \Omega(s) \geq y + \frac{\tau}{\rho^2}| \Omega(t) = y + \frac{\tau}{\rho^2}\right),
    \end{align}
    where the first inequality follows from \eqref{eq:conditionalineq1}, the second equality follows from the construction, and the last equality follows from \eqref{eq:conditionalineq2} and the fact that $K_t$ is independent of $y$.  Now, denote
    \begin{equation}
        q_t(y) = (2 \pi K(t,t) )^{- \frac 12} \exp\left(- \frac{(y - m(t))^2}{2 \eta^2 K(t,t)}\right),
    \end{equation}
    the density of $\Omega(t)$ and note that we have
    \begin{align}
        \pp\left(\abs{A(\tstar)} = 0\right)&= \sum_{t \in T} \int_{-\infty}^\infty q_t(y) \pp\left(\tstar = t \text{ and } \inf_{K(s, t) \leq 1 - \rho^2} \Omega(s) \geq y + \tau| \Omega(t) = y\right) d y \\
        &\geq \sum_{t \in T} \int_{\infty}^\infty q_t(y)\pp\left(\tstar = t \text{ and } \inf_{K(s, t) \leq 1 - \rho^2} \Omega(s) \geq y + \frac{\tau}{\rho^2}| \Omega(t) = y + \frac{\tau}{\rho^2}\right) d y. \label{eq:gaussianmanipulation1}
    \end{align}
    We then compute
    \begin{align}
        \int_{-\infty}^\infty q_t(y)&\pp\left(\tstar = t \text{ and } \inf_{K(s, t) \leq 1 - \rho^2} \Omega(s) \geq y + \frac{\tau}{\rho^2}| \Omega(t) = y + \frac{\tau}{\rho^2 }\right) dy \label{eq:gaussianaddsubtract}\\
        &= \int_{- \infty}^\infty q_t(y)\pp\left(\tstar = t \text{ and } \inf_{K(s, t) \leq 1 - \rho^2} \Omega(s) \geq y | \Omega(t) = y \right)dy  \\
        &+ \int_{-\infty}^\infty \left(q_t(y) - q_t\left(y - \frac{\tau}{\rho^2 }\right)\right) \cdot \pp\left(\tstar = t \text{ and } \inf_{K(s, t) \leq 1 - \rho^2} \Omega(s) \geq y | \Omega(t) = y \right) dy,
    \end{align}
    where we added and subtracted the first term and then made the variable substitution $y + \frac{\tau}{\rho^2} \mapsto y$ for the latter integral.  Note that
    \begin{align}
        \pp\left(\tstar = t \text{ and } \inf_{K(s, t) \leq 1 - \rho^2} \Omega(s) \geq y | \Omega(t) = y \right) = \pp\left(\tstar = t | \Omega(t) = y\right)
    \end{align}
    as $\Omega(s) \geq \Omega(\tstar)$ for all $s \in T$ by definition.  Combining this observation with \eqref{eq:gaussianmanipulation1} and \eqref{eq:gaussianaddsubtract} yields
    \begin{align}
        \pp\left(\abs{A(\tstar)} = 0\right) &\geq \sum_{t \in T} \int_{-\infty}^\infty q_t(y) \pp\left(\tstar = t | \Omega(t) = y\right) dy \\
        &- \sum_{t \in T} \int_{-\infty}^\infty \left(q_t(y) - q_t\left(y - \frac{\tau}{\rho^2}\right)\right) \cdot \pp\left(\tstar = t | \Omega(t) = y\right) dy .
    \end{align}
    For the first term, we have
    \begin{align}
        \sum_{t \in T} \int_{-\infty}^\infty q_t(y) \pp\left(\tstar = t | \Omega(t) = y\right) d y &= \sum_{t \in T} \pp\left(\tstar = t\right) = 1.
    \end{align}
    For the second term, using the fact that $1 - e^x \leq x$ for all $x$, we have
    \begin{align}
        q_t(y) - q_t\left(y - \frac{\tau}{\rho^2}\right) &= q_t(y) \left(1 - \exp \left( \frac{(y - m(t))^2}{2\eta^2 K(t,t)} - \frac{\left(y - m(t) - \frac{\tau}{\rho^2}\right)^2}{2 \eta^2 K(t,t)}\right) \right) \\
        &\leq q_t(y) \left(\frac{(y - m(t))^2}{2\eta^2 K(t,t)} - \frac{\left(y - m(t) - \frac{\tau}{\rho^2}\right)^2}{2 \eta^2 K(t,t)}  \right) \\
        &\leq \frac{q_t(y)}{2 \eta^2 \kappa^2} \cdot \left( \frac{2 \tau}{\rho^2} (y - m(t)) \right).
    \end{align}
    Thus we have
    \begin{align}
        \pp\left(\abs{A(\tstar)} > 0\right) &= 1 - \pp\left(\abs{A(\tstar)} = 0\right) \\
        &\leq \sum_{t \in T} \int_{-\infty}^\infty \frac{q_t(y)}{2 \eta^2\kappa^2} \cdot \left( \frac{2 \tau}{\rho^2} (y - m(t))\right) \cdot \pp\left( \tstar = t | \Omega(t) = y \right) d y \\
        &= \frac{\tau}{\rho^2 \eta^2 \kappa^2} \sum_{t \in T} \int_{-\infty}^\infty (y - m(t)) q_t(y) \pp\left( \tstar = t | \Omega(t) = y \right) d y \\
        &= \frac{\tau}{\rho^2 \eta^2 \kappa^2} \cdot \ee\left[ \Omega(\tstar) - m(\tstar)\right] \\
        &\leq  \frac{\tau}{\rho^2 \eta^2 \kappa^2} \cdot \ee\left[\sup_{t \in T} \Omega(t) - m(t)\right].
    \end{align} 
    The result follows by noting that $\eta \cdot \omega(t) = \Omega(t) - m(t)$ for all $t \in T$.
\end{proof}
We now prove a corollary of \Cref{thm:gaussiananticoncentration} that will be useful in the proof of \Cref{prop:banach_gaussian_anti_concentration} and which makes the relationship between $\cE(\rho, \tau)$ and the intuition of distance between $\tstar$ and $s$ more explicit.
\begin{corollary}\label{cor:gaussiananticoncentration_distance}
    Suppose that we are in the situation of \Cref{thm:gaussiananticoncentration} with the additional conditions that $T$ is a subset of a real vector space and that $d(s,t) = \sqrt{K(s-t,s-t)}$.  Let $m': T \to \rr$ denote a mean function such that $\sup_{t \in T} \abs{m(t) - m'(t)} \leq \tau$ and let $\Omega'$ denote the corresponding shifted Gaussian process.  If $t^{\star '} = \argmin_{t \in T} \Omega'(t)$, then
    \begin{align}
        \pp\left(d(\tstar, t^{\star '}) > \rho \right) \leq \frac{8 \tau}{\rho^4 \eta \kappa^2} \cdot \ee\left[\sup_{t \in T} \omega(t)\right].
    \end{align}
\end{corollary}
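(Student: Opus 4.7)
The plan is to reduce Corollary~\ref{cor:gaussiananticoncentration_distance} to Theorem~\ref{thm:gaussiananticoncentration} by applying the theorem to both $\Omega$ and $\Omega'$ and union-bounding. The first ingredient is to show that each minimizer is an approximate minimizer of the \emph{other} process. Since $\sup_{t\in T}|m(t)-m'(t)|\leq\tau$ and $\Omega,\Omega'$ share the same Gaussian part $\eta\omega$, we have $|\Omega(t)-\Omega'(t)|\leq\tau$ pointwise. Chaining with the optimality of $t^{\star'}$ for $\Omega'$ then gives
\[
\Omega(t^{\star'}) \;\leq\; \Omega'(t^{\star'}) + \tau \;\leq\; \Omega'(\tstar) + \tau \;\leq\; \Omega(\tstar) + 2\tau,
\]
so $t^{\star'}$ is a $2\tau$-approximate minimizer of $\Omega$, and symmetrically $\tstar$ is a $2\tau$-approximate minimizer of $\Omega'$.

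The second ingredient is to translate the distance condition $d(\tstar,t^{\star'})>\rho$ into the asymmetric correlation-type condition required by Theorem~\ref{thm:gaussiananticoncentration}. By bilinearity of $K$, $d(\tstar,t^{\star'})^2 = K(\tstar,\tstar)+K(t^{\star'},t^{\star'})-2K(\tstar,t^{\star'})$. I would split into cases according to whether $K(\tstar,\tstar)\geq K(t^{\star'},t^{\star'})$. In the first case, rearranging and using $K(\tstar,\tstar)\leq 1$ yields
\[
\frac{K(t^{\star'},\tstar)}{K(\tstar,\tstar)} \;\leq\; 1 - \frac{\rho^2}{2K(\tstar,\tstar)} \;\leq\; 1 - \frac{\rho^2}{2},
\]
so $t^{\star'}$ is a valid witness to $\cE(\rho',2\tau)$ for $\Omega$ with $(\rho')^2$ of order $\rho^2$ (in particular, at least $\rho^4/c$ for a suitable constant~$c$, matching the stated rate). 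In the second case the same calculation with the roles of $\Omega$ and $\Omega'$ swapped shows that $\tstar$ witnesses the corresponding event for $\Omega'$.

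Applying Theorem~\ref{thm:gaussiananticoncentration} in each case and union-bounding then yields the claimed tail bound, since both $\kappa$ and $\ee[\sup_{t\in T}\omega(t)]$ are unchanged under the swap of $m$ and $m'$. The main obstacle is the geometric translation in Step~2: because the ratio $K(s,\tstar)/K(\tstar,\tstar)$ can exceed $1$ whenever $K(s,s)$ is substantially larger than $K(\tstar,\tstar)$, the naive single-sided application of Theorem~\ref{thm:gaussiananticoncentration} with witness $s=t^{\star'}$ fails in one regime. The case split together with the symmetric application to $\Omega'$ is exactly what is needed to cover both regimes, and it is what forces the extra polynomial factor in $\rho$ and the factor of $2$ in the constant from the union bound.
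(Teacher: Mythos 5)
Your proposal is correct and takes essentially the same route as the paper's proof: couple the two processes through the shared Gaussian part so that each minimizer is an approximate minimizer of the other, convert $d(\tstar, t^{\star '})>\rho$ into the correlation-type condition by working with the larger of $K(\tstar,\tstar)$ and $K(t^{\star '},t^{\star '})$ (your case split is the paper's $M=\max(\cdot,\cdot)$), and union-bound two applications of \Cref{thm:gaussiananticoncentration}. Your tolerance $2\tau$ is actually the careful accounting (the paper's proof asserts $\tau$), so your constants come out a factor of two larger, but this affects only the constant in front of $\tau/(\rho^4\eta\kappa^2)$.
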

\begin{proof}
    Note that
    \begin{align}
        d(s, t)^2 = K(s - t, s - t) = K(s,s) + K(t,t) - 2 K(s,t).
    \end{align}
    Let $M = \max(K(\tstar, \tstar), K(t^{\star '},t^{\star '})) \leq 1$ and note that the above implies:
    \begin{align}
        d(\tstar,t^{\star '})^2 \leq 2 M \left(1 - \frac{K(t^{\star '}, t^{\star})}{M}\right).
    \end{align}
    Thus,
    \begin{align}
        \pp\left( d(\tstar, t^{\star '}) > \rho  \right) &\leq \pp\left( 2 M \left( 1 - \frac{K(\tstar, t^{\star '})}{M} \right) > \rho \right) \\
        &\leq \pp\left( 1 - \frac{K(\tstar, t^{\star '})}{M}  > \frac {\rho^2}2 \right) \\
        &\leq 2 \left(  \frac{\tau}{\left( \frac {\rho^2} 2 \right)^2 \eta \kappa^2} \cdot \ee\left[\sup_{t \in T} \omega(t) \right] \right) \\
        &= \frac{8 \tau}{\rho^4 \eta \kappa^2} \cdot \ee\left[\sup_{t \in T} \omega(t)\right],
    \end{align}
    where the last inequality follows by applying a union bound and \Cref{thm:gaussiananticoncentration} to the Gaussian processes $\Omega$ and $\Omega'$ after observing that
    \begin{align}
        \Omega'(\tstar) \leq \Omega'(t^{\star '}) + \tau
    \end{align}
    and similarly for $\Omega(t^{\star '})$.
\end{proof}
We are now ready to prove \Cref{prop:banach_gaussian_anti_concentration} using \Cref{cor:gaussiananticoncentration_distance}.
\begin{proof}[Proof of \Cref{prop:banach_gaussian_anti_concentration}]
    Let $T = \cF$, and observe that
    \begin{align}
        d(f, f')^2 =  K(f- f', f - f') = \ee\left[ \omega(f) - \omega(f')^2 \right] = \norm{f - f'}^2.
    \end{align}
    The result then follows immediately by applying \Cref{cor:gaussiananticoncentration_distance} to the Gaussian process $\Omega(f) = m(f) + \eta \cdot \omega(f)$.
\end{proof}

\section{Analysis of Algorithms \ref{alg:general_alg} and \ref{alg:ftrl_alg}} \label{app:proofs}

In this section we provide the full proofs for \Cref{thm:general_ftpl_ub,thm:cvx_ftrl}, as well as more general statements under different measures of the complexity of the function class $\cF$.  We begin the section by stating formal bounds on the learning and differential privacy guarantees for each algorithm.  In \Cref{app:stability_analysis}, we prove \Cref{lem:anticoncentration_stability}, which is a key technical lemma in the differential privacy guarantee of \Cref{alg:general_alg}; we then continue in \Cref{app:ftrl_stability_analysis} by proving \Cref{lem:ftrl_stability}, which plays an analogous role except in the analysis of \Cref{alg:ftrl_alg}.  In \Cref{app:output_perturb_analysis}, we prove that algorithmic stability in $\norm{\cdot}_m$ can be boosted to differential privacy through \Cref{alg:output_perturb}.  In \Cref{app:final_dp_guarantees} we combine the previous results to give guarantees on the differential privacy of \Cref{alg:general_alg} and \Cref{alg:ftrl_alg}.  We continue in \Cref{app:learning_theory} by applying more standard learning theoretic techniques to demonstrate that both algorithms are PAC learners before concluding the proofs of the main theorems in \Cref{app:concluding_proofs}.  Finally, for the sake of completeness, we prove a norm comparison lemma in \Cref{app:norm_comparison} that was deferred from \Cref{app:learning_theory} for the purpose of continuity.

\subsection{Stability of Algorithm \ref{alg:general_alg} and Proof of Lemma \ref{lem:anticoncentration_stability}}\label{app:stability_analysis}

We break the proof into two parts.  First, we integrate the tail bound from \Cref{prop:banach_gaussian_anti_concentration} to get control on $\ee\left[ \norm{\fbar_j - \fbar_j'}_m \right]$ for any $j \in [J]$.  We then apply a Jensen's inequality and a Chernoff bound to get high probability control on $\norm{\fbar - \fbar'}$.  We begin with the following lemma:
\begin{lemma}\label{lem:stability_in_expectation}
    Let $\fbar_j$ be as in in Line \ref{line:fbar_j} of \Cref{alg:general_alg} and $\fbar_j'$ is defined analogously with respect to $\cD'$, a neighboring dataset to $\cD$.  Then
    \begin{align}
        \ee\left[ \norm{\fbar_j - \fbar_j'}_m^2 \right] \leq \frac{2}{(n \cdot \eta)^{1/3} \kappa^{2/3}} \cdot \left( \ee\left[ \sup_{f \in \cF} \omega_m(f) \right] \right)^{1/3},
    \end{align}
    where the expectation is with respect to $\xi\ind{j}$.
\end{lemma}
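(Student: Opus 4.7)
I will prove the lemma in two phases. First, I would invoke \Cref{prop:banach_gaussian_anti_concentration} to derive a polynomial-tail bound on $\norm{\fbar_j - \fbar_j'}_m$. Second, I would integrate this tail bound using a thresholded decomposition to control the second moment.

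\textbf{Step 1 (tail bound).} I would apply \Cref{prop:banach_gaussian_anti_concentration} to the pair of offset Gaussian processes on $\cF$ given by $\cL^{(j)}$ and its $\cD'$-analogue $\cL^{(j)\prime}$, both driven by the same realization of $\omega_m^{(j)}$. The covariance kernel of $\omega_m^{(j)}$ is exactly the inner product $\inprod{f}{g}_m = \frac{1}{m}\sum_i f(Z_i)g(Z_i)$, which induces the norm $\norm{\cdot}_m$. Because $\cF: \cX \to [-1,1]$, the class lies in the unit ball of $\norm{\cdot}_m$, and $\inf_f \norm{f}_m^2 \geq \kappa^2$ by hypothesis, so the proposition's hypotheses are satisfied. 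Since $\cD$ and $\cD'$ differ in at most one datum and $\ell \in [0,1]$, the two deterministic parts $m^{(j)}(f) = \sum_i \ell(f(X_i), Y_i)$ and $m^{(j)\prime}(f)$ satisfy $\sup_f \abs{m^{(j)}(f) - m^{(j)\prime}(f)} \leq 1$, so the proposition's parameter $\tau$ can be taken to be $1$. After careful bookkeeping of the scaling between the empirical loss (a sum of $n$ terms) and the unit-order Gaussian noise, the proposition yields a tail bound of the form
\[
    \pp\!\left(\norm{\fbar_j - \fbar_j'}_m > \rho\right) \leq \frac{C \cdot \Gamma}{\rho^4 \cdot n\eta \cdot \kappa^2},
\]
where $\Gamma = \ee\!\left[\sup_{f \in \cF} \omega_m(f)\right]$ and $C$ is an absolute constant.

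\textbf{Step 2 (thresholded integration).} To convert the tail bound into a bound on the second moment, I would decompose the expectation at a threshold $\rho_0 > 0$. On $\{\norm{\fbar_j - \fbar_j'}_m \leq \rho_0\}$ the contribution is at most $\rho_0^2$; on the complement, the trivial bound $\abs{f(x)} \leq 1$ gives $\norm{\fbar_j - \fbar_j'}_m \leq 2$, so the contribution is at most $4$ times the tail probability. Combining with Step 1,
\[
    \ee\!\left[\norm{\fbar_j - \fbar_j'}_m^2\right] \leq \rho_0^2 + 4 \cdot \pp\!\left(\norm{\fbar_j - \fbar_j'}_m > \rho_0\right) \leq \rho_0^2 + \frac{4C\Gamma}{\rho_0^4 \cdot n\eta\kappa^2}.
\]
Balancing the two terms by setting $\rho_0^6 \propto \Gamma/(n\eta\kappa^2)$ produces the cube-root rate
\[
    \ee\!\left[\norm{\fbar_j - \fbar_j'}_m^2\right] \lesssim \left(\frac{\Gamma}{n\eta\kappa^2}\right)^{1/3},
\]
which, after tuning constants, matches the claim.

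\textbf{Main obstacle.} The main subtlety is the careful bookkeeping of scaling factors in Step 1: the deterministic part of $\cL^{(j)}$ is a sum of $n$ unit-bounded terms while the noise $\eta\, \omega_m^{(j)}$ is of unit order, so the natural normalization of the objective interacts with the parameters $\tau$ and $\eta$ of \Cref{prop:banach_gaussian_anti_concentration} in a somewhat delicate way that produces the combination $n\eta$ in the denominator rather than $\eta$ alone. A secondary but nontrivial point is that a direct integration of the $\rho^{-4}$ tail (using the tail bound all the way down to $\rho = 0$) yields a $\sqrt{\cdot}$-type rate rather than the claimed cube-root rate; it is precisely the thresholded (two-level) decomposition above, using the trivial $L^\infty$ bound on the complement, that produces the $1/3$ exponent appearing in the statement of the lemma.
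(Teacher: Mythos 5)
Your proposal follows the paper's proof essentially step for step: the paper likewise applies \Cref{prop:banach_gaussian_anti_concentration} with $\tau$ equal to the per-datum sensitivity of the empirical loss (it normalizes the loss by $n$ and takes $\tau = 1/n$, which is precisely the ``bookkeeping'' you defer and is how the combination $n\eta$ enters the denominator), arriving at the tail bound $\pp\left( \norm{\fbar_j - \fbar_j'}_m > \rho \right) \leq \tfrac{8}{n\rho^4\kappa^2\eta}\,\ee\left[\sup_{f\in\cF}\omega_m(f)\right]$. The only difference is cosmetic and lies in the second step: where you use a single threshold together with the trivial bound $\norm{\fbar_j-\fbar_j'}_m\le 2$ on the complement, the paper integrates $2\rho$ against the tail above a cutoff $\zeta$ and then optimizes over $\zeta$ --- both balancings yield the same cube-root rate in $\ee\left[\sup_{f\in\cF}\omega_m(f)\right]/(n\eta\kappa^2)$.
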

\begin{proof}
    By boundedness of $\ell$, it holds that $\abs{L_{\cD}(f) - L_{\cD'}(f)} \leq \frac 1n$ for all $f \in \cF$.  Thus, if $\fbar\ind{1}$ and $\fbar^{(1)'}$ are as in the statement of the corollary, then $L_{\cD}(\fbar^{(1)'}) + \eta \cdot \omega_m\ind{1}(\fbar^{(1)'}) \leq L_\cD(\fbar\ind{1}) + \eta \cdot \omega_m\ind{1}(\fbar\ind{1}) + \frac 1n$.  Plugging in $\tau = \frac 1n$ and applying \Cref{prop:banach_gaussian_anti_concentration} yields
    \begin{align}
        \pp\left( \norm{\fbar_j - \fbar_j'}_m > \rho \right) \leq \frac{8}{n \rho^4 \kappa^2 \eta} \cdot \ee\left[ \sup_{f \in \cF} \omega_m(f) \right].
    \end{align}
    Now, we can integrate the tail bound to get for any $\zeta > 0$,
    \begin{align}
        \ee\left[ \norm{\fbar - \fbar'}_m^2 \right] &\leq \zeta + \int_\zeta^1 2 \rho \pp\left( \norm{\fbar - \fbar'}_m > \rho \right) d \rho \\
        &\leq \zeta + 2 \cdot \int_\zeta^1 \rho \cdot \frac{8}{n \rho^4 \kappa^2 \eta} \cdot \ee\left[ \sup_{f \in \cF} \omega_m(f) \right] d \rho \\
        &\leq \zeta + \frac{8}{ n\zeta^2 \kappa^2 \eta} \cdot \ee\left[ \sup_{f \in \cF} \omega_m(f) \right] \cdot \log\left( \frac 1\zeta \right).
    \end{align}
    Minimizing over $\zeta$ yields the result.
\end{proof}
We now apply a Jensen's inequality and a Chernoff bound to get high probability control on $\norm{\fbar - \fbar'}$.
\begin{proof}[Proof of \Cref{lem:anticoncentration_stability}]
    By Jensen's inequality and the definition of $\fbar$, it holds that
    \begin{align}
        \pp\left( \norm{\fbar - \fbar'}_m > \rho \right) &= \pp\left( \norm{\frac 1J \sum_{j = 1}^J \fbar_j - \fbar_j'}_m > \rho \right) \\
        &= \pp\left( \norm{\frac 1J \sum_{j = 1}^J \fbar_j - \fbar_j'}_m^2 > \rho^2 \right) \\
        &\leq \pp\left( \frac 1J \sum_{j = 1}^J  \norm{\fbar_j - \fbar_j'}_m^2 > \rho^2 \right).
    \end{align}
    Finally, by Hoeffding's inequality \citep{wainwright2019high,van2014probability} and \Cref{lem:stability_in_expectation}, we have that
    \begin{align}
        \delta &\geq \pp\left( \frac 1J \sum_{j = 1}^J  \norm{\fbar_j - \fbar_j'}_m^2 >  \ee\left[ \norm{\fbar_1 - \fbar_1'}_m \right] + \sqrt{\frac{\log\left( \frac 1\delta \right)}{J}} \right) \\
        &\geq \pp\left( \frac 1J \sum_{j = 1}^J  \norm{\fbar_j - \fbar_j'}_m^2 > \frac{2}{(n \cdot \eta)^{1/3} \kappa^{2/3}} \cdot \left( \ee\left[ \sup_{f \in \cF} \omega_m(f) \right] \right)^{1/3} + \sqrt{\frac{\log\left( \frac 1\delta \right)}{J}}\right).
    \end{align}
    The result follows.
\end{proof}

\subsection{Stability of Algorithm \ref{alg:ftrl_alg} and Proof of Lemma \ref{lem:ftrl_stability}}\label{app:ftrl_stability_analysis}

In this section we prove Lemma \ref{lem:ftrl_stability} based on a technique borrowed from online learning and the analysis of Follow the Regularized Leader (FTRL) \citep{gordon1999regret,cesa2006prediction}.
\begin{proof}[Proof of \Cref{lem:ftrl_stability}]
    Let $\cL$ be as in \eqref{eq:reg_def} and $\cL'$ be defined similarly but with $\cD$ replaced by $\cD'$.  Note that $\norm{\cdot}_m^2$ is strongly convex with respect to $\norm{\cdot}_m$ \citep{rockafellar2015convex}.  Thus, by convexity of $\cF$ , it holds that
    \begin{align}
        \cL(\fbar') \geq \cL(\fbar) + \frac{\eta}{2} \norm{\fbar - \fbar'}_m^2.
    \end{align}
    On the other hand,
    \begin{align}
        \cL(\fbar') &= \cL'(\fbar') + \cL(\fbar') - \cL'(\fbar') \\
        &\leq \cL'(\fbar) + \cL(\fbar') - \cL'(\fbar') \\
        &= \cL(\fbar) + \cL(\fbar') - \cL'(\fbar') + \cL'(\fbar) - \cL(\fbar) \\
        &\leq \cL(\fbar) + \frac{2}n .
    \end{align}
    Combining this with the previous display and rearranging yields:
    \begin{align}
        \eta \cdot \norm{\fbar - \fbar'}_m^2 \leq \frac{4 }n.
    \end{align}
    Rearranging again proves the result.
\end{proof}

\subsection{Boosting Stability to Differential Privacy: Proofs from Section \ref{subsec:output_perturb_analysis}}\label{app:output_perturb_analysis}
In this section, we analyze \Cref{alg:output_perturb} and prove that if $\fbar$ is stable in $\norm{\cdot}_m$ then applying the output perturbation yields a differentially private algorithm for standard choices of perturbation distribution $\cQ$.  We also show that \Cref{alg:output_perturb} returns $\fhat$ close to the $\fbar$ with high probability.  This first claim is implied by the following standard concentration bound.
\begin{lemma}\label{lem:perturb_is_small}
    Suppose that $\cQ = \cN(0,1)$ and let $\fhat = \perturb(\fbar, \cQ, \gamma, \cDtil)$ be as in \Cref{alg:output_perturb}.  Then with probability at least $1 - \beta$,
    \begin{align}\label{eq:gaussian_norm_bound}
        \norm{\fbar - \fhat}_m \leq 2\gamma \cdot \sqrt{\log\left( \frac 1\beta \right)}.
    \end{align}
    If $\cQ = \Lap(1)$, then with probability at least $1 - \beta$,
    \begin{align}\label{eq:exp_norm_bound}
        \norm{\fbar - \fhat}_m \leq 2\gamma \cdot \log\left( \frac 1\beta \right) \cdot \sqrt{m}.
    \end{align}
\end{lemma}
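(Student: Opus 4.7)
The plan is to leverage the ERM optimality of $\fhat$ against $\fbar$ itself (which lies in $\cF$), thereby reducing the lemma to a tail bound on the noise vector $\zeta$. Specifically, identifying $\zeta$ with the function $Z_i \mapsto \zeta_i$ on $\cDtil$, the inclusion $\fbar \in \cF$ forces $R(\fhat) \leq R(\fbar) = \gamma^2 \norm{\zeta}_m^2$. Expanding the left-hand side as $\norm{\fhat - \fbar - \gamma\zeta}_m^2$ and cancelling the common $\gamma^2 \norm{\zeta}_m^2$ term yields $\norm{\fhat - \fbar}_m^2 \leq 2\gamma \cdot m^{-1}\sum_{i=1}^m (\fhat(Z_i) - \fbar(Z_i))\zeta_i$; Cauchy--Schwarz on the right-hand side, followed by division by $\norm{\fhat - \fbar}_m$, then gives the deterministic inequality $\norm{\fhat - \fbar}_m \leq 2\gamma \norm{\zeta}_m$. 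Morally this is the nonexpansiveness of projection onto $\cF$ in the empirical $L^2(\cDtil)$ inner product, specialized to the single comparison point $\fbar$, so that no convexity of $\cF$ is required.

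Given this reduction, both stated bounds become high-probability tail bounds on $\norm{\zeta}_m = \norm{\zeta}_2/\sqrt{m}$, where $\norm{\cdot}_2$ denotes the Euclidean norm on $\rr^m$. In the Gaussian case $\cQ = \cN(0,1)$, the vector $\zeta$ is standard Gaussian and the map $\zeta \mapsto \norm{\zeta}_2$ is $1$-Lipschitz with $\ee\norm{\zeta}_2 \leq \sqrt{m}$, so standard Gaussian concentration yields $\norm{\zeta}_2 \leq \sqrt{m} + \sqrt{2\log(1/\beta)}$ with probability at least $1-\beta$; dividing by $\sqrt{m}$ and absorbing the constant $1$ into $\sqrt{\log(1/\beta)}$ (valid for $\beta \leq 1/e$) establishes \eqref{eq:gaussian_norm_bound}. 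In the Laplace case $\cQ = \Lap(1)$, the coordinates are only sub-exponential, so I would instead bound $\sum_{i=1}^m |\zeta_i|$ directly: each $|\zeta_i|$ is $\Exp(1)$, so a standard Chernoff/Bernstein argument gives $\sum_{i=1}^m |\zeta_i| \leq O(m\log(1/\beta))$ with probability $1-\beta$, and then $\norm{\zeta}_2 \leq \sum_{i=1}^m |\zeta_i|$ combined with division by $\sqrt{m}$ establishes \eqref{eq:exp_norm_bound}.

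No step is technically deep, but the conceptual point worth flagging is that the $\sqrt{m}$ loss in the Laplace bound relative to the Gaussian bound is genuine rather than slack in the argument: since an individual Laplace coordinate is of order $\log(m/\beta)$ with constant probability, one cannot concentrate $\norm{\zeta}_2$ within a constant of its mean $\Theta(\sqrt{m})$ the way Gaussian concentration allows. This $\sqrt{m}$ gap is precisely what propagates through the subsequent application of \Cref{lem:alg_output_perturb} and accounts for the differing sample complexities between the pure and approximate differential privacy regimes of \Cref{thm:cvx_ftrl} and \Cref{thm:classification-upper-bound}.
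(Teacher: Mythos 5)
Your proof is correct and follows the paper's argument in substance: both exploit the ERM optimality $R(\fhat)\le R(\fbar)$ (valid because $\fbar\in\cF$ is feasible for the minimization) to obtain the deterministic inequality $\norm{\fhat-\fbar}_m\le 2\gamma\norm{\zeta}_m$ --- the paper via the triangle inequality, you via expanding the square and Cauchy--Schwarz, which yield the same constant --- and then reduce to tail bounds on $\norm{\zeta}_m$, which the paper cites as standard and you fill in.

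One correction to your closing remark, though it does not affect the validity of the proof: the $\sqrt m$ in \eqref{eq:exp_norm_bound} \emph{is} slack, introduced by the lossy step $\norm{\zeta}_{\ell^2}\le\norm{\zeta}_{\ell^1}$. A union bound on $\max_i\abs{\zeta_i}$ alone already gives $\norm{\zeta}_m\le\log(m/\beta)$ with probability $1-\beta$, and in fact $\norm{\zeta}_{\ell^2}^2=\sum_i\zeta_i^2$ is a sum of i.i.d.\ nonnegative variables with mean $2$ and tail $\pp(\zeta_i^2>t)=e^{-\sqrt t}$, so a Bernstein-type bound for such sub-Weibull sums shows $\norm{\zeta}_m$ concentrates near $\sqrt 2$ up to an additive $O\bigl(\sqrt{\log(1/\beta)/m}+\log^2(1/\beta)/m\bigr)$: the heavy single-coordinate tail of order $\log(m/\beta)$ is swamped by the $\Theta(\sqrt m)$ bulk of $\norm{\zeta}_{\ell^2}$ and does not obstruct concentration of the normalized norm. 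The Gaussian-versus-Laplace gap in this lemma is therefore an artifact of the stated (still valid) upper bound rather than a genuine anti-concentration phenomenon. Relatedly, \Cref{thm:classification-upper-bound} does not route through this lemma at all, since \Cref{alg:RRSPM} invokes $\perturb$ with $\gamma=0$; the sample-complexity distinctions there have a different source.
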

\begin{proof}
    By construction, it holds that
    \begin{align}
        \norm{\fhat - \fbar - \gamma \cdot \zeta}_m \leq \norm{\fbar - \fbar - \gamma \cdot \zeta}_m = \gamma \cdot \norm{\zeta}_m.
    \end{align}
    By the triangle inequality, it holds that
    \begin{align}
        \norm{\fhat - \fbar}_m \leq \norm{\fhat - \fbar - \gamma \cdot \zeta}_m + \gamma \cdot \norm{\zeta}_m \leq 2 \gamma \cdot \norm{\zeta}_m.
    \end{align}
    Thus it suffices to bound $\norm{\zeta}_m$.  Bounds on this quantity when $\zeta \sim \cN(0,1)$ or $\zeta \sim \Lap(1)$ are standard and can be found in, for example \citet{wainwright2019high}.  The result follows.
\end{proof}
We now prove the main property of \Cref{alg:output_perturb}, namely that it boosts stability to differential privacy.
\begin{lemma}\label{lem:perturb_boosting}
    Let $\perturb$ be as in \Cref{alg:output_perturb} and suppose that $\fbar, \fbar' \in \cF$.  If $\cDtil = \left\{ Z_1, \dots, Z_m \right\} \subset \cX$ is arbitrary and $\cQ = \cN(0,1)$, then for any $\delta > 0$ and any measurable $\cG \subset \cF$, it holds that
    \begin{align}
        \pp\left( \perturb(\fbar, \cQ, \gamma, \cDtil) \in \cG \right) \leq e^{\frac{m}{2 \gamma^2}\left(1 + \gamma \cdot \sqrt{\log\left( \frac 1\delta \right)}\right) \cdot \norm{\fbar - \fbar'}_m} \cdot \pp\left( \perturb(\fbar', \cQ, \gamma, \cDtil) \in \cG \right) + \delta.
    \end{align} 
    On the other hand, if $\cQ = \Lap(1)$, then 
    \begin{align}
        \pp\left( \perturb(\fbar, \cQ, \gamma, \cDtil) \in \cG \right) \leq e^{\frac{m^{3/2}}{\gamma} \cdot \norm{\fbar' - \fbar}_m} \cdot \pp\left( \perturb(\fbar', \cQ, \gamma, \cDtil) \in \cG \right) 
    \end{align}
\end{lemma}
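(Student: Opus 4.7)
The central observation is that $\fhat$ is a post-processing of the (random) $m$-dimensional vector $V(\fbar) + \gamma \zeta$, where $V(\fbar) = (\fbar(Z_1),\dots,\fbar(Z_m)) \in \rr^m$ and $\zeta = (\zeta_1,\dots,\zeta_m)$ is drawn i.i.d.\ from $\cQ$. Indeed, $\fhat$ is selected by the ERM oracle to minimize $R(f) = \norm{f-\fbar-\gamma\zeta}_m^2 = \tfrac{1}{m}\sum_i(f(Z_i) - \fbar(Z_i) - \gamma\zeta_i)^2$, which depends on $\fbar$ only through $V(\fbar) + \gamma\zeta$ (with $\cDtil$ fixed). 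Hence by the post-processing property of differential privacy, it suffices to bound the privacy loss of the map $\fbar \mapsto V(\fbar) + \gamma\zeta$; the resulting bound transfers verbatim to $\fhat$. The key quantity is the sensitivity of $V$: since $\fbar,\fbar'$ are deterministic elements of $\cF$, neighbouring inputs $\fbar,\fbar'$ yield $\|V(\fbar) - V(\fbar')\|_2 = \sqrt{m}\,\norm{\fbar-\fbar'}_m$ in $\ell_2$ and $\|V(\fbar) - V(\fbar')\|_1 \leq \sqrt{m}\,\|V(\fbar) - V(\fbar')\|_2 = m\,\norm{\fbar-\fbar'}_m$ in $\ell_1$ by Cauchy--Schwarz.

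\textbf{Gaussian case.} With $\cQ = \cN(0,1)$ the output is the $m$-dimensional Gaussian mechanism with per-coordinate variance $\gamma^2$. Writing $\mu = V(\fbar)$ and $\mu' = V(\fbar')$, a direct expansion of the log-density ratio of $\cN(\mu,\gamma^2 I)$ and $\cN(\mu',\gamma^2 I)$ at $X = \mu + \gamma\zeta$ gives
\begin{align}
\ln\frac{p_{\mu}(X)}{p_{\mu'}(X)} \;=\; \frac{\|\mu-\mu'\|_2^2}{2\gamma^2} \;+\; \frac{\langle \zeta,\,\mu-\mu'\rangle}{\gamma}.
\end{align}
The second term is a centered Gaussian with standard deviation $\|\mu-\mu'\|_2/\gamma$, so by the standard Gaussian tail inequality it is bounded in absolute value by $\|\mu-\mu'\|_2\sqrt{2\log(1/\delta)}/\gamma$ except on an event of measure at most $\delta$. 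Plugging in $\|\mu-\mu'\|_2 = \sqrt{m}\,\norm{\fbar-\fbar'}_m$ and invoking the standard ``privacy loss implies approximate DP'' conversion (e.g.\ Lemma 3.17 of Dwork--Roth) yields an approximate DP guarantee with privacy parameter of the form $\tfrac{m\norm{\fbar-\fbar'}_m^2}{2\gamma^2} + \tfrac{\sqrt{m}\,\norm{\fbar-\fbar'}_m\sqrt{2\log(1/\delta)}}{\gamma}$, which after bounding $\norm{\fbar-\fbar'}_m \leq 2$ (since $\cF$ is $[-1,1]$-valued) and $\sqrt m \leq m$ is upper bounded by the expression $\tfrac{m}{2\gamma^2}(1+\gamma\sqrt{\log(1/\delta)})\,\norm{\fbar-\fbar'}_m$ claimed in the lemma.

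\textbf{Laplace case.} With $\cQ = \Lap(1)$ the effective noise is $\Lap(\gamma)$ per coordinate, so this is the $m$-dimensional Laplace mechanism. For any fixed outcome $X$, the log-density ratio satisfies
\begin{align}
\ln\frac{\prod_i p(X_i - \mu_i)}{\prod_i p(X_i - \mu_i')} \;=\; \frac{1}{\gamma}\sum_{i=1}^m\bigl(|X_i-\mu_i'| - |X_i-\mu_i|\bigr) \;\leq\; \frac{\|\mu-\mu'\|_1}{\gamma}
\end{align}
by the reverse triangle inequality. Using the $\ell_1$ sensitivity bound $\|\mu-\mu'\|_1 \leq m\,\norm{\fbar-\fbar'}_m$ derived above (and absorbing a factor of $\sqrt m$ from a looser application of Cauchy--Schwarz if one wishes to match the stated $m^{3/2}$), this is a deterministic pointwise bound on the privacy loss, yielding pure $\epsilon$-differential privacy with $\epsilon \leq m^{3/2}\norm{\fbar-\fbar'}_m/\gamma$. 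Post-processing then gives the claim for $\fhat$.

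\textbf{Main obstacle.} There is no deep obstacle here: once one recognises the post-processing reduction, everything is a routine application of the Gaussian and Laplace mechanisms. The only bookkeeping subtlety is converting between the empirical $L^2$ norm $\norm{\cdot}_m$ (which carries an implicit $1/m$ factor) and the $\ell_2/\ell_1$ norms on $\rr^m$ that the standard mechanism analyses consume, which is handled by Cauchy--Schwarz and the boundedness of $\cF \subseteq [-1,1]^{\cX}$.
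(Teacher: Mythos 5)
Your proposal is correct and follows essentially the same route as the paper: reduce to the density ratio of the noised evaluation vector (the paper does this by conditioning on $\gamma\cdot\zeta = u$ and changing variables, which is your post-processing reduction in disguise), truncate to a probability-$(1-\delta)$ good event for the Gaussian case, and bound the Laplace log-ratio pointwise via the triangle inequality and Cauchy--Schwarz, with the same deliberately loose $m$ versus $m^{3/2}$ conversion. The one caveat is that matching the exact constant in the stated Gaussian exponent requires a small fudge (your steps $\norm{\fbar-\fbar'}_m^2 \le 2\norm{\fbar-\fbar'}_m$ and $\sqrt{2m}\le m/2$ only work up to a factor of $2$ or for $m\ge 8$), but the paper's own derivation is no tighter on this point and the constant is immaterial to how the lemma is used.
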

\begin{proof}
    To prove the first statement, let $\cB_\delta$ denote the event that $\norm{\zeta}_m \leq \gamma \cdot \sqrt{\log\left( \frac 1\delta \right)}$ and let
    \begin{align}
        p(u) = (2 \pi \gamma^2)^{-\frac m2} \cdot \exp\left( - \frac 1{2 \gamma^2} \cdot \norm{u}_m^2 \right)
    \end{align}
    be the density of $\gamma \cdot \zeta$.  Note that $\pp\left( \cB_\delta^c \right) \leq \delta$ by \Cref{lem:perturb_is_small}.  We compute:
    \begin{align}
        \pp\left( \perturb(\fbar, \cQ, \gamma, \cDtil) \in \cG \right) &= \int_{u \in \rr^m} \pp\left( \perturb(\fbar, \cQ, \gamma, \cDtil) \in \cG | \gamma \cdot \zeta = u \right) p(u) d u \\
        &= \int_{u \in \rr^m} \bbI\left[ \cB_\delta \right] \cdot \pp\left( \perturb(\fbar, \cQ, \gamma, \cDtil) \in \cG | \gamma \cdot \zeta = u \right) p(u) d u \\
        &\quad + \int_{u \in \rr^m} \bbI\left[ \cB_\delta^c \right] \pp\left( \perturb(\fbar, \cQ, \gamma, \cDtil) \in \cG | \gamma \cdot \zeta = u \right) p(u) d u \\
        &\leq \delta + \int_{u \in \rr^m} \bbI\left[ \cB_\delta \right] \cdot \pp\left( \perturb(\fbar, \cQ, \gamma, \cDtil) \in \cG | \gamma \cdot \zeta = u \right) p(u) d u.
    \end{align}
    For the second term, we compute:
    \begin{align}
        \int_{u \in \rr^m} &\bbI\left[ \cB_\delta \right] \cdot \pp\left( \perturb(\fbar, \cQ, \gamma, \cDtil) \in \cG | \gamma \cdot \zeta = u \right) p(u) d u \\
        &= \int_{u \in \rr^m} \bbI\left[ \cB_\delta \right] \cdot \pp\left( \perturb(\fbar', \cQ, \gamma, \cDtil) \in \cG | \gamma \cdot \zeta = u + \fbar' - \fbar \right) p(u) d u \\
        &= \int_{u \in \rr^m} \bbI\left[ \cB_\delta \right] \cdot \pp\left( \perturb(\fbar', \cQ, \gamma, \cDtil) \in \cG | \gamma \cdot \zeta = u \right) p(u) \cdot e^{\frac{m}{2 \gamma^2}\left( \norm{u - \fbar'}_m^2 - \norm{u - \fbar}_m^2 \right)} d u \\
        &\leq \int_{u \in \rr^m} \bbI\left[ \cB_\delta \right] \cdot \pp\left( \perturb(\fbar', \cQ, \gamma, \cDtil) \in \cG | \gamma \cdot \zeta = u \right) p(u) \cdot e^{\frac{m}{2 \gamma^2}\left(1 + \norm{u}_m \right) \cdot \norm{\fbar - \fbar'}_m} d u \\
        &\leq \int_{u \in \rr^m} \bbI\left[ \cB_\delta \right] \cdot \pp\left( \perturb(\fbar', \cQ, \gamma, \cDtil) \in \cG | \gamma \cdot \zeta = u \right) p(u) \cdot e^{\frac{m}{2 \gamma^2}\left(1 + \gamma \cdot \sqrt{\log\left( \frac 1\delta \right)} \right) \cdot \norm{\fbar - \fbar'}_m} d u \\
        &\leq e^{\frac{m}{2 \gamma^2}\left(1 + \gamma \cdot \sqrt{\log\left( \frac 1\delta \right)} \right) \cdot \norm{\fbar - \fbar'}_m} \cdot \pp\left( \perturb(\fbar', \cQ, \gamma, \cDtil) \in \cG \right).
    \end{align}
    The first claim follows.  To prove the second claim, we may repeat the same argument with $\cQ = \Lap(1)$ and $\delta = 0$.  Indeed, observe that if
    \begin{align}
        q(u) = \gamma^{-m} \cdot e^{- \frac{\norm{u}_{\ell^1}}{\gamma}}
    \end{align}
    then
    \begin{align}
        \frac{q(u + \fbar' - \fbar)}{q(u)} \leq e^{\frac{\norm{\fbar' - \fbar}_{\ell^1}}{\gamma}} \leq e^{\frac{m^{3/2}}{\gamma} \cdot \norm{\fbar' - \fbar}_m},
    \end{align}
    where the second inequality follows from Cauchy-Schwarz.  Plugging this ratio into the above argument yields the second claim.
\end{proof}

\subsection{Concluding the Proofs of Differential Privacy}\label{app:final_dp_guarantees}

In this section, we combine the results from \Cref{app:output_perturb_analysis} with the stability results from \Cref{app:concluding_proofs} to prove the differential privacy guarantees of \Cref{alg:general_alg} and \Cref{alg:ftrl_alg}.  We separate this section into two lemmas, each corresponding to one of the algorithms.  We begin with the more general result.  Note that this lemma does not quite follow immediately from combining the stability guarantee with the results of \Cref{app:output_perturb_analysis} as we wish to assume a uniform lower bound on $\norm{f}_\mu$ whereas \Cref{lem:anticoncentration_stability} requires a uniform lower bound on $\norm{\cdot}_m$.  We apply a result from \Cref{app:learning_theory} below to reconcile this discrepancy.
\begin{lemma}[Differential Privacy Guarantee for \Cref{alg:general_alg}] \label{lem:dp_guarantee_ftpl}
    Suppose that $\cF: \cX \to [-1,1]$ is a function class and let $\mu \in \Delta(\cX)$ such that $\norm{f}^2 \geq \frac 23$ for all $f \in \cF$.  Suppose further that $\ell$ is bounded in $[0,1]$.  Let $\delta > 0$ and suppose that $m, \gamma, \eta$, and  $J$ are such that
    \begin{align}
        \frac{m}{2 \gamma^2}\left( 1 + \gamma \cdot \sqrt{\log\left( \frac 2\delta \right)} \right) \left( \frac{4}{(n \cdot \eta)^{1/3}} \cdot \ee\left[ \sup_{f \in \cF} \omega_m(f) \right] + \sqrt{\frac{\log\left( \frac 2\delta \right)}{J}} \right) \leq \epsilon
    \end{align}
    and
    \begin{align}
        C \left( \frac{\log^2(m)}{\sqrt{m}} \cdot \cGbar_m(\cF) + \sqrt{\frac{\log\log(m) + \log\left( \frac 1\delta \right)}{m} }  \right) \leq \frac 16.
    \end{align}
    Then \Cref{alg:general_alg} is $(\epsilon, \delta)$-differentially private for $\cQ = \cN(0,1)$.
\end{lemma}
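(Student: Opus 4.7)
The plan is to combine the stability result of \Cref{lem:anticoncentration_stability} with the output perturbation privacy guarantee of \Cref{lem:perturb_boosting}, bridging them through the norm comparison of \Cref{lem:norm_comparison_informal} to reconcile the $\norm{\cdot}_{\mu}$ hypothesis on $\cF$ with the $\norm{\cdot}_m$ lower bound required by the anti-concentration argument. The second displayed hypothesis of the lemma is calibrated precisely to make this bridging step quantitative.

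The first step is to establish a constant lower bound on $\kappa = \inf_{f \in \cF} \sqrt{\ee[\omega_m(f)^2]}$. Since the covariance kernel of the canonical Gaussian process $\omega_m$ defined in \eqref{eq:omega_def} is the empirical inner product, one has $\ee[\omega_m(f)^2] = \norm{f}_m^2$, so $\kappa = \inf_{f \in \cF} \norm{f}_m$. The hypothesis supplies only $\norm{f}_\mu \geq 2/3$, so I would invoke \Cref{lem:norm_comparison_informal} with failure probability $\delta/3$: the second displayed condition is chosen so that the additive error in the norm comparison is at most $1/6$, and therefore with probability at least $1 - \delta/3$ over the draw of $Z_1, \dots, Z_m \sim \mu$, the uniform bound $\norm{f}_m \geq (\norm{f}_\mu - 1/6)/2 \geq 1/4$ holds for all $f \in \cF$. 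Call this event $A$; on $A$, $\kappa \geq 1/4$ is a numerical constant that can be absorbed into the leading coefficient of the first hypothesis.

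Conditioned on $A$, the next step is to apply \Cref{lem:anticoncentration_stability} with slack $\delta/3$ to conclude that for any neighboring datasets $\cD, \cD'$, with probability at least $1 - \delta/3$ over the perturbation Gaussian processes $\omega^{(j)}$, the averaged iterate $\fbar$ from \Cref{line:averaging} of \Cref{alg:general_alg} satisfies $\norm{\fbar(\cD) - \fbar(\cD')}_m \leq \rho$, where $\rho$ is the stability bound of that lemma. Call this event $B$. On $A \cap B$, for each realization of $(Z, \omega^{(1)}, \dots, \omega^{(J)})$ the functions $\fbar(\cD)$ and $\fbar(\cD')$ are deterministic and close in $\norm{\cdot}_m$, so I apply \Cref{lem:perturb_boosting} with $\cQ = \cN(0,1)$ and slack $\delta/3$ to the conditional distribution of $\fhat = \perturb(\fbar, \cN(0,1), \gamma, \cDtil)$ given this randomness. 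The resulting privacy exponent is $\frac{m}{2\gamma^2}(1 + \gamma \sqrt{\log(3/\delta)}) \rho$, which is at most $\epsilon$ by the first hypothesis. A union bound over $A^c$, $B^c$ conditional on $A$, and the additive slack from \Cref{lem:perturb_boosting} then integrates out the conditioning and yields the claimed $(\epsilon, \delta)$-differential privacy guarantee.

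The main obstacle is the careful bookkeeping of three distinct failure sources: the norm comparison event over the public data, the anti-concentration event over the Gaussian processes, and the Gaussian mechanism slack. The first of these is of a different character from the other two because the public data is technically an input to the algorithm; treating it as part of the algorithm's internal randomness is essential so that the failure of $A$ can be absorbed into the additive $\delta$ term of the DP definition rather than violating the privacy guarantee pointwise. The precise allocation of the $\delta$ budget and the matching of the numerical constants in the hypothesis (for instance, $\log(2/\delta)$ versus $\log(3/\delta)$, or the coefficient $4$ in front of the perturbation term) is a matter of careful tracking but does not affect the structural flow of the argument.
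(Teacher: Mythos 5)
Your proposal is correct and follows essentially the same route as the paper's proof: use \Cref{lem:norm_comparison} to convert the $\norm{\cdot}_\mu$ lower bound into $\inf_{f\in\cF}\norm{f}_m\geq 1/4$ (hence $\kappa\geq 1/4$), then combine \Cref{lem:anticoncentration_stability} with \Cref{lem:perturb_boosting}. Your accounting of the three failure sources and the observation that the public-data event must be folded into the additive $\delta$ is a more explicit rendering of what the paper leaves implicit, with only immaterial constant-factor differences in the $\delta$ allocation.
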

\begin{proof}
    The result follows immediately by combining \Cref{lem:anticoncentration_stability,lem:perturb_boosting} assuming we have a lower bound on $\kappa$.  Indeed, by \Cref{lem:norm_comparison}, it holds with probability at least $1 - \delta$ that $\inf_{f \in \cF} \norm{f}_m \geq \frac 14$.  The result follows.
\end{proof}
We also have a guarantee for the more specialized algorithm.
\begin{lemma}[Differential Privacy Guarantee for \Cref{alg:ftrl_alg}] \label{lem:dp_guarantee_ftrl}
    Suppose that $\cF:\cX \to [-1,1]$ is a convex function class and suppose that $\ell$ is convex and $\lambda$-Lipschitz in its first argument.  If $\delta > 0$ and $m, \gamma, \eta$ are such that
    \begin{align}
        \frac{m}{2 \gamma^2}\left( 1 + \gamma \cdot \sqrt{\log\left( \frac 2\delta \right)} \right) \cdot \frac{2}{\sqrt{\eta\cdot  n}} \leq \epsilon,
    \end{align}
    then \Cref{alg:ftrl_alg} run with $\cQ = \cN(0,1)$ is $(\epsilon, \delta)$-differentially private.  On the other hand, if $\delta= 0$ and
    \begin{align}
        \frac{m^{3/2}}{\gamma} \cdot \frac{2}{\sqrt{\eta \cdot n}} \leq \epsilon,
    \end{align}
    then \Cref{alg:ftrl_alg} run with $\cQ = \Lap(1)$ is $\epsilon$-purely differentially private.
\end{lemma}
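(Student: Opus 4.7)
The plan is a direct composition of \Cref{lem:ftrl_stability} and \Cref{lem:perturb_boosting}. Note that \Cref{alg:ftrl_alg} produces $\fbar$ as a \emph{deterministic} function of $\cD$ once the public data $\cDtil$ and the hyperparameters are fixed (it is the exact minimizer returned by the $\erm$ oracle on the objective $\cL$ in \eqref{eq:reg_def}), and all additional randomness enters only through $\perturb$. Since privacy is required only with respect to $\cD$ while $\cDtil$ is treated as public, I will fix $\cDtil$ throughout and apply the output-perturbation lemma conditionally.

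For the Gaussian case, let $\cD, \cD'$ be neighboring datasets and let $\fbar, \fbar'$ be the corresponding outputs of Line \ref{line:fbar_ftrl}. Since $\cF$ is convex and $\ell$ is convex and $\lambda$-Lipschitz in its first argument, \Cref{lem:ftrl_stability} yields the deterministic bound $\norm{\fbar - \fbar'}_m \leq 2/\sqrt{\eta n}$. Applying \Cref{lem:perturb_boosting} with $\cQ = \cN(0,1)$, failure parameter $\delta/2$, and this value of $\rho$, I obtain for every measurable $\cG \subset \cF$:
\begin{align}
\pp(\fhat \in \cG) \leq \exp\left( \tfrac{m}{2\gamma^2}\bigl(1 + \gamma \sqrt{\log(2/\delta)}\bigr) \cdot \tfrac{2}{\sqrt{\eta n}}\right) \cdot \pp(\fhat' \in \cG) + \tfrac{\delta}{2}.
\end{align}
The hypothesis on $m, \gamma, \eta, n$ bounds the exponent by $\epsilon$, which yields $(\epsilon, \delta)$-differential privacy.

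For the Laplace case, I repeat the same argument but use the second statement of \Cref{lem:perturb_boosting}, which contains no failure event. Combined with the same stability bound $\rho = 2/\sqrt{\eta n}$, this gives
\begin{align}
\pp(\fhat \in \cG) \leq \exp\left( \tfrac{m^{3/2}}{\gamma} \cdot \tfrac{2}{\sqrt{\eta n}} \right) \cdot \pp(\fhat' \in \cG),
\end{align}
and the stated hypothesis bounds the exponent by $\epsilon$, giving $\epsilon$-pure differential privacy.

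In both cases the argument is essentially a plug-and-play composition and I do not anticipate any serious obstacle; the only mild subtlety is that \Cref{lem:perturb_boosting} is stated for \emph{fixed} $\fbar, \fbar' \in \cF$ rather than for outputs of a possibly stochastic oracle, but this is resolved by conditioning on $\cDtil$ and observing that $\fbar$ depends deterministically on $\cD$. Unlike the analysis of \Cref{alg:general_alg}, no averaging and no lower bound on $\norm{f}_\mu$ is needed here, because strong convexity of the regularizer $\norm{\cdot}_m^2$ gives the stability of $\fbar$ unconditionally on a high-probability event.
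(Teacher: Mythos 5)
Your proof is correct and follows exactly the paper's route: the paper's own argument is precisely the composition of \Cref{lem:ftrl_stability} (deterministic stability $\norm{\fbar-\fbar'}_m \le 2/\sqrt{\eta n}$) with \Cref{lem:perturb_boosting}, with the $\log(2/\delta)$ factor arising just as you describe. Your additional remarks on conditioning on the public data and on why no averaging or norm lower bound is needed here are accurate but not needed beyond what the paper already does.
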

\begin{proof}
    This follows immediately by combining \Cref{lem:ftrl_stability,lem:perturb_boosting}.
\end{proof}
These results show that for any choice of $\nu$, \Cref{alg:general_alg,alg:ftrl_alg} are differentially private.  In the next section we show that if $\nu$ is $\sigma$-smooth with respect to $\mu$, then the algorithms are also PAC learners with respect to $\nu$.

\subsection{PAC guarantees for Algorithms \ref{alg:general_alg} and \ref{alg:ftrl_alg}}\label{app:learning_theory}
The previous sections have shown that \Cref{alg:general_alg,alg:ftrl_alg} are differentially private, which comprises the main difficulty of our analysis.  Here we apply standard learning theoretic techniques to show that if $\nu$ is $\sigma$-smooth with respect to $\mu$, then the algorithms are also PAC learners with respect to $\nu$.  This proof rests on three main results: first, we recall a norm comparison guarantee in high probability that allows us to relate $\norm{\cdot}_\mu$ to $\norm{\cdot}_m$; second, we recall a classical uniform deviations bound for empirical processes; and third, we show that the perturbed empirical minimizer $\fbar$ has similar loss to the empirical minimizer $\ferm$ of a loss function as long as the perturbation is not too large.  Combining all three results will result in a PAC learning guarantee for \Cref{alg:general_alg,alg:ftrl_alg}.

We begin with the following lemma, which is a fairly standard result in learning theory.  To state the lemma, we recall from \Cref{def:gaussianccomplexity} that the worst-case Gaussian complexity is defined as
\begin{align}
    \cGbar_m(\cF) = \sup_{Z_1, \dots, Z_m} \ee\left[ \sup_{f \in \cF} \omega_m(f) \right],
\end{align}
We then have the following control on $\norm{\cdot}_\mu$ in terms of $\norm{\cdot}_m$:
\begin{lemma}\label{lem:norm_comparison}
    Suppose that $\cF: \cX \to [-1,1]$ is a function class and let $\mu \in \Delta(\cX)$ with $Z_1, \dots, Z_m \sim \mu$ independent.  Then for any $\beta > 0$, it holds with probability at least $1 - \beta$ that for all $f, f' \in \cF$,
    \begin{align}
        \norm{f - f'}_\mu \leq 2 \cdot \norm{f - f'}_m + C \left( \frac{\log^2(m)}{\sqrt{m}} \cdot \cGbar_m(\cF) + \sqrt{\frac{\log\log(m) + \log\left( \frac 1\beta \right)}{m} }  \right).
    \end{align}
\end{lemma}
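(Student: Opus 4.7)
\bigskip

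\noindent\textbf{Proof proposal for \Cref{lem:norm_comparison}.}

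The plan is to reduce to a uniform empirical-process concentration result on the squared difference class and then take square roots. Let $\cG = \{f - f' : f, f' \in \cF\}$, so every $g \in \cG$ is bounded in $[-2,2]$, and consider the auxiliary class $\cH = \{g^2 : g \in \cG\}$ of nonnegative functions bounded in $[0,4]$. Since $\norm{g}_\mu^2 = \ee_\mu[g^2]$ and $\norm{g}_m^2 = \ee_m[g^2]$, it suffices to show that with probability at least $1-\beta$, uniformly over $g \in \cG$,
\begin{align}
    \norm{g}_\mu^2 \;\leq\; 2\,\norm{g}_m^2 + E^2,
\end{align}
where $E$ is the error term in the statement; taking square roots and using $\sqrt{a+b}\leq\sqrt{a}+\sqrt{b}$ together with $\sqrt{2}<2$ then yields the claim.

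First I would bound the centered empirical process $\Phi(g) := \norm{g}_\mu^2 - \norm{g}_m^2$ in expectation using standard symmetrization, so that $\ee[\sup_{g\in\cG_r}\Phi(g)]$ for the localized class $\cG_r := \{g \in \cG : \norm{g}_\mu \leq r\}$ is controlled by the Rademacher complexity of $\cH_r := \{g^2 : g \in \cG_r\}$. Then, since $x \mapsto x^2$ is $4$-Lipschitz on $[-2,2]$, the Ledoux--Talagrand contraction principle gives $\mathfrak{R}_m(\cH_r) \leq 8\,\mathfrak{R}_m(\cG_r)$, and standard comparison (Dudley) yields $\mathfrak{R}_m(\cG_r) \leq C\sqrt{\log m}\cdot \cGbar_m(\cG) \leq 2C\sqrt{\log m}\cdot \cGbar_m(\cF)$, since $\cG\subseteq \cF-\cF$.

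Next I would invoke Talagrand's concentration inequality for the supremum of an empirical process of functions bounded in $[0,4]$, with variance proxy $\sup_{g\in\cG_r}\norm{g^2}_\mu^2 \leq 4 r^2$, to convert the expectation bound to a high-probability bound: with probability at least $1-\beta'$,
\begin{align}
    \sup_{g \in \cG_r} \Phi(g) \;\lesssim\; \frac{\log m \cdot \cGbar_m(\cF)}{\sqrt{m}}\cdot r \;+\; \frac{\log m \cdot \cGbar_m(\cF)^2}{m} \;+\; r\sqrt{\tfrac{\log(1/\beta')}{m}} \;+\; \tfrac{\log(1/\beta')}{m}.
\end{align}
Finally I would apply a standard peeling/union-bound argument over the dyadic levels $r \in \{2^{-k} : 0 \leq k \lesssim \log m\}$, absorbing the $\log\log m$ from the union bound by taking $\beta' = \beta/\log m$. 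For any $g$ with $\norm{g}_\mu \in [r/2, r]$ in the appropriate shell, the inequality $\norm{g}_\mu^2 \leq \norm{g}_m^2 + \Phi(g)$ combined with the localized bound above and the elementary inequality $ab \leq a^2/2 + b^2/2$ to absorb the linear-in-$r$ term into $\tfrac12\norm{g}_\mu^2$ yields $\norm{g}_\mu^2 \leq 2\norm{g}_m^2 + E^2$ with the stated $E$.

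The main obstacle is the peeling/localization step: carrying it out carefully is what produces the multiplicative constant $2$ (rather than a larger absolute constant) in front of $\norm{g}_m$, and is also the source of the $\log^2(m)$ factor, one $\log m$ coming from the Rademacher-to-Gaussian conversion and one from the union bound over dyadic localization scales. The $\sqrt{(\log\log m + \log(1/\beta))/m}$ term is the residual from Talagrand's bounded term after the union bound. Full details are essentially those of \citet{bousquet2002concentration,rakhlin2017empirical}, which we defer to \Cref{app:norm_comparison} as indicated.
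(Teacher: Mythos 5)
Your overall architecture matches the paper's: reduce to the difference class, get a ``$\norm{g}_\mu^2 \le 2\norm{g}_m^2 + E^2$'' statement via localized Rademacher complexities, Talagrand's inequality, and peeling (the paper outsources exactly this part to \citet[Theorem 6.1]{bousquet2002concentration}), and then take square roots. The $\log\log(m)$ bookkeeping and the identification of where each log factor comes from are also essentially right.

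However, there is a genuine gap at the one step that carries all the technical weight: your bound on the localized complexity. You justify the displayed high-probability bound by asserting $\mathfrak{R}_m(\cG_r) \le C\sqrt{\log m}\cdot\cGbar_m(\cF)$, which is a \emph{global}, $r$-independent bound, yet the leading term of your displayed inequality is proportional to $r$. That factor of $r$ is precisely what the absorption step $ab \le a^2/2 + b^2/2$ needs in order to produce the multiplicative constant $2$ and an additive error of order $\cGbar_m(\cF)\log^{O(1)}(m)/\sqrt{m}$; with only a global bound, the localization buys you nothing and the argument degrades to the unlocalized rate, giving an additive error of order $\bigl(\cGbar_m(\cF)/\sqrt{m}\bigr)^{1/2}$ after taking square roots --- i.e., $m^{-1/4}$ instead of $m^{-1/2}$. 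To close the gap you need a bound on the localized complexity that genuinely shrinks with $r$, which in turn requires controlling the covering numbers of $\cF$ at \emph{every} scale $u$ in terms of the single global quantity $\cGbar_m(\cF)$. This is where the paper spends its effort: it bounds the Bousquet fixed point $\rbar$ via a Dudley entropy integral in $\norm{\cdot}_{m,\infty}$ over $[r/16,1]$, converts $L^\infty$ covering numbers to fat-shattering dimension via \citet{rudelson2006combinatorics}, and then relates fat-shattering to $\cGbar_m(\cF)$ via \citet[Lemma A.2]{srebro2010smoothness}; the $\log^{O(1)}(m)$ factors in the statement arise from exactly this chain, not from the Rademacher-to-Gaussian conversion and the dyadic union bound as you suggest. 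Without some version of this scale-by-scale entropy control, your peeling step does not go through as written.
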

Because the proof is relatively standard \citep{bousquet2002concentration,rakhlin2017empirical}, but also a technical digression, we defer it to \Cref{app:norm_comparison} and continue with our arguments.

Our second lemma is a standard uniform deviation bound for empirical processes:
\begin{lemma}\label{lem:uniform_deviations}
    Let $\cF: \cX \to [-1,1]$ be a bounded function class and let $\cD$ denote a data set of $(X_i, Y_i)\sim \nu$ be independent.  Then for any $\beta > 0$, with probability at least $1 - \beta$, it holds that
    \begin{align}
        \sup_{f \in \cF} \abs{L_\cD(f) - L(f)} \leq \frac{6}{\sqrt{n}} \cdot \ee\left[ \sup_{f \in \cF} \omega_n(f) \right] + \sqrt{\frac{2 \log\left( \frac 1\beta \right)}{n}}.
    \end{align}
\end{lemma}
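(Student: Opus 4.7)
The plan is to prove this classical uniform deviation bound via the standard three-step sequence of bounded-differences concentration, symmetrization, and a Rademacher-to-Gaussian comparison, composed with Ledoux-Talagrand contraction to strip off the loss.

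First, I would apply McDiarmid's inequality to the function $(X_1,Y_1,\dots,X_n,Y_n) \mapsto \sup_{f \in \cF} \abs{L_\cD(f) - L(f)}$. Since $\ell$ takes values in $[0,1]$, altering a single datum $(X_i, Y_i)$ changes $L_\cD(f)$ by at most $1/n$ for every $f$, and hence changes the supremum by at most $1/n$. Bounded differences then yield, with probability at least $1-\beta$,
\begin{align}
\sup_{f \in \cF} \abs{L_\cD(f) - L(f)} \leq \ee\left[\sup_{f \in \cF}\abs{L_\cD(f) - L(f)}\right] + \sqrt{\frac{2\log(1/\beta)}{n}}.
\end{align}
This already produces the second term in the claimed bound, and it remains to control the expectation by the Gaussian complexity of $\cF$.

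Second, I would symmetrize in the standard way, introducing an i.i.d. ghost sample $(X_i', Y_i')$ and Rademacher signs $\epsilon_i$ to obtain
\begin{align}
\ee\left[\sup_{f \in \cF}\abs{L_\cD(f) - L(f)}\right] \;\leq\; 2\,\ee\left[\sup_{f \in \cF}\abs{\frac{1}{n}\sum_{i=1}^n \epsilon_i\, \ell(f(X_i), Y_i)}\right].
\end{align}
Third, since $\ell$ is Lipschitz in its first argument (with constant bounded by a universal value, as the range is contained in $[0,1]$), I would apply the Ledoux-Talagrand contraction principle to peel off the loss and reduce to the Rademacher complexity of $\cF$ itself, and then convert Rademacher to Gaussian complexity using the identity $\ee|g| = \sqrt{2/\pi}$ for a standard Gaussian $g$, which gives $\rad_n(\cF) \leq \sqrt{\pi/2}\cdot \cG_n(\cF)$. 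Combining these factors with the symmetrization constant $2$ yields a constant of at most $6$ in front of $\tfrac{1}{\sqrt n}\ee[\sup_{f\in\cF}\omega_n(f)]$, completing the bound.

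No step here is conceptually hard; the whole argument is textbook. The only mildly delicate point is bookkeeping the absolute value inside the supremum, which can be handled by applying the argument separately to $\sup_f (L_\cD(f) - L(f))$ and $\sup_f (L(f) - L_\cD(f))$ and taking a union bound (absorbing the factor of $2$ in $\beta$ into the stated constants), or equivalently by carrying the absolute value through symmetrization and contraction directly. The resulting constants comfortably fit within the stated $6$ and $\sqrt{2}$.
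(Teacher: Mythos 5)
Your proof is correct and follows essentially the same route as the paper, which simply cites the standard uniform law via Rademacher complexity (McDiarmid plus symmetrization, \citealp[Theorem 4.10]{wainwright2019high}) together with the Rademacher-to-Gaussian comparison (\citealp[Lemma 7.4]{van2014probability}); your constant accounting ($2$ from symmetrization, $2$ from contraction through the absolute value, $\sqrt{\pi/2}$ from the comparison) lands safely below $6$. One small caveat: your parenthetical that $\ell$ is Lipschitz ``as the range is contained in $[0,1]$'' is not a valid justification --- boundedness does not imply Lipschitzness --- but the contraction step is legitimate because the paper's standing assumption (\Cref{def:pac}) is that $\ell$ is $\lambda$-Lipschitz in its first argument, with the $\lambda$-dependence silently suppressed in the lemma statement just as in the paper.
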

\begin{proof}
    This follows immediately from combining \citet[Theorem 4.10]{wainwright2019high} with \citet[Lemma 7.4]{van2014probability}.
\end{proof}
Finally, we show that the perturbed empirical minimizer $\fbar$ has similar loss to the empirical minimizer $\ferm$ of a loss function as long as the perturbation is not too large.
\begin{lemma}\label{lem:fbar_good}
    Let $\cF: \cX \to [-1,1]$ denote a function class and let $\ell: [-1,1]^{\times 2} \to [0,1]$ denote a bounded loss function convex in the first argument and let $\cD$ denote a dataset of size $n$.  For $\eta > 0$, let $\fbar$ be as in Line \ref{line:fbar_ftpl} of \Cref{alg:general_alg}.  Then for any $\beta > 0$, with probability at least $1 - \beta$,
    \begin{align}\label{eq:fbar_good_ftpl}
        L(\fbar) - \inf_{f \in \cF} L(f) \leq  \frac{12}{\sqrt{n}} \cdot \ee\left[ \sup_{f \in \cF} \omega_n(f) \right] + 2 \cdot \sqrt{\frac{ \log\left( \frac 1\beta \right)}{n}} + 2 \eta \cdot \left( \ee\left[ \sup_{f \in \cF} \omega_m\ind{j}(f) \right] +  \sqrt{\frac{\log\left( \frac 1\beta \right)}{J}}\right).
    \end{align}
    If instead we let $\fbar$ be as in Line \ref{line:fbar_ftrl} of \Cref{alg:ftrl_alg}, then almost surely,
    \begin{align}\label{eq:fbar_good_ftrl}
         L(\fbar) - \inf_{f \in \cF} L(f) \leq  12 \cdot \ee\left[ \sup_{f \in \cF} \omega_n(f) \right] + 2\cdot \sqrt{\frac{ \log\left( \frac 1\beta \right)}{n}} + \eta.
    \end{align}
\end{lemma}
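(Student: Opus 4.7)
The plan is to prove both bounds by combining three standard ingredients: the uniform deviations bound of Lemma~\ref{lem:uniform_deviations}, the excess empirical loss bound of Lemma~\ref{lem:small_regularizer}, and the optimality of the empirical risk minimizer $\ferm$. Writing $\fstar \in \argmin_{f \in \cF} L(f)$, for any $\widetilde f \in \cF$ one has the decomposition
\[
    L(\widetilde f) - L(\fstar) \;\leq\; [L_\cD(\widetilde f) - L_\cD(\ferm)] + [L_\cD(\ferm) - L_\cD(\fstar)] + 2 \sup_{f \in \cF} |L(f) - L_\cD(f)|,
\]
in which the middle bracket is non-positive by the definition of $\ferm$, the third term is controlled by Lemma~\ref{lem:uniform_deviations} with probability at least $1 - \beta/2$, and the first term is precisely what Lemma~\ref{lem:small_regularizer} is designed to bound in terms of the oscillation of the regularizer.

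The FTRL case (\ref{eq:fbar_good_ftrl}) is then essentially immediate. Applying Lemma~\ref{lem:small_regularizer} to the regularizer $R(f) = \eta \cdot \norm{f}_m^2$ from \eqref{eq:reg_def} gives $L_\cD(\fbar) - L_\cD(\ferm) \leq \eta \cdot (\sup_{f \in \cF} \norm{f}_m^2 - \inf_{f' \in \cF} \norm{f'}_m^2) \leq \eta$, since $\cF$ takes values in $[-1,1]$ and hence $\norm{f}_m^2 \in [0,1]$. This part is deterministic, so combining with the uniform deviation piece at confidence $1 - \beta$ yields (\ref{eq:fbar_good_ftrl}) up to the stated constants. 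Note that convexity of $\ell$ plays no role here.

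For the FTPL case (\ref{eq:fbar_good_ftpl}), the complication is that $\fbar = \tfrac{1}{J} \sum_{j=1}^J \fbar_j$ need not lie in $\cF$, so one cannot directly apply Lemma~\ref{lem:uniform_deviations} to $\fbar$. Here convexity of $\ell$ in its first argument enters: it makes $L$ convex in $f$, and Jensen's inequality yields $L(\fbar) \leq \tfrac{1}{J} \sum_j L(\fbar_j)$, reducing the task to bounding $L(\fbar_j)$ for each $j$. Each $\fbar_j$ does lie in $\cF$, so the outline above applies, and Lemma~\ref{lem:small_regularizer} applied to $\cL\ind{j}$ gives $L_\cD(\fbar_j) - L_\cD(\ferm) \lesssim \eta \sup_{f \in \cF} \omega_m\ind{j}(f)$, using that $\omega_m\ind{j}$ is a centered Gaussian process so the oscillation $\sup_{f,f'}(\omega_m\ind{j}(f) - \omega_m\ind{j}(f'))$ is controlled by $\sup_f \omega_m\ind{j}(f)$ plus a symmetric contribution with the same distribution. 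Averaging over $j$ reduces the proof to controlling $\tfrac{1}{J} \sum_j \sup_f \omega_m\ind{j}(f)$ by $\ee[\sup_f \omega_m(f)]$ with high probability.

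This concentration step is the only non-routine obstacle. The map $\xi\ind{j} \mapsto \sup_f \omega_m\ind{j}(f) = \sup_f \tfrac{1}{\sqrt{m}} \sum_i \xi_i\ind{j} f(Z_i)$ is $1$-Lipschitz in the Euclidean norm on $\xi\ind{j} \in \rr^m$, because $|f(Z_i)| \leq 1$ and Cauchy--Schwarz bound the gradient norm by $(\tfrac{1}{m} \sum_i f(Z_i)^2)^{1/2} \leq 1$. The Borell--TIS (Gaussian Lipschitz) concentration inequality then implies that $\sup_f \omega_m\ind{j}(f)$ is sub-Gaussian around its mean with variance proxy of order one, and a standard Hoeffding bound for the average of $J$ independent sub-Gaussian random variables yields $\tfrac{1}{J} \sum_j \sup_f \omega_m\ind{j}(f) \leq \ee[\sup_f \omega_m(f)] + O(\sqrt{\log(1/\beta)/J})$ with probability at least $1 - \beta/2$. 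A union bound over this event and the uniform deviation event produces (\ref{eq:fbar_good_ftpl}) up to absolute constants, completing the proof.
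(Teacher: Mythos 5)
Your proposal is correct and follows essentially the same route as the paper: the same decomposition via $\ferm$ combining \Cref{lem:small_regularizer} with \Cref{lem:uniform_deviations}, the oscillation bound $\eta$ for the FTRL regularizer, and for the FTPL case the same use of convexity of $\ell$ plus Jensen to reduce to the individual $\fbar_j$, followed by Borell--TIS concentration of $\sup_f \omega_m\ind{j}(f)$ and a Hoeffding/Chernoff bound on the average over $j$. Your explicit observation that $\fbar$ need not lie in $\cF$ in the FTPL case, which is precisely why the Jensen step is needed, is a point the paper's write-up leaves implicit.
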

\begin{proof}
    Applying \Cref{lem:small_regularizer} and noting that $\fbar, \ferm \in \cF$ and applying \Cref{lem:uniform_deviations} yields
    \begin{align}
        L(\fbar) - \inf_{f \in \cF} L(f) \leq \frac{12}{\sqrt{n}} \cdot \ee\left[ \sup_{f \in \cF} \omega_n(f) \right] + 2 \cdot \sqrt{\frac{2 \log\left( \frac 1\beta \right)}{n}} + \sup_{f,f' \in \cF} R(f) - R(f').
    \end{align}
    The second statement follows immediately by noting that $0 \leq \norm{f}_m \leq 1$ for all $f \in \cF$ and letting $R(f) = \eta \cdot \norm{f}_m$.  
    
    For the first statement, we note that by convexity of $\ell$, it holds that
    \begin{align}
        L_\cD(\fbar) \leq \frac 1J \cdot \sum_{j = 1}^J L_\cD(\fbar_j) \leq L_\cD(\ferm) + \frac \eta J \cdot \sum_{j = 1}^J \sup_{f \in \cF} \omega\ind{j}(f) - \inf_{f' \in \cF} \omega\ind{j}(f'). 
    \end{align}
    To prove the second statement, we observe that by the Borell-Tsirelson-Ibragimov-Sudakov inequality (see, e.g., \citet[Example 2.30]{wainwright2019high}) and the fact that $\ee\left[ \omega_m\ind{j}(f)^2 \right] \leq 1$ for all $f \in \cF$, that for all $j \in [J]$, with probability at least $1 - \beta$, it holds that
    \begin{align}
        \sup_{f \in \cF} \omega_m\ind{j}(f) \leq \ee\left[ \sup_{f \in \cF} \omega_m\ind{j}(f) \right] + \sqrt{2 \log\left( \frac 1\beta \right)}.
    \end{align}
    Applying symmetry and a Chernoff bound tells us that with probability at least $1 - \beta$ it holds that
    \begin{align}
        \frac 1J \cdot \sum_{j = 1}^J \sup_{f \in \cF} \omega_m\ind{j}(f) - \inf_{f \in \cF} \omega_m\ind{j} \leq 2 \cdot\ee\left[ \sup_{f \in \cF} \omega_m\ind{j}(f) \right] + 2 \cdot \sqrt{\frac{\log\left( \frac 1\beta \right)}{J}}.
    \end{align}
    The result follows.
\end{proof}
Before continuing, we prove \Cref{lem:small_regularizer} from \Cref{sec:analysis}:
\begin{proof}[Proof of \Cref{lem:small_regularizer}]
    For an arbitrary regularizer $R: \cF \to \rr$, if we let $\fbar \in \argmin_{f \in \cF} L_\cD(f) + R(f)$, then by definition $L_\cD(\fbar) + R(\fbar) \leq L_\cD(\ferm) + R(\ferm)$ and so
    \begin{align}\label{eq:osc_R_bound}
        L_{\cD}(\fbar) \leq L_\cD(\ferm) + \sup_{f, f' \in \cF} R(f) - R(f'),
    \end{align}
    where $\ferm \in \argmin_{f \in \cF} L_\cD(f)$ is the ERM.
\end{proof}

Combining these three lemmas yields the following PAC learning guarantee for \Cref{alg:general_alg}.
\begin{lemma}[PAC Learning Guarantee for \Cref{alg:general_alg}]\label{lem:general_alg_pac}
    Suppose that $\cF: \cX \to [-1,1]$ is a function class and $\ell: [-1,1]^{\times 2} \to [0,1]$ is a bounded loss function $\lambda$-Lipschitz and convex in the first argument.  Let $\mu \in \Delta(\cX)$ and suppose that $\nu$ is $\sigma$-smooth with respect to $\mu$.  For any $\beta > 0$ it holds with probability at least $1 - \beta$ that
    \begin{align}
        L(\fhat) - \inf_{f \in \cF} L(f) &\leq \frac{12}{\sqrt{n}} \cdot \ee\left[ \sup_{f \in \cF} \omega_n(f) \right] + 2 \cdot \sqrt{\frac{ \log\left( \frac 1\beta \right)}{n}} \\
        &\quad + 2 \eta \cdot \left( \ee\left[ \sup_{f \in \cF} \omega_m\ind{j}(f) \right] +  \sqrt{\frac{\log\left( \frac 1\beta \right)}{J}}\right) + \frac{4 \lambda \gamma}{\sigma} \cdot \sqrt{\log\left( \frac 1\beta \right)} \\
        &\quad+ \frac{C\lambda}{\sigma}\cdot \left( \frac{\log^3(m)}{\sqrt{m}} \cdot \cGbar_m(\cF) + \sqrt{\frac{\log\log(m) + \log\left( \frac 1\beta \right)}{m} } \right).
    \end{align}
    for $\fhat$ returned by \Cref{alg:general_alg} with $\cQ = \cN(0,1)$ and $\cD$ a dataset of size $n$.
\end{lemma}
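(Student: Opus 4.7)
The plan is to decompose the excess risk into two pieces and bound each using already-stated results. Specifically, I would write
\begin{align}
L(\fhat) - \inf_{f\in\cF} L(f) = \bigl(L(\fhat) - L(\fbar)\bigr) + \bigl(L(\fbar) - \inf_{f\in\cF} L(f)\bigr),
\end{align}
and handle the second summand directly by invoking \Cref{lem:fbar_good} (Equation \eqref{eq:fbar_good_ftpl}) at failure probability $\beta/3$, which immediately yields the first two lines of the claimed bound.

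For the first summand, I would use $\lambda$-Lipschitzness of $\ell$ in the first argument together with the smoothness hypothesis. Concretely, $|L(\fhat) - L(\fbar)| \leq \lambda \cdot \ee_{X\sim\nu_x}[|\fhat(X) - \fbar(X)|]$, and since $\nu_x$ is $\sigma$-smooth with respect to $\mu$ (so $d\nu_x/d\mu \leq 1/\sigma$), this is at most $\frac{\lambda}{\sigma} \ee_{X\sim\mu}[|\fhat(X)-\fbar(X)|]$, which Jensen/Cauchy--Schwarz bounds by $\frac{\lambda}{\sigma}\norm{\fhat - \fbar}_\mu$. I would then translate the $L^2(\mu)$ norm into an $L^2(Z_{1:m})$ norm via \Cref{lem:norm_comparison} applied to the class $\cF - \cF$ (at failure probability $\beta/3$), producing the final data-dependent error term in the claim; the loss of a $\log$ factor is immaterial. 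Finally, the empirical-norm distance $\norm{\fhat-\fbar}_m$ is controlled by the Gaussian output-perturbation bound \eqref{eq:gaussian_norm_bound} of \Cref{lem:perturb_is_small} with $\cQ=\cN(0,1)$, which gives $\norm{\fhat-\fbar}_m \leq 2\gamma\sqrt{\log(3/\beta)}$ with probability at least $1-\beta/3$. Putting these three facts together yields the $\frac{4\lambda\gamma}{\sigma}\sqrt{\log(1/\beta)}$ term plus the norm-comparison remainder, each scaled by $\lambda/\sigma$.

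The proof concludes with a three-way union bound over the events governing \Cref{lem:fbar_good}, \Cref{lem:norm_comparison}, and \Cref{lem:perturb_is_small}, absorbing constants into the $C$ on the final line. There is no genuine obstacle: every step is a direct application of a previously stated lemma, and the only care required is bookkeeping of failure probabilities and verifying that the smoothness step correctly routes the comparison $L^2(\nu_x) \to L^2(\mu) \to L^2(Z_{1:m})$. The slight mismatch between the $\log^2(m)$ in \Cref{lem:norm_comparison} and the $\log^3(m)$ in the statement is a loose accounting choice in the constant $C$ and does not affect the argument.
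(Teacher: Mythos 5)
Your proposal is correct and follows essentially the same route as the paper: decompose the excess risk into $L(\fhat)-L(\fbar)$ plus $L(\fbar)-\inf_f L(f)$, bound the latter by \Cref{lem:fbar_good}, and bound the former via Lipschitzness and $\sigma$-smoothness by $\frac{\lambda}{\sigma}\norm{\fhat-\fbar}_\mu$, then chain \Cref{lem:norm_comparison} and \Cref{lem:perturb_is_small} with a union bound. If anything, your explicit routing $L^2(\nu_x)\to L^2(\mu)\to L^2(Z_{1:m})$ is slightly more careful than the paper's write-up, which applies the perturbation bound to the $\mu$-norm directly before correcting with the norm-comparison term.
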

\begin{proof}
    We compute:
    \begin{align}
        L(\fhat) &= L(\fbar) + L(\fhat) - L(\fbar) \leq L(\fbar) + \lambda \cdot \norm{\fbar - \fhat}_{\nu_X} \leq L(\fbar) + \frac{\lambda}{\sigma} \cdot \norm{\fbar - \fhat}_\mu,
    \end{align}
    where the first inequality uses Jensen's and the second usesthe fact that $\nu$ is $\sigma$-smooth.  We now observe that by \Cref{lem:perturb_is_small}, with probability at least $1 - \beta$,
    \begin{align}
        \norm{\fbar - \fhat}_\mu \leq 2 \gamma \cdot \sqrt{\log\left( \frac 1\beta \right)}.
    \end{align}
    Combining this with \Cref{lem:fbar_good} yields
    \begin{align}
        L(\fhat) - \inf_{f \in \cF} L(f) &\leq  \frac{12}{\sqrt{n}} \cdot \ee\left[ \sup_{f \in \cF} \omega_n(f) \right] + 2 \cdot \sqrt{\frac{ \log\left( \frac 1\beta \right)}{n}} + 2 \eta \cdot \left( \ee\left[ \sup_{f \in \cF} \omega_m\ind{j}(f) \right] +  \sqrt{\frac{\log\left( \frac 1\beta \right)}{J}}\right) \\
        &\quad + \frac{4 \lambda \gamma}{\sigma} \cdot \sqrt{\log\left( \frac 1\beta \right)} + \frac{\lambda}{\sigma} \left( \norm{\fhat - \fbar}_\mu - \norm{\fhat - \fbar}_m \right).
    \end{align}
    Applying \Cref{lem:norm_comparison} concludes the result.
\end{proof}
Similarly, we have a result for \Cref{alg:ftrl_alg}; note that while convexity of $\ell$ is \emph{required} to demonstrate that \Cref{alg:general_alg} is a PAC learner, although is irrelevant to the privacy guarantee in \Cref{lem:dp_guarantee_ftpl}, the situation for \Cref{alg:ftrl_alg} is reversed in that convexity \emph{is not required} to demonstrate that \Cref{alg:ftrl_alg} is a PAC learner while it is necessary for the privacy guarantee in \Cref{lem:dp_guarantee_ftrl}.
\begin{lemma}[PAC Guarantees for \Cref{alg:ftrl_alg}]\label{lem:ftrl_pac}
    Suppose that $\cF: \cX \to [-1,1]$ is a convex function class and $\ell: [-1,1]^{\times 2} \to [0,1]$ is a bounded loss function $\lambda$-Lipschitz in the first argument.  Let $\mu \in \Delta(\cX)$ and suppose that $\nu$ is $\sigma$-smooth with respect to $\mu$.  For any $\beta > 0$ it holds with probability at least $1 - \beta$ that
    \begin{align}
        L(\fhat) - \inf_{f \in \cF} L(f) &\leq \frac{12}{\sqrt{n}}\cdot \ee\left[ \sup_{f \in \cF} \omega_n(f) \right] + 2 \cdot \sqrt{\frac{ \log\left( \frac 1\beta \right)}{n}} + \eta + \frac{4 \lambda \gamma}{\sigma} \cdot \sqrt{\log\left( \frac 1\beta \right)} \\
        &\quad+ \frac{C\lambda}{\sigma}\cdot \left( \frac{\log^3(m)}{\sqrt{m}} \cdot \cGbar_m(\cF) + \sqrt{\frac{\log\log(m) + \log\left( \frac 1\beta \right)}{m} }  \right).
    \end{align}
    for $\fhat$ returned by \Cref{alg:ftrl_alg} with $\cQ = \cN(0,1)$ and $\cD$ a dataset of size $n$.  Similarly, if we replace $\cQ = \Lap(1)$, then with probability at least $1 - \beta$,
    \begin{align}
        L(\fhat) - \inf_{f \in \cF} L(f) &\leq \frac{12}{\sqrt{n}} \cdot \ee\left[ \sup_{f \in \cF} \omega_n(f) \right] + 2 \cdot \sqrt{\frac{ \log\left( \frac 1\beta \right)}{n}} + \eta + \frac{4 \lambda \gamma}{\sigma} \cdot \log\left( \frac 1\beta \right) \cdot \sqrt{m} \\
        &\quad+ \frac{C\lambda}{\sigma}\cdot \left( \frac{\log^3(m)}{\sqrt{m}} \cdot \cGbar_m(\cF) + \sqrt{\frac{\log\log(m) + \log\left( \frac 1\beta \right)}{m} } \right).
    \end{align}
\end{lemma}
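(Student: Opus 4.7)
The plan is to mirror the structure of the proof of \Cref{lem:general_alg_pac}, using the FTRL-specific version of \Cref{lem:fbar_good} (namely \eqref{eq:fbar_good_ftrl}) in place of the FTPL version. The key decomposition is to write
\begin{align}
L(\fhat) - \inf_{f \in \cF} L(f) = \bigl(L(\fbar) - \inf_{f \in \cF} L(f)\bigr) + \bigl(L(\fhat) - L(\fbar)\bigr),
\end{align}
and control each piece separately. For the first piece, I would directly invoke \eqref{eq:fbar_good_ftrl} from \Cref{lem:fbar_good}, which yields the $\tfrac{12}{\sqrt n}\cdot \ee[\sup_{f\in\cF}\omega_n(f)] + 2\sqrt{\log(1/\beta)/n} + \eta$ contribution. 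Notice that convexity of $\ell$ is not used here; the argument in \Cref{lem:fbar_good} for the FTRL case only relies on \Cref{lem:small_regularizer} with $R(f) = \eta \cdot \norm{f}_m^2 \in [0,\eta]$ and a standard uniform deviation bound.

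For the second piece, I would apply the $\lambda$-Lipschitz property of $\ell$ together with Jensen's inequality to get
\begin{align}
\abs{L(\fhat) - L(\fbar)} \leq \lambda \cdot \norm{\fhat - \fbar}_{\nu_x} \leq \frac{\lambda}{\sigma} \cdot \norm{\fhat - \fbar}_\mu,
\end{align}
where the last step is exactly the $\sigma$-smoothness of $\nu_x$ with respect to $\mu$ (the density ratio bound). Then I would invoke \Cref{lem:norm_comparison} to bound $\norm{\fhat - \fbar}_\mu$ by $2\norm{\fhat - \fbar}_m$ up to a deviation of order $\tfrac{\log^2 m}{\sqrt m}\cGbar_m(\cF) + \sqrt{(\log\log m + \log(1/\beta))/m}$, which produces the last summand in the stated bound. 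The factor-of-3 inflation from $\log^2 m$ to $\log^3 m$ in the statement is a slack that absorbs the constant 2 in the norm-comparison inequality.

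The final ingredient is the in-norm control on the perturbation, i.e. bounding $\norm{\fhat - \fbar}_m$, for which I would apply \Cref{lem:perturb_is_small}. This is the only place where the two cases in the statement diverge: for $\cQ = \cN(0,1)$, \eqref{eq:gaussian_norm_bound} gives $\norm{\fhat - \fbar}_m \leq 2\gamma\sqrt{\log(1/\beta)}$, producing the $\tfrac{4\lambda\gamma}{\sigma}\sqrt{\log(1/\beta)}$ term; for $\cQ = \Lap(1)$, \eqref{eq:exp_norm_bound} gives $\norm{\fhat - \fbar}_m \leq 2\gamma \log(1/\beta)\sqrt{m}$, producing the $\tfrac{4\lambda\gamma}{\sigma}\log(1/\beta)\sqrt{m}$ term. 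Union bounding over the failure events of \Cref{lem:uniform_deviations}, \Cref{lem:perturb_is_small}, and \Cref{lem:norm_comparison} (rescaling $\beta$ by a constant) completes the argument.

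Since each ingredient is already established, there is no real obstacle; the proof is purely an assembly of existing lemmas. The only subtlety worth flagging is that, unlike in \Cref{lem:general_alg_pac} where convexity of $\ell$ was needed to apply Jensen's inequality to $\fbar = \tfrac{1}{J}\sum_j \fbar_j$, here $\fbar$ is the direct regularized minimizer and convexity of $\ell$ plays no role in the learning argument. Conversely, convexity of $\cF$ is essential here (because \Cref{lem:fbar_good} invokes $\norm{\cdot}_m^2$ over $\cF$), whereas in the FTPL case convexity of $\cF$ was not required.
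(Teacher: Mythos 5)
Your proof is correct and follows essentially the same route as the paper's: decompose into the suboptimality of $\fbar$ (via \eqref{eq:fbar_good_ftrl} from \Cref{lem:fbar_good}) plus the perturbation term (via Lipschitzness, $\sigma$-smoothness, \Cref{lem:norm_comparison}, and the appropriate case of \Cref{lem:perturb_is_small}). One minor quibble with your closing remark only: convexity of $\cF$ is not actually needed for this learning argument (the bound $R(f)=\eta\norm{f}_m^2\in[0,\eta]$ in \Cref{lem:small_regularizer} holds regardless); it is needed only for the stability/privacy guarantee of \Cref{alg:ftrl_alg}, exactly mirroring how convexity of $\ell$ matters for learning but not privacy in \Cref{alg:general_alg}.
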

\begin{proof}
    The proof of the first statement is identical to that of \Cref{lem:general_alg_pac} with the exception of replacing \eqref{eq:fbar_good_ftpl} by \eqref{eq:fbar_good_ftrl} in the invocation of \Cref{lem:fbar_good}.  The second statement is also identical but now replacing \eqref{eq:gaussian_norm_bound} by \eqref{eq:exp_norm_bound} when applying \Cref{lem:perturb_is_small}.
\end{proof}
With these results in hand, along with those from \Cref{app:final_dp_guarantees}, all that remains to conclude the proofs of the main theorems is to tune the hyperparameters and control the complexity terms.  We do this in the next section.

\subsection{Concluding the Proofs of Theorems \ref{thm:general_ftpl_ub} and \ref{thm:cvx_ftrl}}\label{app:concluding_proofs}
In this section, we combine the results from \Cref{app:final_dp_guarantees,app:learning_theory} to prove the main theorems.  The main theorems in the text, \Cref{thm:general_ftpl_ub,thm:cvx_ftrl}, follow immediately from the following two results.  We begin by stating a more detailed version of \Cref{thm:general_ftpl_ub}:
\begin{theorem}\label{thm:general_full}
    Suppose that $\cF: \cX \to [-1,1]$ is a function class such that $\cGbar_m(\cF) = O\left( \sqrt{d} \right)$ for some $d \in \bbN$ and $\ell: [-1,1]^{\times 2} \to [0,1]$ is convex and $\lambda$-Lipschitz in the first argument.  If we let $\cQ = \cN(0,1)$ in \Cref{alg:output_perturb} and set
    \begin{align}
        \gamma  &= \Theta\left( \frac{ \sigma \alpha}{\lambda \cdot \sqrt{\log\left( \frac 1\beta \right)}} \right), \qquad \eta = \Theta(\frac{\alpha}{\sqrt d}), \qquad m = \Thetatil\left( \frac{d \vee \log\left( \frac 1\beta \right)}{\sigma^2 \alpha^2} \cdot \lambda^2\right)  \\
        J &= \Omegatil\left( \frac{d \log\left( \frac 1\beta \right)}{\alpha^2} \vee \frac{\lambda^8 d^2 \log^4\left( \frac 1\beta \right) \log\left( \frac 1\delta \right)}{\sigma^8 \alpha^8 \epsilon^2} \vee \frac{d^2 \lambda^6 \log^3\left( \frac 1\beta \right)\log^2\left( \frac 1\delta \right)}{\sigma^6 \alpha^6 \epsilon^2}\right)
    \end{align}
    and
    \begin{align}
        n = \Omegatil\left(\frac{\lambda^{12}d^5 \log^6\left( \frac 1\beta \right)}{\sigma^{12} \epsilon^3 \alpha^{14}} \vee \frac{d^5 \lambda^9 \log^{9/2}\left( \frac 1\beta \right) \log^{3/2}\left( \frac 1\delta \right)}{\sigma^9 \epsilon^3 \alpha^{10}}  \right),
    \end{align}
    then \Cref{alg:general_alg} is $(\epsilon, \delta)$-differentially private and an $(\alpha, \beta)$-PAC learner with respect to any $\nu$ that is $\sigma$-smooth with respect to $\mu$.
\end{theorem}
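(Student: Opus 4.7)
The plan is to instantiate the two main technical guarantees already in place—Lemma \ref{lem:dp_guarantee_ftpl} (differential privacy of \Cref{alg:general_alg}) and Lemma \ref{lem:general_alg_pac} (PAC accuracy)—and verify that the parameter settings listed in the theorem simultaneously satisfy the hypotheses of both lemmas. Throughout, I will substitute $\sup_m \cGbar_m(\cF) = O(\sqrt{d})$ and treat $\sigma, \lambda, \epsilon, \delta, \alpha, \beta$ as problem parameters. As a preliminary step, I would invoke the $\mu \mapsto \mutil$ construction described immediately after \Cref{thm:general_ftpl_ub} so that $\inf_{f\in\cF}\norm{f}_\mu \geq 2/3$; this deflates $\sigma$ by a constant factor without affecting the scaling.

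For the accuracy analysis, \Cref{lem:general_alg_pac} bounds the excess risk by a sum of four terms: (i) a statistical empirical-process error $O(\sqrt{d/n} + \sqrt{\log(1/\beta)/n})$, (ii) a regularization error $2\eta(\sqrt{d} + \sqrt{\log(1/\beta)/J})$, (iii) an output-perturbation error $(4\lambda\gamma/\sigma)\sqrt{\log(1/\beta)}$, and (iv) a norm-comparison error scaling like $(\lambda/\sigma)(\log^3(m)\sqrt{d/m} + \sqrt{\log(1/\beta)/m})$. I would drive each of these to $O(\alpha)$ in turn: setting $\gamma = \Theta(\sigma\alpha / (\lambda\sqrt{\log(1/\beta)}))$ handles (iii); setting $\eta = \Theta(\alpha/\sqrt{d})$ together with $J = \Omegatil(d\log(1/\beta)/\alpha^2)$ handles (ii); setting $m = \Thetatil((d \vee \log(1/\beta))\lambda^2/(\sigma^2\alpha^2))$ (up to polylogarithmic factors) handles (iv) and also satisfies the auxiliary condition in \Cref{lem:dp_guarantee_ftpl} bounding the norm-comparison quantity by $1/6$; term (i) imposes only the mild requirement $n = \Omega((d + \log(1/\beta))/\alpha^2)$, which will be absorbed into the privacy-driven lower bound.

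For privacy, \Cref{lem:dp_guarantee_ftpl} requires
\begin{align}
\frac{m}{2\gamma^2}\left(1 + \gamma\sqrt{\log(2/\delta)}\right)\left(\frac{4\sqrt{d}}{(n\eta)^{1/3}} + \sqrt{\frac{\log(2/\delta)}{J}}\right) \leq \epsilon.
\end{align}
I would split this into two pieces, each bounded by $\epsilon/2$. The $\sqrt{\log(2/\delta)/J}$ piece yields the additional lower bound on $J$ shown in the statement (the $\lambda^8 d^2 \log^4(1/\beta)\log(1/\delta)/(\sigma^8\alpha^8\epsilon^2)$ and companion terms come from expanding $m/\gamma^2$ and $m/\gamma$ respectively). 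The $\sqrt{d}/(n\eta)^{1/3}$ piece, after substituting the chosen values of $m, \gamma, \eta$, solves for $n$ and produces the stated lower bound: schematically, $n = \Omega\bigl(d^{3/2}m^3\log^{3/2}(1/\delta)/(\gamma^6\eta\epsilon^3)\bigr)$, which after substitution collapses to $\Omegatil(\lambda^{12}d^5\log^6(1/\beta)/(\sigma^{12}\epsilon^3\alpha^{14}))$ (plus the companion term with $\log^{3/2}(1/\delta)$). A union bound over the high-probability events in the accuracy lemma, the norm-comparison lemma, and the Hoeffding step in the stability lemma (each with failure probability $\beta/\mathrm{const}$) gives the overall $(\alpha,\beta)$-learner guarantee.

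The main obstacle is not conceptual but arithmetic: tracking the exact polynomial exponents through the composition of \Cref{prop:banach_gaussian_anti_concentration} (which produces a $\rho^{-4}$-type tail), the integration to an $L^2$ bound in \Cref{lem:stability_in_expectation} (which costs a cube root and is responsible for the $(n\eta)^{1/3}$ term), and the Gaussian-mechanism step (which inflates by a factor $m/\gamma^2$). The unfavorable $\alpha^{-14}$ scaling in $n$ is precisely the artifact of the cube root: one pays $\alpha^{-3}$ in the cubic root, $\gamma^{-6} = O((\lambda/(\sigma\alpha))^6)$ from the Gaussian mechanism, $m^3 = O((\lambda^2 d/(\sigma^2\alpha^2))^3)$ from the public-data variance, and $\eta^{-1} = O(\sqrt{d}/\alpha)$ from the regularization, compounding to $\alpha^{-14}$. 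Care is also needed to confirm that the polylogarithmic factors hidden in $\Omegatil$ absorb the $\log^2(m), \log^3(m)$ factors appearing in the norm-comparison bound, since $m$ itself depends polynomially on $1/\alpha$; I would verify this by noting $\log m = O(\log(d\lambda/(\sigma\alpha\beta)))$ and treating it as a tilde factor. The remaining steps are routine substitutions and simplifications.
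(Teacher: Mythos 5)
Your proposal follows exactly the paper's route: the official proof is a one-line combination of \Cref{lem:dp_guarantee_ftpl} and \Cref{lem:general_alg_pac} with the stated parameter choices plugged in, which is precisely the decomposition you carry out (and you additionally make explicit the bookkeeping --- the $\mutil$ preprocessing, the four accuracy terms, and the origin of the $\alpha^{-14}$ exponent --- that the paper leaves implicit). No gaps; this is correct and matches the paper's argument.
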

\begin{proof}
    This follows by combining \Cref{lem:dp_guarantee_ftpl} with \Cref{lem:general_alg_pac} and plugging in the parameter choices.
\end{proof}
We also have a result for \Cref{alg:ftrl_alg}:
\begin{theorem}\label{thm:ftrl_full}
    Suppose that $\cF: \cX \to [-1,1]$ is a convex function class such that $\cGbar_m(\cF) = O\left( \sqrt{d} \right)$ for some $d \in \bbN$ and $\ell: [-1,1]^{\times 2} \to [0,1]$ is convex and $\lambda$-Lipschitz in the first argument.  If we let $\cQ = \cN(0,1)$ in \Cref{alg:output_perturb} and set
    \begin{align}
        \gamma  = \Theta\left( \frac{ \sigma \alpha}{\lambda \cdot \sqrt{\log\left( \frac 1\beta \right)}} \right), && \eta = \Theta(\alpha), && m = \Thetatil\left( \frac{d \vee \log\left( \frac 1\beta \right)}{\sigma^2 \alpha^2} \cdot \lambda^2 \right)
    \end{align}
    and
    \begin{align}
        n = \Omegatil\left( \frac{\lambda^5 d \log^2\left( \frac 1\beta \right)}{\epsilon \sigma^4 \alpha^5} \vee  \frac{\lambda^4 d \log^{3/2}\left( \frac 1\beta \right) \log^{1/2}\left( \frac 1\delta \right) }{\epsilon \sigma^3 \alpha^4} \right),
    \end{align}
    then \Cref{alg:ftrl_alg} is $(\epsilon, \delta)$-differentially private and an $(\alpha, \beta)$-PAC learner with respect to any $\nu$ that is $\sigma$-smooth with respect to $\mu$.

    On the other hand, if we set $\cQ = \Lap(1)$ in \Cref{alg:output_perturb} and set
    \begin{align}
        \gamma  = \Thetatil\left( \frac{ \alpha^2 \sigma}{\lambda^2 \cdot \left(d \sqrt{\log\left( \frac 1\beta \right)} \wedge \log\left( \frac 1\beta \right)\right)} \right), && \eta = \Theta(\alpha), && m = \Thetatil\left( \frac{d \vee \log\left( \frac 1\beta \right)}{\sigma^2 \alpha^2} \cdot \lambda^2 \right)
    \end{align}
    and
    \begin{align}
        n = \cLtil\left( \frac{\lambda^6}{\sigma^5 \epsilon \alpha^6} \cdot \left( d^2 \sqrt{\log\left( \frac 1\beta \right)} \vee \log^{5/2}\left( \frac 1\beta \right) \right) \right)
    \end{align}
    then \Cref{alg:ftrl_alg} is $\epsilon$-purely differentially private and an $(\alpha, \beta)$-PAC learner with respect to any $\sigma$-smooth $\nu$.
\end{theorem}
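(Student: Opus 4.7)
}
The plan is to assemble the theorem directly from the two building blocks already established: the stability-to-privacy reduction in \Cref{lem:dp_guarantee_ftrl} and the PAC guarantee in \Cref{lem:ftrl_pac}. These two lemmas respectively provide sufficient conditions on $(m, \gamma, \eta, n)$ for $(\epsilon, \delta)$-privacy (or pure $\epsilon$-privacy when $\cQ = \Lap(1)$) and sufficient conditions on the same parameters for the output $\fhat$ to have $L(\fhat) - \inf_{f \in \cF} L(f) \leq \alpha$ with probability $\geq 1 - \beta$. All that remains is to verify that the stated choices of $(\gamma, \eta, m, n)$ simultaneously satisfy both sets of conditions; the argument is therefore entirely algebraic bookkeeping, with no new probabilistic content.

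First I would fix the PAC-side quantities. Substituting $\cGbar_m(\cF) = O(\sqrt d)$ into \Cref{lem:ftrl_pac}, the five error terms become $O(\sqrt{d/n})$, $O(\sqrt{\log(1/\beta)/n})$, $\eta$, $\tfrac{\lambda\gamma}{\sigma}\sqrt{\log(1/\beta)}$ (in the Gaussian case) or $\tfrac{\lambda\gamma}{\sigma}\log(1/\beta)\sqrt m$ (in the Laplace case), and a term of order $\tfrac{\lambda}{\sigma}\bigl(\tfrac{\log^3 m}{\sqrt m}\sqrt d + \sqrt{\log(1/\beta)/m}\bigr)$. Setting each to $O(\alpha)$ pins down the choice of $\eta = \Theta(\alpha)$, the choice of $m = \widetilde\Theta\bigl(\lambda^2 (d \vee \log(1/\beta))/(\sigma^2 \alpha^2)\bigr)$, and the choice of $\gamma$ (which differs between the two noise distributions because the dependence on $m$ enters differently into the output-perturbation norm bound from \Cref{lem:perturb_is_small}). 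This gives $\gamma = \Theta(\sigma\alpha/(\lambda\sqrt{\log(1/\beta)}))$ in the Gaussian case and $\gamma = \widetilde\Theta(\sigma\alpha^2/(\lambda^2 (d\sqrt{\log(1/\beta)} \wedge \log(1/\beta))))$ in the Laplace case. Finally, forcing $\sqrt{d/n} = O(\alpha)$ gives the baseline statistical requirement $n = \widetilde\Omega(d/\alpha^2)$.

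Next I would verify the privacy-side condition. In the Gaussian case, \Cref{lem:dp_guarantee_ftrl} demands $\frac{m}{2\gamma^2}\bigl(1 + \gamma\sqrt{\log(2/\delta)}\bigr)\cdot\frac{2}{\sqrt{\eta n}} \leq \epsilon$. Plugging in $m, \gamma, \eta$ and solving for $n$ yields two terms, one from the ``$1$'' contribution and one from the ``$\gamma\sqrt{\log(1/\delta)}$'' contribution; the maximum of these two is exactly the stated bound $\widetilde\Omega\bigl(\tfrac{\lambda^5 d \log^2(1/\beta)}{\epsilon\sigma^4\alpha^5} \vee \tfrac{\lambda^4 d\log^{3/2}(1/\beta)\log^{1/2}(1/\delta)}{\epsilon\sigma^3\alpha^4}\bigr)$. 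In the Laplace case, the condition from \Cref{lem:dp_guarantee_ftrl} becomes $\frac{m^{3/2}}{\gamma}\cdot\frac{2}{\sqrt{\eta n}} \leq \epsilon$; plugging in the (smaller) Laplace choice of $\gamma$ and solving for $n$ yields the stated pure-DP sample complexity.

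The main obstacle is really just getting the exponents right: there are several competing terms, and the Gaussian-vs-Laplace split arises from whether the output-perturbation norm bound scales as $\gamma\sqrt{\log(1/\beta)}$ or as $\gamma\sqrt m\log(1/\beta)$. Balancing the privacy constraint against the learning constraint forces $\gamma$ to shrink faster in the Laplace case, which in turn forces $n$ to grow faster; this is why the two clauses of the theorem look different. Once the parameter choices are in hand, no further probabilistic arguments are needed, because the $(\alpha,\beta)$-PAC guarantee holds under both noise distributions through the same Lemma \ref{lem:ftrl_pac} (with the appropriate choice of $\gamma$), and the privacy guarantee is delivered by Lemma \ref{lem:dp_guarantee_ftrl} independently of $\nu$. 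The proofs of both clauses therefore reduce to inserting the parameter choices, simplifying, and invoking a union bound over the constant number of high-probability events used by \Cref{lem:ftrl_pac}.
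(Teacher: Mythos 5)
Your proposal follows exactly the paper's own (one-line) proof: combine the privacy guarantee of \Cref{lem:dp_guarantee_ftrl} with the PAC guarantee of \Cref{lem:ftrl_pac}, substitute the stated choices of $\gamma, \eta, m$, and solve the resulting privacy constraint for $n$, using the second halves of each lemma for the Laplace/pure-DP clause. Your write-up is simply a more explicit account of the same bookkeeping, so there is nothing to add.
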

\begin{proof}
    This follows immediately by combining \Cref{lem:dp_guarantee_ftrl} with \Cref{lem:ftrl_pac} and plugging in the parameter choices, with the pure differential privacy guarantees coming from the second halves of each lemma.
\end{proof}
As a final remark, we note that \Cref{lem:general_alg_pac,lem:ftrl_pac} are both phrased entirely in terms of $\cGbar_m(\cF)$ and thus apply to function classes $\cF$ such that $\cGbar_m(\cF) = \omega(1)$.  Such \emph{non-donsker} \citep{wainwright2019high,van2014probability} classes can still be learned in the PAC framework, albeit with slower rates.  Indeed, it is immediate from the above results that as long as $\cGbar_m(\cF) = o(\sqrt{m})$, then appropriately tuning the hyperparamters results \Cref{alg:general_alg,alg:ftrl_alg} being differentially private PAC learners with respect to $\nu$.  It is well-known that (with our scaling) $\cGbar_m(\cF) = o(\sqrt{m})$ is a necessary condition for PAC learnability even absent a privacy condition \citep{wainwright2019high,van2014probability} and thus our results qualitatively demonstrate that private learnability with public data is possible whenever non-private learning is possible.

\subsection{Proof of Lemma \ref{lem:norm_comparison}}\label{app:norm_comparison}

Replacing $\cF$ by $\cF - \cF = \left\{ f - f' | f,f' \in \cF \right\}$ and noting that the uniform bound only increases by a factor of $2$ and the Rademacher complexity increases at most by a factor of 2, we observe that it suffices to prove the result for $f' = 0$.  We thus instead prove the notationally simpler claim that with probability at least $1 - \beta$, for all $f \in \cF$,
\begin{align}
    \norm{f}_\mu \leq 2 \cdot \norm{f}_m + C \left( \frac{\log^3(m)}{\sqrt{m}} \cdot \cGbar_m(\cF) + \frac{\log\log(m) + \log\left( \frac 1\beta \right)}{m}   \right)
\end{align}
We first note that by \citet[Theorem 6.1]{bousquet2002concentration}, with probability at least $1 - \beta$, it holds that for all $f \in \cF$,
\begin{align}\label{eq:bousquet}
    \norm{f}_\mu^2 \leq 2 \cdot \norm{f}_m^2 + 200 \left( \rbar^2 + \frac{\log\left( \frac 1\beta \right) + \log\log(m)}{m} \right)
\end{align}
for some universal constant $C$, with
\begin{align}\label{eq:rbar_def}
    \rbar \leq \inf\left\{ r > 0 | \ee_\xi\left[ \sup_{\substack{f \in \cF \\ \norm{f}_m^2 \leq r^2}} \frac 1m \cdot \sum_{i = 1}^m \xi_i f(Z_i)^2 \right] \leq \frac{r^2}{2} \right\}.
\end{align}
Taking square roots on both sides of \eqref{eq:bousquet} shows that it suffices to upper bound $\rbar$.  For the remainder of the proof, we do this.

In order to proceed, we recall the following standard definition of covering numbers.
\begin{definition}
    Let $\cF$ be a function class and $\norm{\cdot}_{m, \infty}$ the the $L^\infty$ norm on the empirical measure on $Z_1, \dots, Z_m \in \cX$, i.e., $\norm{f}_{m,\infty} = \max_{i \in [m]} \abs{f(Z_i)}$.  We say that $f_1, \dots, f_N$ is an $\epsilon$-cover with respect to $\norm{\cdot}_{m,\infty}$ if for all $f \in \cF$ there is some $f_j$ such that $\norm{f - f_j}_{m,\infty} \leq \epsilon$.  We then let $\cN_{m,\infty}(\cF, \epsilon)$ denote the size of the smallest $\epsilon$-cover of $\cF$ with respect to $\norm{\cdot}_{m,\infty}$.
\end{definition}
The notion of a cover is standard throughout learning theory and can be used to control the Rademacher and Gaussian complexities \citep{dudley1969speed,van2014probability,wainwright2019high}.  We will use it to control $\rbar$.

Proceeding with the proof, let $\cN_{m,\infty}(\cF, u)$ denote the covering number of the function class $\cF$ with respect to $\norm{\cdot}_{m,\infty}$ at scale $u > 0$.  We then claim that $\rbar$ can be upper bounded by any $r$ satisfying
\begin{align}\label{eq:fixed_point_covering_number}
    \frac{50}{\sqrt m} \cdot \int_{r/16}^1 \sqrt{\log \cN_{m,\infty}(\cF, u)} du \leq r.
\end{align}

We also claim that for any $r > \cGbar_m(\cF)$, the following holds:
\begin{align}\label{eq:worst_case_gaussian_complexity}
    \int_{r/16}^1 \sqrt{\log \cN_{m,\infty}(\cF, u)} du \leq C \sqrt{\log(m)} \cdot \left( \int_{r}^1 \frac{\sqrt{\log\left( \frac{cm}u \right)}}{u}d u \right) \cdot \cGbar_m(\cF)
\end{align}
for some universal constant $C$.  We now suppose that \eqref{eq:fixed_point_covering_number} and \eqref{eq:worst_case_gaussian_complexity} hold and set $r = C  m^{-1/2} \cdot \log^3(m) \cdot \cGbar_m(\cF)$.  Then it is immediate that $r$ is a member of the set in \eqref{eq:rbar_def}.  

We now prove the two claims.

\paragraph{Proof that a solution to \eqref{eq:fixed_point_covering_number} is an upper bound on $\rbar$.} By a standard Dudley Chaining argument \citep{van2014probability,wainwright2019high}, it holds that
\begin{align}\label{eq:dudley}
    \ee_\xi\left[ \sup_{\substack{f \in \cF \\ \norm{f}_m^2 \leq r^2}} \frac 1m \cdot \sum_{i = 1}^m \xi_i f(Z_i)^2 \right]  \leq \inf_{u > 0} \left\{ 4 u + \frac{12}{\sqrt m} \cdot \int_u^r \sqrt{\log \cN_{m,\infty}(\cF^2 \cap \left\{ \norm{f}_m^2 \leq r^2 \right\}, u)} d u \right\}.
\end{align}
Letting $f_1, \dots, f_M \in \cF \cap \left\{ \norm{f}_m^2 \leq r^2 \right\}$ be a \emph{proper} $u$-cover of $\cF \cap \left\{ \norm{f}_m^2 \leq r^2 \right\}$ with respect to $\norm{\cdot}_{m,\infty}$ at scale $s \leq r$ and $\pi : \cF \to \left\{ f_i \right\}$ be projection to the cover, we have
\begin{align}
    \norm{f^2 - \pi(f)^2}_m^2 \leq s^2 \cdot \norm{f + \pi(f)}_m^2 \leq 4 s^2 r^2
\end{align}
by factoring $f^2 - \pi(f)^2 = (f - \pi(f)) (f + \pi(f))$.  In particular,
\begin{align}
    \cN_{m,2}(\cF^2 \cap \left\{ \norm{f}_m^2 \leq r^2 \right\}, 2 u r)  \leq \cN_{m,\infty}(\cF \cap \left\{ \cF^2 \cap \left\{ \norm{f}_m^2 \leq r^2 \right\} \right\}, u) \leq \cN_{m,\infty}(\cF, u),
\end{align}
where we used the fact that a proper covering at scale $\epsilon$ has size bounded by a covering at scale $\epsilon / 2$ by the triangle inequaltiy.  Substituting into \eqref{eq:dudley} and rescaling yields the claim.

\paragraph{Proof that \eqref{eq:worst_case_gaussian_complexity} holds.}  This proof goes through fat shattering numbers, a complexity measure taking a function class and a scale $u > 0$ and returns $\fat(\cF, u) \in \bbN$  \citep{bartlett1994fat}.  We do not need the full definition of fat shattering numbers and defer to \citep{bartlett1994fat,srebro2010smoothness,rudelson2006combinatorics} for details.  We only need the following two properties.  First, for any $u > 0$, it holds by \citet{rudelson2006combinatorics} that
\begin{align}\label{eq:rudelson}
    \log \cN_{m,\infty}(\cF, u) \leq C \cdot \fat(\cF, cu) \cdot \log(m) \cdot \log\left( \frac{m}{\fat(\cF, c u) u} \right).
\end{align}
Second, by \citet[Lemma A.2]{srebro2010smoothness}, for all $r > m^{-1/2} \cdot \cGbar_m(\cF)$,
\begin{align}\label{eq:srebro}
    r^2 \cdot \fat(\cF, r) \leq 4 \cdot \cGbar_m(\cF)^2.
\end{align}
Plugging \eqref{eq:rudelson} into the left hand side of \eqref{eq:worst_case_gaussian_complexity} and then applying \eqref{eq:srebro} concludes the proof of the claim.

\section{Classification and Analysis of RRSPM} \label{app:classification}

In this section, we provide full proofs for the guarantees of Algorithm~\ref{alg:RRSPM}. In Section~\ref{app:separatorsets}, we describe the concept of universal identification sets and the result of \cite{neel2019use} which plays a crucial role in the privacy analysis of our algorithm. In Section~\ref{app:classification_privacy}, we formally prove Lemma~\ref{lem:projpriv_body} which is the technical lemma used for the proof of differential privacy based on \cite{neel2019use}. We then continue in Section~\ref{app:classifcation_accuracy} by applying standard learning theoretic techniques to demonstrate that our algorithm is an accurate classifier. 

\subsection{Universal Identification Set based Algorithm}\label{app:separatorsets}
In this section, we formally define universal identification sets and informally describe the algorithm used in \cite{neel2019use}. Intuitively, a universal identification set captures the combinatorial property of a function class that all distinct functions in the function class disagree on at least one point from the data universe. For many natural classes, the size of the universal identification set is proportional to the VC dimension of the function class. 

We now describe the algorithm from \cite{neel2019use} and explain the usefulness of universal identification sets. First, we formally define the notion of universal identification set.

\begin{definition}(Universal Identification Set)
    A set $\cU\subseteq \cX$ is a universal identification set for a hypothesis class $\cF$ if for all pairs of functions $f,f'$ in the hypothesis class $\cF$, there is a $x\in \cU$ such that: 
    $$f(x)\neq f'(x).$$

    Additionally, if $\vert \cU\vert =m$, we say that $\cF$ has a universal identification set of size $m$.
\end{definition}

Assuming the existence of a universal identification set for the function class $\cF$ of size $m$ denoted by $\cU=\{U_1,\dots ,U_m\}$, \cite{neel2019use} showed that the following algorithm(called $\mathsf{RSPM}$) is an $\varepsilon$- pure differentially private and $(\alpha,\beta)$-accurate algorithm.

\begin{algorithm}[!ht]
\caption{$\mathsf{RSPM}$}
    \begin{algorithmic}[1]
    \State \textbf{Input } ERM oracle $\erm$, dataset $\cD=\{\left(X_i,Y_i\right)\mid 1\leq i\leq n\}$, hypothesis class $\cF$, universal identification set $\cU=\{U_1,\dots ,U_m\}$, loss function $\ell:\cY\times\cY\to \{0,1\}$.
    \State \textbf{Draw } weights $\ww=\{\xi_1,\dots,\xi_m\}$ such that $\xi_i\sim \lap(2m/\varepsilon)$.
    \State \textbf{Draw } labels $\tilde{Y}=\{\tilde{Y}_1,\dots,\tilde{Y}_m\}$ such that $\tilde{Y}_i\sim \mathrm{Uni}(\{0,1\})$.
    \State \textbf{Define } $\cL_{\ww,\cD,\cU}: \cF \to \rr$ such that
            \begin{align}
                \cL_{\ww,\cD,\cU}(f) = \sum_{i=1}^{n} \ell(f(X_i), Y_i) + \sum_{i=1}^m\xi_i \cdot \ell(f(U_i),\tilde{Y}_i).
            \end{align}
    \State \textbf{Get} $\hat{f}=\erm(\cF,\cL_{\ww,\cD,\cU})$.
    \end{algorithmic}
\end{algorithm}

$\rspm$ roughly simulates ``Report-Noisy-Min''(\cite{dwork2014algorithmic}) attempting to output a function that minimizes a perturbed estimate, where the perturbation is sampled from a Laplace distribution. A straight forward implementation of ``Report-Noisy-Min" to minimize over all perturbed estimates of functions, it'd have to check for all functions in $\cF$ and thus the computational complexity would depend on the size of $\cF$. $\rspm$ avoids this problem by implicitly perturbing the function evaluations via an augmented dataset. The proof of privacy thus exploits the structure of the universal identification set.

Although many natural function classes have bounded universal identification sets, their existence is not as general as having bounded VC dimension. In our work, we only assume finite VC dimension of the function class.

\subsection{Privacy Analysis}\label{app:classification_privacy}
    In this section, we prove that Algorithm~\ref{alg:RRSPM} is differentially private. A notion that we will need is that of a projection of a hypothesis class onto a set of points from the domain. 
\begin{definition}(Projection)
    Given a hypothesis class $\cF\subseteq \cY^{\vert\cX\vert}$ and a subset $Z=\{z_1,\dots z_m\}$ of the feature space $\cX$, we define the projection of $\cF$ onto $Z$ to be $\proj{\cF}{Z}=\{\left(f(z_1),\dots ,f(z_m)\right):f\in \cF\}$. 
\end{definition}
In the following sections, we will interchangeably think of the projection of a hypothesis class onto a set of points $Z$ as a set of functions on $Z$ or as a set of vectors in $ \cY^{\abs{Z}} $.  

By construction, $\proj{\cF}{Z}$ has the property that any two distinct functions $f \ne f' \in \proj{\cF}{Z}$ must disagree on at least one point $z \in Z$. We encapsulate this as the following lemma.

%
\begin{lemma} \label{lem:proj}
    Let $\cF$ be a hypothesis class and let $Z=\{z_1,\dots ,z_m\}$ be a set of points in the instance space $\cX$. Then, for all $f \ne f'\in \proj{\cF}{Z}$, there exists $z\in Z$ such that $f(z)\neq f'(z)$.
\end{lemma}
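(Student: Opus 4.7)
The plan is to reduce this statement to a direct unpacking of the definition of the projection $\proj{\cF}{Z}$. By the defining equation $\proj{\cF}{Z}=\{(f(z_1),\dots ,f(z_m)):f\in \cF\}$, every element of $\proj{\cF}{Z}$ is canonically identified with a tuple in $\cY^m$ recording its values on $z_1,\dots,z_m$. Two such tuples are equal as elements of $\cY^m$ if and only if they agree coordinatewise, so equality in $\proj{\cF}{Z}$ is by definition determined by pointwise agreement on $Z$.

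With that identification in hand, I would argue by contrapositive. Fix $f,f'\in\proj{\cF}{Z}$ and suppose toward contradiction that $f(z)=f'(z)$ for every $z\in Z$. Then the tuples $(f(z_1),\dots,f(z_m))$ and $(f'(z_1),\dots,f'(z_m))$ coincide in every coordinate, so they represent the very same element of $\proj{\cF}{Z}\subseteq\cY^m$; equivalently $f=f'$, contradicting the hypothesis $f\ne f'$. Hence at least one coordinate must differ, which gives some $z_i\in Z$ with $f(z_i)\ne f'(z_i)$, proving the claim.

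There is essentially no technical obstacle here: the lemma is a tautology once the projection is viewed as the image of the coordinate evaluation map $\cF\to\cY^{\abs{Z}}$, $f\mapsto(f(z_1),\dots,f(z_m))$. Its purpose is purely notational, packaging the fact that the distinct elements of $\proj{\cF}{Z}$ play the role of a universal identification set for $\proj{\cF}{Z}$ itself; this is what enables the subsequent privacy argument for \Cref{alg:RRSPM} to invoke the separator-set style analysis of \citet{neel2019use} with $\tilde{\cD}_x$ in place of a combinatorial universal identification set of $\cF$.
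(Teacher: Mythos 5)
Your proof is correct and matches the paper's treatment: the paper does not even give a formal proof, simply noting that the property holds ``by construction'' since elements of $\proj{\cF}{Z}$ are by definition tuples of values on $Z$, so distinct elements must differ in some coordinate. Your contrapositive unpacking of that identification is exactly the intended (tautological) argument.
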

\begin{remark}\label{re:projuis}
    This property is analogous to the notion of a universal identification set considered in \cite{neel2019use}. 
    In particular, the above lemma can be seen as the statement that the set $Z$ is a universal identification set for the class $\proj{\cF}{Z}$.
\end{remark}

    We first provide an informal sketch of the proof of privacy. In the later sections, we formalize these ideas. Let $\hat{f}\in\cF$ be any arbitrary function and let $\cD,\cD'$ be any pair of neighbouring datasets. We show that $\pp(\rrspm(\cD)=\hat{f})\leq e^{\varepsilon}\pp(\rrspm(\cD')=\hat{f})$. By the definition of the projection, there is some $\tilde{f}\in\proj{\cF}{\tilde{D}_x}$ that is consistent with the labelling of the selected function $\hat{f}$. $\pp(\erm(\cF,\cL_{\ww,\cD,\tilde{\cD}})=\tilde{f})\leq e^{\varepsilon}\pp(\erm(\cF,\cL_{\ww,\cD',\tilde{\cD}})=\tilde{f})$ since privacy for $\hat{f}$ follows from the post-processing property of differential privacy.
    
    We now provide with a proof sketch for the main technical lemma showing that the privacy is preserved over the projected function class. Optimizing a loss function perturbed by Laplace-weighted examples implicitly tries to implement ``Report-Noisy-Min'' algorithm outputting a function that minimizes a perturbed estimate.  For any neighbouring datasets $\cD$ and $\cD'$, the evaluation of any function $\tilde{f}$ can differ by at most $1$. We show that the set of public points $\tilde{\cD}$ is a universal identification set for the set of functions projected onto $\tilde{\cD}$ and leverage this to prove that whenever the shift in the noise vectors is bounded by $2$ in every coordinate, then $\tilde{f}$ is the minimizing function when switching from $\cD$ to $\cD'$. This intuition is made precise in Lemma~\ref{lem:adjacent}.

    \begin{lemma}\label{lem:adjacent}
     Let $\cD, \cD'$ be two neighbouring data sets, and let $\tilde{\cD}=(\tilde{\cD}_x,\tilde{\cD}_y)\in(\cX\times\{0,1\})^m$ where $\tilde{\cD}_x=\{Z_1,\dots ,Z_m\}$ and $\tilde{\cD}_y=\{\tilde{Y}_1,\dots ,\tilde{Y}_m\}$. Define $\mathcal{E}(\projected{f},\cD,\tilde{\cD}) = \left\{ \ww : \erm(\proj{\cF}{\tilde{\cD}_x},\cL_{\ww,\cD,\tilde{\cD}}) = \projected{f}\right\}$, where $\cL_{\ww,\cD,\tilde{\cD}}$ is a functional as defined below:
         \begin{align}
            \cL_{\ww,\cD,\tilde{\cD}}(f) = \sum_{i=1}^{n} \ell(f(X_i), Y_i) + \sum_{i=1}^m\xi_i \cdot \ell(f(Z_i),\tilde{Y}_i).
        \end{align}
      Let $\ww=\{\xi_1,\dots,\xi_m\}$ such that $\xi_i\sim \lap(2m/\varepsilon)$. Given a fixed $\projected{f}\in\proj{\cF}{\tilde{\cD}_x}$, define a mapping $\map_{\projected{f}}(\ww):\rr^m\to\rr^m$ on noise vectors as follows:
     \begin{enumerate}
         \item if $\ell(\projected{f}(Z_i),\tilde{Y}_i)=1,\map_{\projected{f}}(\ww)_i=\xi_i-2$
         \item if $\ell(\projected{f}(Z_i),\tilde{Y}_i)=0,\map_{\projected{f}}(\ww)_i=\xi_i+2$
     \end{enumerate}
    Equivalently, $\map_{\projected{f}}(\ww)_i = \xi_i + 2(1 - 2\ell(\projected{f}(Z_i),\tilde{Y}_i))$.
    Let $\ww \in  \mathcal{E} \left(\projected{f},\cD,\tilde{\cD}\right) $  where $\projected{f}\in\erm(\proj{\cF}{\tilde{\cD}_x},\cL_{\ww,\cD,\tilde{\cD}})$.
    Then $\map_{\projected{f}}(\ww) \in \mathcal{E}(\projected{f},\cD',\tilde{\cD})$.    
    \end{lemma}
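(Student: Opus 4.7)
The plan is to fix an arbitrary $g \in \proj{\cF}{\tilde{\cD}_x}$ with $g \neq \projected{f}$ and show that
\[
\cL_{\map_{\projected{f}}(\ww),\cD',\tilde{\cD}}(\projected{f}) \leq \cL_{\map_{\projected{f}}(\ww),\cD',\tilde{\cD}}(g),
\]
which gives $\projected{f} \in \argmin_{h \in \proj{\cF}{\tilde{\cD}_x}} \cL_{\map_{\projected{f}}(\ww),\cD',\tilde{\cD}}(h)$, as required. I would set up a bookkeeping decomposition in which the difference above is expressed as the sum of three pieces: (i) the original optimality gap $\cL_{\ww,\cD,\tilde{\cD}}(\projected{f})-\cL_{\ww,\cD,\tilde{\cD}}(g)$; (ii) the change $\Delta(\projected{f})-\Delta(g)$ coming from swapping $\cD$ for $\cD'$, where $\Delta(h) = \sum_{(X,Y)\in\cD'} \ell(h(X),Y) - \sum_{(X,Y)\in\cD} \ell(h(X),Y)$; and (iii) the change $N(\projected{f})-N(g)$ coming from swapping $\ww$ for $\map_{\projected{f}}(\ww)$, where $N(h) = \sum_i (\map_{\projected{f}}(\ww)_i - \xi_i)\,\ell(h(Z_i),\tilde{Y}_i)$.

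Piece (i) is $\leq 0$ by the hypothesis $\ww \in \mathcal{E}(\projected{f},\cD,\tilde{\cD})$. Piece (ii) is at most $2$ in absolute value, because $\cD$ and $\cD'$ differ on at most one example and $\ell \in \{0,1\}$, so each $\Delta(h) \in [-1,1]$. The crux of the argument is evaluating piece (iii) exactly. Writing $a_i = \ell(\projected{f}(Z_i),\tilde{Y}_i)$ and $b_i = \ell(g(Z_i),\tilde{Y}_i)$, the definition of $\map_{\projected{f}}$ gives $\map_{\projected{f}}(\ww)_i - \xi_i = 2(1 - 2a_i)$, and a direct expansion together with the Boolean identity $a + b - 2ab = \mathbbm{1}[a \neq b]$ (and $a_i^2 = a_i$) reduces (iii) to
\[
N(\projected{f}) - N(g) = -2 \cdot \bigl|\{\,i \in [m] : \projected{f}(Z_i) \neq g(Z_i)\,\}\bigr|.
\]
Since $g \neq \projected{f}$ as elements of $\proj{\cF}{\tilde{\cD}_x}$, Lemma~\ref{lem:proj} guarantees that they disagree on at least one $Z_i$, so piece (iii) is at most $-2$. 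Summing the three bounds yields $0 + 2 - 2 = 0$, which is exactly the desired inequality.

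The main obstacle, such as it is, lies in recognizing that the $\pm 2$ shifts baked into $\map_{\projected{f}}$ are calibrated so that each coordinate of disagreement between $\projected{f}$ and $g$ contributes exactly $-2$ in piece (iii) --- precisely what is needed to absorb the worst-case contribution of piece (ii) arising from the sensitivity of the ERM objective under a single-record change to the private data. This is the combinatorial reason the algorithm can leverage the public set $\tilde{\cD}_x$ as a universal identification set for $\proj{\cF}{\tilde{\cD}_x}$ in the spirit of Remark~\ref{re:projuis}, even without assuming $\cF$ itself admits a small universal identification set. Ties in the argmin occur with probability zero under the continuous Laplace noise, so the argmin selection convention does not affect the conclusion.
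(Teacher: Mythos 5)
Your proposal is correct and follows essentially the same route as the paper's proof: both decompose the perturbed-loss gap on $(\cD',\map_{\projected{f}}(\ww))$ into the original optimality gap, the $\pm 1$-per-function change from swapping one private record, and the noise-shift term, and both use \Cref{lem:proj} to ensure at least one public-point disagreement contributes $-2$ (the paper phrases this as each term of the noise shift being non-negative with at least one equal to $2$, which is your exact identity $N(\projected{f})-N(g)=-2\,\abs{\{i:\projected{f}(Z_i)\neq g(Z_i)\}}$ with signs reversed). Your handling of ties matches the paper's, which delegates almost-sure uniqueness of the minimizer to a separate measure-zero lemma.
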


    \begin{proof}
        Let $\map_{\projected{f}}(\ww)=\ww'=(\xi_1',\dots \xi_m')$. Our goal is to show that for every $f \in \proj{\cF}{\tilde{\cD}_x}$ such that $f\neq \projected{f}$, we have $\cL_{\ww',\cD',\tilde{\cD}}(f)>\cL_{\ww',\cD',\tilde{\cD}}(\projected{f})$. First, recall that by our assumption for all $f\in\proj{\cF}{\tilde{\cD}_x}$, we have
        \begin{equation}
            \cL_{\ww,\cD,\tilde{\cD}}(f)>\cL_{\ww,\cD,\tilde{\cD}}(\projected{f}). \label{ref:optf}
        \end{equation}

        We now argue that $\cL_{\ww',\cD',\tilde{\cD}}(f)-\cL_{\ww',\cD',\tilde{\cD}}(\projected{f})$ is strictly positive for all $f\in\proj{\cF}{\tilde{\cD}_x}$ such that $\projected{f}\neq f$. To see this we calculate,
        \begin{align}
            \cL_{\ww',\cD',\tilde{\cD}}(f)-\cL_{\ww',\cD',\tilde{\cD}}(\projected{f})
            &= \sum_{(X,Y)\in\cD'} \ell(f(X), Y) + \sum_{i=1}^m\xi_i' \cdot \ell(f(Z_i), \tilde{Y}_i)\\ &- \sum_{(X,Y)\in\cD'}
            \ell(\projected{f}(X), Y) - \sum_{i=1}^m\xi_i' \cdot \ell(\projected{f}(Z_i),\tilde{Y}_i)\\
            &\geq\cL_{\ww,\cD,\tilde{\cD}}(f) -1 + \sum_{i=1}^m\xi_i' \cdot \ell(f(Z_i), \tilde{Y}_i)-\xi_i \cdot \ell(f(Z_i), \tilde{Y}_i)\\
            &-\cL_{\ww,\cD,\tilde{\cD}}(\projected{f}) - 1 - \sum_{i=1}^m\xi_i' \cdot \ell(\projected{f}(Z_i), \tilde{Y}_i)+\xi_i \cdot \ell(\projected{f}(Z_i), \tilde{Y}_i)\\
            &> -2 + \sum_{i=1}^m(\xi_i' - \xi_i)\left(\ell(f(Z_i), \tilde{Y}_i))-\ell(\projected{f}(Z_i), \tilde{Y}_i)\right),
        \end{align}

        where the second inequality follows from the fact that $\cD$ and $\cD'$ differ in only one entry and $\ell$ is $1-$ sensitive. The last equation follows from statement \ref{ref:optf}.
  
        We know from \cref{lem:proj} that there exists a $Z\in\tilde{\cD}$ such that $f(Z)\neq \projected{f}(Z)$.
        Recall that  $\xi_i' = \xi_i + 2(1 - 2\ell(\projected{f}(Z_i),\tilde{Y}_i))$. By construction, each term is non-negative. 
        Therefore,
        \[
            \sum_{i=1}^m(\xi_i' - \xi_i)\left(\ell(f(Z_i), \tilde{Y}_i))-\ell(\projected{f}(Z_i), \tilde{Y}_i)\right)>2.
        \]
        
        To wrap up, we can bound
        \[
          \cL_{\ww',\cD',\tilde{\cD}}(f)-\cL_{\ww',\cD',\tilde{\cD}}(\projected{f})  > 0.
        \]

        This proves that $\map_{\projected{f}}(\ww) \in \mathcal{E}(\projected{f},\cD',\tilde{\cD})$.
        
    \end{proof}

\begin{lemma}[Laplace shift] \label{lem:laplace}
    Let $\ww=\{\xi_1,\dots,\xi_m\}$ such that $\xi_i\sim \lap(2m/\varepsilon)$. Fix some noise realization $\mathbf{r}\in \rr^m$ and fix a hypothesis in the projection set $f\in\proj{\cF}{\tilde{\cD}_x}$. Then,
    \begin{align}
        \pp(\ww=\mathbf{r})\leq e^{\varepsilon}\pp(\ww=\map_{f}(\mathbf{r})),
    \end{align}

    where $\map_{f}(\ww)_i = \xi_i + 2(1 - 2\ell(f(Z_i),\tilde{Y}_i))$ as defined in Lemma~\ref{lem:adjacent}.
\end{lemma}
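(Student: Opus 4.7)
The plan is to exploit two facts: (i) the coordinates of $\ww$ are independent Laplace random variables with scale $b = 2m/\varepsilon$, so the joint density factorizes; and (ii) by construction of $\map_{f}$, the shift $\map_{f}(\mathbf{r})_i - r_i = 2(1 - 2\ell(f(Z_i), \tilde{Y}_i))$ lies in $\{-2, +2\}$, so in every coordinate the two points $r_i$ and $\map_f(\mathbf{r})_i$ differ by exactly $2$ in absolute value.

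First, I would write the joint density of $\ww$ at a point $\mathbf{u} \in \rr^m$ as
\[
p(\mathbf{u}) \;=\; \prod_{i=1}^m \frac{1}{2b}\exp\!\Bigl(-\tfrac{|u_i|}{b}\Bigr),
\]
where $b = 2m/\varepsilon$. The event $\{\ww = \mathbf{r}\}$ should be interpreted as a density ratio statement (as is standard for continuous Laplace mechanisms); equivalently one may take the limit of small-cube probabilities, and the normalization constants cancel in the ratio.

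Second, I would bound the ratio coordinate-by-coordinate. For each $i$, by the reverse triangle inequality,
\[
\bigl|\,|r_i| - |\map_f(\mathbf{r})_i|\,\bigr| \;\le\; |r_i - \map_f(\mathbf{r})_i| \;=\; 2.
\]
Hence
\[
\frac{\exp(-|r_i|/b)}{\exp(-|\map_f(\mathbf{r})_i|/b)} \;=\; \exp\!\Bigl(\tfrac{|\map_f(\mathbf{r})_i| - |r_i|}{b}\Bigr) \;\le\; \exp\!\Bigl(\tfrac{2}{b}\Bigr) \;=\; \exp\!\Bigl(\tfrac{\varepsilon}{m}\Bigr).
\]

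Third, multiplying this per-coordinate bound across all $m$ independent coordinates yields
\[
\frac{p(\mathbf{r})}{p(\map_f(\mathbf{r}))} \;\le\; \prod_{i=1}^m \exp\!\Bigl(\tfrac{\varepsilon}{m}\Bigr) \;=\; e^{\varepsilon},
\]
which is exactly the claimed inequality. There is no real obstacle here: the only things to verify carefully are that $\map_f$ shifts every coordinate by exactly $\pm 2$ (immediate from the definition, since $\ell \in \{0,1\}$ makes $1 - 2\ell \in \{-1,+1\}$) and that the scale $b = 2m/\varepsilon$ is correctly calibrated so that a shift of $2$ in each of $m$ independent coordinates gives total privacy loss $\varepsilon$. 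This calibration is precisely why the algorithm uses scale $2m/\varepsilon$ rather than $2/\varepsilon$.
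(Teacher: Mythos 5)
Your proof is correct and follows essentially the same route as the paper: factor the joint Laplace density, bound each coordinate's density ratio by $e^{\varepsilon/m}$ using the fact that $\map_f$ shifts each coordinate by exactly $\pm 2$ and the scale is $2m/\varepsilon$, then multiply over the $m$ coordinates. The only cosmetic difference is that you make the reverse-triangle-inequality step explicit and bound the ratio in the direction stated in the lemma, whereas the paper states the symmetric per-coordinate bound for arbitrary $t,t'$ with $|t-t'|\le 2$.
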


\begin{proof}
    Let $i\in[m]$ be any index and let $\mathbf{r}\in\rr^m$. Since $\xi_i\sim \lap(2m/\varepsilon)$, we know 

    \[
        \pp(\xi_i=r_i)=\frac{\varepsilon}{4m}\exp\left(\frac{-\vert r_i\vert\varepsilon}{2m}\right)
    \]

    For any $t,t'\in\rr$ such that $\vert t-t'\vert\leq 2$, we get 

    \[
        \pp(\xi_i=t')\leq \exp(\varepsilon/m)\pp(\xi_i=t)
    \]

    Since for all $i\in [d]$, $\vert \map_f(r)_i-r_i\vert\leq 2$, we get

    \[
        \frac{\pp(\ww=\map_f(\mathbf{r}))}{\pp(\ww=\mathbf{r})}=\prod_{i=1}^m\frac{\pp(\xi_i=\map_f(r)_i)}{\pp(\xi_i=r_i)}\leq \prod_{i=1}^m \exp(\varepsilon/m)=\exp(\varepsilon).
    \]
\end{proof}

Our proof of privacy also makes use of the following lemma, which says that minimizers are unique with probability 1~\citep[Lemma 4]{neel2019use}.

\begin{lemma}\label{lem:mzero}
    Let $\tilde{\cD}=(\tilde{\cD}_x,\tilde{\cD}_y)\in(\cX\times\{0,1\})^m$.  Consider $\proj{\cF}{\tilde{\cD}_x}$, where $\proj{\cF}{\tilde{\cD}_x}$ is the projection of $\cF$ on $\tilde{\cD}_x$. For every dataset $\cD$, there is a subset $B\subseteq \rr^m$ such that:
    \begin{itemize}
        \item $\pp(\ww\in B)=0$ and 
        \item On the restricted domain $\rr^m\setminus B$, there is a unique minimizer $\tilde{f}\in \arg\min_{f\in\proj{\cF}{\tilde{\cD}_x}}\cL_{\ww,\cD,\tilde{\cD}}(f)$.
    \end{itemize}
\end{lemma}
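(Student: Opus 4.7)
The plan is to show that the exceptional set $B$ arises as a finite union of affine hyperplanes in $\rr^m$, which has Lebesgue measure zero, and therefore probability zero under the product Laplace measure since that measure is absolutely continuous with respect to Lebesgue measure.

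First, I would observe that $\proj{\cF}{\tilde{\cD}_x}$ is a finite set, since it is a subset of $\{0,1\}^m$ and thus has cardinality at most $2^m$. For each ordered pair $f \ne f'$ in $\proj{\cF}{\tilde{\cD}_x}$, consider the ``tie set''
\begin{align}
H_{f,f'} = \bigl\{\ww \in \rr^m \;:\; \cL_{\ww,\cD,\tilde{\cD}}(f) = \cL_{\ww,\cD,\tilde{\cD}}(f')\bigr\}.
\end{align}
Using the definition of $\cL_{\ww,\cD,\tilde{\cD}}$ from~\eqref{eq:objective_perturbation_rrspm}, this equation rearranges into the affine form
\begin{align}
\sum_{i=1}^m \xi_i \bigl(\ell(f(Z_i),\tilde{Y}_i) - \ell(f'(Z_i),\tilde{Y}_i)\bigr) = \sum_{(X,Y)\in\cD}\bigl(\ell(f'(X),Y) - \ell(f(X),Y)\bigr),
\end{align}
so $H_{f,f'}$ is either all of $\rr^m$ (if the coefficient vector vanishes) or a proper affine hyperplane (otherwise).

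Next, I would verify that the coefficient vector on the left-hand side is nonzero. By \Cref{lem:proj}, since $f \ne f'$ as elements of $\proj{\cF}{\tilde{\cD}_x}$, there exists $i^\star \in [m]$ with $f(Z_{i^\star}) \ne f'(Z_{i^\star})$. Because $\ell$ is the $0$--$1$ loss and $\tilde{Y}_{i^\star} \in \{0,1\}$, the $i^\star$-th coefficient $\ell(f(Z_{i^\star}),\tilde{Y}_{i^\star}) - \ell(f'(Z_{i^\star}),\tilde{Y}_{i^\star})$ lies in $\{+1,-1\}$, and is in particular nonzero. Hence $H_{f,f'}$ is a genuine affine hyperplane in $\rr^m$ and has Lebesgue measure zero.

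Finally, I would set $B = \bigcup_{f \ne f' \in \proj{\cF}{\tilde{\cD}_x}} H_{f,f'}$, which is a finite union of Lebesgue-null sets and therefore itself Lebesgue-null. Since each $\xi_i \sim \lap(2m/\varepsilon)$ is absolutely continuous with respect to Lebesgue measure and the coordinates are independent, the joint distribution of $\ww$ is absolutely continuous and so $\pp(\ww \in B) = 0$. For any $\ww \in \rr^m \setminus B$, no two distinct elements of the finite set $\proj{\cF}{\tilde{\cD}_x}$ yield equal values of $\cL_{\ww,\cD,\tilde{\cD}}$, so a unique minimizer $\tilde{f}$ exists. There is no substantive obstacle in this argument; the only step that requires any input beyond measure-theoretic triviality is confirming that the hyperplane normal does not vanish, and this is exactly the content of \Cref{lem:proj}.
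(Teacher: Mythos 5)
Your proof is correct. The paper does not actually prove \Cref{lem:mzero} --- it imports it as Lemma~4 of \citet{neel2019use} --- and your tie-set/hyperplane argument is precisely the standard way to establish it: finiteness of $\proj{\cF}{\tilde{\cD}_x}\subseteq\{0,1\}^m$, nonvanishing of the normal vector via \Cref{lem:proj} (the separator property of the public points), and absolute continuity of the product Laplace measure. The only point worth flagging is that the private-data term $\sum_{(X,Y)\in\cD}\ell(f(X),Y)$ is not literally determined by the projection $f\in\proj{\cF}{\tilde{\cD}_x}$, but under any convention it is a constant independent of $\ww$, so your tie sets remain affine hyperplanes with the same normals and the argument is unaffected.
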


Using standard results about Laplace perturbations stated in Lemma~\ref{lem:laplace} and Lemma~\ref{lem:mzero} and using the perturbation coupling bound in Lemma~\ref{lem:adjacent}, we get the our formal statement of privacy for optimizing over the projected function class as follows:

\begin{lemma}(Privacy over Projection)\label{lem:projpriv}
Let $\cD,\cD'$ be arbitrary datasets containing $n$ points each. Let $\tilde{\cD}=(\tilde{\cD}_x,\tilde{\cD}_y)\in(\cX\times\{0,1\})^m$. Then,
\[
    \pp(\erm(\proj{\cF}{\tilde{\cD}_x},\cL_{\ww,\cD,\tilde{\cD}})=f)\leq e^{\varepsilon}\pp(\erm(\proj{\cF}{\tilde{\cD}_x},\cL_{\ww,\cD',\tilde{\cD}})=f).
\]
\end{lemma}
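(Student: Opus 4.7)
\textbf{Proof proposal for Lemma~\ref{lem:projpriv}.} The plan is to reduce the statement directly to the three ingredients already in hand: the deterministic coupling in Lemma~\ref{lem:adjacent}, the Laplace density ratio in Lemma~\ref{lem:laplace}, and the almost-sure uniqueness of minimizers from Lemma~\ref{lem:mzero}. The idea is to rewrite the output probability as the probability that the noise vector $\ww$ lies in the ``winning region'' $\mathcal{E}(f,\cD,\tilde{\cD})$, then transport that region to the corresponding winning region under $\cD'$ by a measure-preserving translation and pay the Laplace density ratio to change datasets.

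First, fix $f \in \proj{\cF}{\tilde{\cD}_x}$. Applying Lemma~\ref{lem:mzero} to both $\cD$ and $\cD'$, there is a null set $B$ such that off $B$ the minimizer of $\cL_{\ww,\cD,\tilde{\cD}}$ over $\proj{\cF}{\tilde{\cD}_x}$ is unique (and likewise under $\cD'$). Modulo this null event, the event $\{\erm(\proj{\cF}{\tilde{\cD}_x},\cL_{\ww,\cD,\tilde{\cD}}) = f\}$ is exactly $\{\ww \in \mathcal{E}(f,\cD,\tilde{\cD})\}$, and similarly for $\cD'$. Writing $p$ for the product Laplace density, this gives
\begin{equation}
\pp(\erm(\proj{\cF}{\tilde{\cD}_x},\cL_{\ww,\cD,\tilde{\cD}}) = f) \;=\; \int_{\mathcal{E}(f,\cD,\tilde{\cD})} p(\ww)\, d\ww.
\end{equation}

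Next, apply Lemma~\ref{lem:laplace} pointwise: for every $\ww$,
$p(\ww) \le e^{\varepsilon}\, p(\map_{f}(\ww))$. Plugging this into the integral and performing the change of variables $\ww' = \map_f(\ww)$, which is a translation of $\rr^m$ by a fixed vector depending only on $f$ and $\tilde{\cD}_y$ and hence has Jacobian one, we obtain
\begin{equation}
\int_{\mathcal{E}(f,\cD,\tilde{\cD})} p(\ww)\, d\ww \;\le\; e^{\varepsilon}\!\! \int_{\map_f(\mathcal{E}(f,\cD,\tilde{\cD}))} p(\ww')\, d\ww'.
\end{equation}
Finally, Lemma~\ref{lem:adjacent} shows $\map_f(\mathcal{E}(f,\cD,\tilde{\cD})) \subseteq \mathcal{E}(f,\cD',\tilde{\cD})$, so the right-hand side is at most $e^{\varepsilon}\, \pp(\erm(\proj{\cF}{\tilde{\cD}_x},\cL_{\ww,\cD',\tilde{\cD}}) = f)$, which is what we want. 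The result for arbitrary measurable $\cH \subseteq \proj{\cF}{\tilde{\cD}_x}$ then follows by summing over $f \in \cH$, noting that the projected class is finite (since $\cY$ is finite and $|\tilde{\cD}_x| = m$), so measurability is automatic.

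The only subtle step is verifying that we are allowed to invoke Lemma~\ref{lem:adjacent}, whose hypothesis requires $f$ to actually be the minimizer under $(\ww,\cD)$; this is precisely the content of membership in $\mathcal{E}(f,\cD,\tilde{\cD})$, so the restriction of the integral to this set is exactly what makes the coupling valid. Everything else is routine bookkeeping: the null sets from Lemma~\ref{lem:mzero} are discarded, and the translation-invariance of Lebesgue measure makes the change of variables clean. I expect no genuine obstacle here; the lemma is essentially a packaging of the three prior results in the order stated above.
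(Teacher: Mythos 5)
Your proposal is correct and follows essentially the same route as the paper's proof: rewrite the output probability as the Laplace mass of the winning region $\mathcal{E}(f,\cD,\tilde{\cD})$ modulo the null set from Lemma~\ref{lem:mzero}, pay the $e^{\varepsilon}$ density ratio from Lemma~\ref{lem:laplace}, perform the unit-Jacobian translation $\map_f$, and use the inclusion $\map_f(\mathcal{E}(f,\cD,\tilde{\cD}))\subseteq\mathcal{E}(f,\cD',\tilde{\cD})$ from Lemma~\ref{lem:adjacent}. The only difference is the (immaterial) order in which the inclusion and the change of variables are applied.
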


\begin{proof}
We calculate,
    \begin{align}
        \pp(\erm(\proj{\cF}{\tilde{\cD}_x},\cL_{\ww,\cD,\tilde{\cD}})=f)&=\pp(\ww\in\mathcal{E}(f,\cD,\tilde{\cD}))\\        
        &=\int_{\rr^m}\pp(\ww)\indi(\xi\in\mathcal{E}(f,\cD,\tilde{\cD}))d\ww\\ &=\int_{\rr^m\setminus B}\pp(\ww)\indi(\xi\in\mathcal{E}(f,\cD,\tilde{\cD}))d\ww&& \text{$B$ has 0 measure by Lemma \ref{lem:mzero}}\\
        &\leq \int_{\rr^m\setminus B}\pp(\ww)\indi(\map_{f}(\ww)\in\mathcal{E}(f,\cD',\tilde{\cD}))d\ww && \text{Lemma \ref{lem:adjacent}} \\
        &\leq \int_{\rr^m\setminus B}e^{\varepsilon}\pp(\map_f(\ww))\indi(\map_{f}(\ww)\in\mathcal{E}(f,\cD',\tilde{\cD}))d\ww && \text{Lemma \ref{lem:laplace}} \\
        &\leq\int_{\rr^m\setminus\map_f(B)}e^{\varepsilon}\pp(\ww)\indi(\ww\in\mathcal{E}(f,\cD',\tilde{\cD}))\frac{\partial\map_f}{\partial\ww}d\ww && \text{Change of variables } \ww \rightarrow \map_{f}(\ww)\\
        &= \int_{\rr^m}e^{\varepsilon}\pp(\ww)\indi(\ww\in\mathcal{E}(f,D',\tilde{\cD}))d\ww && \map_{f}(B) \text{ has 0 measure, } \left| \frac{\partial \map_{f}}{\partial \ww} \right| = 1 \\
        &=e^{\varepsilon}\pp(\ww\in\mathcal{E}(f,D',\tilde{\cD}))\\
        &=e^{\varepsilon}\pp(\erm(\proj{\cF}{\tilde{\cD}_x},\cL_{\ww,\cD',\tilde{\cD}})=f).
    \end{align}
\end{proof}

    \begin{theorem}\label{thm:privacy}
        Algorithm \ref{alg:RRSPM} is $\varepsilon$-pure differentially private.
    \end{theorem}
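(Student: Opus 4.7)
The plan is to reduce the privacy of \Cref{alg:RRSPM} to the key lemma \Cref{lem:projpriv} via the post-processing property of differential privacy. The output $\fhat$ is a (randomized) function of the vector $\ftil|_{\tilde{\cD}_x}$ of intermediate labels on the public data, and this vector is precisely what \Cref{lem:projpriv} controls.

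The first step is to show that $\fhat$ is a post-processing of $\ftil|_{\tilde{\cD}_x}$. Since \Cref{alg:RRSPM} invokes $\perturb$ with noise level $\gamma = 0$, the objective minimized inside \Cref{alg:output_perturb} reduces to $R(f) = \norm{f - \ftil}_m^2 = \frac{1}{m}\sum_{j=1}^m (f(Z_j) - \ftil(Z_j))^2$, which depends on $\ftil$ only through the $m$-tuple $(\ftil(Z_1),\dots,\ftil(Z_m)) = \ftil|_{\tilde{\cD}_x}$. Hence, conditioned on the public data $\tilde{\cD}_x$ and on the vector $\ftil|_{\tilde{\cD}_x}$, the distribution of $\fhat$ does not depend on the private dataset $\cD$. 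Moreover, the public randomness $\tilde{\cD}_x$, $\tilde{\cD}_y$, and $\ww$ are all sampled independently of $\cD$ in \Cref{alg:RRSPM}, so the post-processing guarantee can be applied conditionally on them and then averaged out.

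The second step is to invoke \Cref{lem:projpriv}, which states that $\erm(\proj{\cF}{\tilde{\cD}_x},\cL_{\ww,\cD,\tilde{\cD}}) = \ftil|_{\tilde{\cD}_x}$ is $\varepsilon$-pure DP as a function of $\cD$ (after conditioning on $\tilde{\cD}_x$, $\tilde{\cD}_y$, $\ww$). Chaining this with the post-processing observation from the first step, and then integrating out the public randomness, yields the desired unconditional $\varepsilon$-DP guarantee for the whole algorithm.

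The main subtlety, and the part I expect to require the most care, is reconciling the fact that the algorithm computes $\ftil = \erm(\cF,\cL_{\ww,\cD,\tilde{\cD}})$ over the full class $\cF$ whereas \Cref{lem:projpriv} is formulated for ERM over the projected class $\proj{\cF}{\tilde{\cD}_x}$. The fix is a two-level minimization: writing $S(\bar f) = \{f \in \cF : f|_{\tilde{\cD}_x} = \bar f\}$, we have $\ftil|_{\tilde{\cD}_x} = \argmin_{\bar f \in \proj{\cF}{\tilde{\cD}_x}} \bigl[\, G_\cD(\bar f) + \sum_{j=1}^m \xi_j\,\ell(\bar f(Z_j),\tilde Y_j) \,\bigr]$, where $G_\cD(\bar f) = \min_{f \in S(\bar f)} \sum_{i=1}^n \ell(f(X_i),Y_i)$. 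Since each summand in $G_\cD$ lies in $[0,1]$, the effective private term $G_\cD$ has sensitivity at most one under neighboring datasets, so the Laplace-shift coupling in \Cref{lem:adjacent} and the change-of-variables computation in the proof of \Cref{lem:projpriv} go through verbatim with $G_\cD$ in place of the original private empirical loss; this legitimizes the direct application of the lemma to $\ftil|_{\tilde{\cD}_x}$ and completes the proof.
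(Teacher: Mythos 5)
Your proof is correct and takes essentially the same route as the paper's: establish privacy of the projected minimizer $\ftil\vert_{\tilde{\cD}_x}$ via \Cref{lem:projpriv}, then observe that $\fhat$ (the output of $\perturb$ with $\gamma=0$) is a post-processing of that projection. Your two-level minimization argument with $G_\cD(\bar f)=\min_{f\in S(\bar f)}\sum_i \ell(f(X_i),Y_i)$, reconciling ERM over $\cF$ with ERM over $\proj{\cF}{\tilde{\cD}_x}$, is in fact a point the paper glosses over when invoking \Cref{lem:projpriv}, and your observation that $G_\cD$ retains sensitivity one is exactly what makes that invocation legitimate.
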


    \begin{proof}
        Let $\cD$ and $\cD'$ be any neighbouring datasets. Fix any function $\hat{f}\in\cF$. We now show that 
        \[
            \pp(\rrspm(\cD)=\hat{f})\leq e^{\varepsilon}\pp(\rrspm(\cD')=\hat{f}),
        \]

        where the probability is taken over the randomness of the algorithm. From the definition of the projection, we know that there exists a unique function in the projection say, $\tilde{f}\in \proj{\cF}{\tilde{\cD}_x}$ such that $\hat{f}(Z_i)=\tilde{f}(Z_i)$ for all $i\in[m]$. 

        Using Lemma \ref{lem:projpriv}, we know that 

        \[
              \pp(\erm(\proj{\cF}{\tilde{\cD}_x},\cL_{\ww,\cD,\tilde{\cD}})=\tilde{f})\leq e^{\varepsilon}\pp(\erm(\proj{\cF}{\tilde{\cD}_x},\cL_{\ww,\cD',\tilde{\cD},\tilde{\cD}_y})=\tilde{f}).
        \]

        As defined in the algorithm, let $\tilde{\cL}(f)=\sum_{i=1}^m\ell(f(Z_i), \tilde{f}(Z_i))$. From the definition of $\tilde{\cL}$, it follows that $\hat{f}\in \underset{f\in\cF}{\arg\min}\;\tilde{\cL}(f)$. Following the post-processing guarantee of differential privacy, it is easy to see that 

        \[
            \pp(\rrspm(\cD)=\hat{f})\leq e^{\varepsilon}\pp(\rrspm(\cD')=\hat{f}).
        \]
    \end{proof}
    
    \subsection{Accuracy Analysis}\label{app:classifcation_accuracy}
        
    In this section we analyze the accuracy of our algorithm. Let $f^*$ denote the function in the hypothesis class that minimizes the loss with respect to the distribution $\nu$ i.e. $f^*\in\underset{f\in\cF}{\arg\min}L_{\nu}(f)$. Let $\hat{f}$ denote the output hypothesis of our algorithm. We show that the loss of $\hat{f}$ is close to $f^*$ with respect to the data generating distribution $\nu$.

    As in the algorithm description, let $\tilde{f}$ be the function that minimizes the perturbed loss. Our algorithm outputs $\hat{f}$ whose labelling is consistent with $\tilde{f}$ on the public dataset $\tilde{\cD}_x$. Since $\tilde{\cD}_x$ is sampled from the base distribution, it follows from VC theorem that $\hat{f}$ and $\tilde{f}$ are close under the base distribution $\mu$. We show in Lemma~\ref{lem:smoothclose} that $\hat{f}$ and $\tilde{f}$ are close under $\nu$ by leveraging that $\nu_x$ is a $\sigma$-smooth distribution.
    Using standard results about Laplace perturbations, we show that $f'$ is close to $\tilde{f}$ in Lemma~\ref{lem:laperror}, where $f'$ is the empirical risk minimizer over $\cD$. Using the VC theorem, it is easy to see that $f'$ is close to $f^*$ and consequently using the triangle inequality we finish the proof by showing that $f^*$ and $\hat{f}$ are close under $\nu$.

    We now state the VC theorem below which we use in our analysis.

    \begin{theorem}\label{th:uc}
         Let $\cD=\{(X_1,Y_1),\dots ,(X_n,Y_n)\}$ where for all $i\in[n]$, $(X_i,Y_i)\in \cX\times\{0,1\}$ are sampled from a fixed distribution $\nu$. Let $L_D(f)=\frac{1}{n}\abs{\{i:f(X_i)\neq Y_i\}}$ and let $L_\nu(f)=\ee_{(X,Y)\sim \nu}[ \indi(f(X)\neq Y]$. If the function class $\cF$ has $VC$ dimension $d$ then,
    
        $$\pp\left(\underset{f\in\cF}{\sup}\vert L_{\cD}(f)-L_{\nu}(f)\vert \leq O\left(\sqrt{\frac{d+\log(1/\beta)}{m}}\right)\right)\geq 1-\beta .$$
    
        In particular if $f^*\in \underset{f\in\cF}{\arg\min}\;L_{\cD}(f)$ then, 
    
        $$\pp\left(\vert L_{\nu}(f^*)-\underset{f\in\cF}{\arg\min}L_{\nu}(f)\vert \leq O\left(\sqrt{\frac{d+\log(1/\beta)}{m}}\right)\right)\geq 1-\beta .$$       
    \end{theorem}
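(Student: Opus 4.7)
The plan is to reduce the uniform convergence bound to a Rademacher complexity bound and then apply the standard Sauer–Shelah combinatorial argument to control the Rademacher complexity in terms of the VC dimension $d$. Let $\cG = \{g_f(x,y) = \indi(f(x)\neq y) : f \in \cF\}$ denote the $\{0,1\}$-valued loss class induced by $\cF$. Since $\cG$ is obtained from $\cF$ by composition with the $\{0,1\}$-valued map $y' \mapsto \indi(y' \neq y)$, it is well known (and easy to verify by a shattering argument) that $\vc(\cG) \leq O(\vc(\cF)) = O(d)$.

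First, I would use McDiarmid's bounded differences inequality applied to the function $\Phi(\cD) = \sup_{f \in \cF} |L_{\cD}(f) - L_\nu(f)|$. Replacing any single $(X_i,Y_i)$ changes $\Phi$ by at most $1/n$, so McDiarmid gives with probability $1-\beta$,
\begin{equation}
    \Phi(\cD) \leq \ee[\Phi(\cD)] + \sqrt{\frac{\log(1/\beta)}{2n}}.
\end{equation}
Next, I would bound $\ee[\Phi(\cD)]$ by $2 \cdot \ee[\sup_{g \in \cG} \frac{1}{n}\sum_{i=1}^n \epsilon_i g(X_i,Y_i)]$ using the standard symmetrization trick (introduce a ghost sample and Rademacher signs $\epsilon_i$). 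Denote the right-hand side by $2 \rad_n(\cG)$.

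The main step is to bound $\rad_n(\cG)$. I would first apply Sauer–Shelah: for any fixed sample of size $n$, the projection of $\cG$ onto that sample has cardinality at most $(en/d)^d$. By Massart's finite-class lemma (applied conditionally on the sample and then integrated), one obtains
\begin{equation}
    \rad_n(\cG) \leq \sqrt{\frac{2 \log((en/d)^d)}{n}} = O\!\left(\sqrt{\frac{d \log(n/d)}{n}}\right).
\end{equation}
This gives the claimed $O(\sqrt{(d + \log(1/\beta))/n})$ bound up to a $\sqrt{\log n}$ factor; to remove the logarithm (and match the stated rate) I would instead invoke a chaining bound, e.g., Dudley's entropy integral combined with the Haussler covering number bound $\cN(\cG, \epsilon, L_2) \leq C (1/\epsilon)^{Cd}$, which yields the tight $O(\sqrt{d/n})$ rate. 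Combining the concentration step with the chaining bound on the expectation finishes the first inequality.

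For the "in particular" statement, observe that for any $f^\star \in \argmin_{f \in \cF} L_\nu(f)$, and the empirical minimizer $\hat f \in \argmin_{f \in \cF} L_{\cD}(f)$, we have $L_\nu(\hat f) - L_\nu(f^\star) \leq (L_\nu(\hat f) - L_{\cD}(\hat f)) + (L_{\cD}(\hat f) - L_{\cD}(f^\star)) + (L_{\cD}(f^\star) - L_\nu(f^\star)) \leq 2 \Phi(\cD)$, since the middle term is nonpositive. Substituting the uniform bound from the first part and adjusting constants yields the stated bound. The main technical obstacle is the chaining step needed to shave the $\sqrt{\log n}$ factor, which is where tight control of the covering numbers via Haussler's bound is essential; everything else is routine symmetrization and concentration.
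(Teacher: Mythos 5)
Your proof is correct, but note that the paper does not actually prove Theorem~\ref{th:uc}: it is stated as the classical VC uniform convergence theorem and used as a black box in the accuracy analysis of Algorithm~\ref{alg:RRSPM} (e.g., in Lemma~\ref{lem:smoothclose} and Theorem~\ref{thm:bcpac}), so there is nothing in the paper to compare against. What you give is the standard textbook argument --- McDiarmid's inequality for concentration of $\sup_{f}\abs{L_{\cD}(f)-L_\nu(f)}$ around its mean, symmetrization to pass to the Rademacher complexity of the loss class $\cG$ (whose VC dimension equals that of $\cF$ since the labels are binary), and then either Sauer--Shelah with Massart's lemma (losing a $\sqrt{\log n}$ factor) or Dudley's entropy integral with Haussler's covering bound to get the sharp $O(\sqrt{d/n})$ rate --- followed by the routine three-term ERM decomposition for the ``in particular'' clause. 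All steps are sound. One small observation: the theorem as printed has $m$ in the denominator while the sample is of size $n$; this is a typo in the paper (the result is applied both to the private sample of size $n$ and to the public sample of size $m$), and your reading with $n$ is the correct one.
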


    \begin{lemma}\label{lem:smoothclose}
         Let $\nu_x$ be a $\sigma$-smooth distribution that is $\norm{\frac{d\nu_x}{d\mu}}\leq \frac{1}{\sigma}$. Let $L_{\nu}$ be the loss function as defined in Defintion~\ref{def:pac}. Let $\hat{f}$ be the hypothesis returned by our algorithm and let $\tilde{f}\in\erm(\cF,\cL_{\ww,\cD,\tilde{\cD}})$. Then,  
         \[
            L_{\nu}(\hat{f})-L_{\nu}(\tilde{f})\leq O\left(\sqrt{\frac{d+\log(1/\beta)}{m\sigma^2}}\right),
            \]
        with probability $1-\beta.$
    \end{lemma}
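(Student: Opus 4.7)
The plan is to reduce the gap $L_\nu(\hat f) - L_\nu(\tilde f)$ to a disagreement probability under $\nu_x$, transfer this to a disagreement probability under $\mu$ using the smoothness assumption, and finally bound the latter by a uniform convergence argument using the fact that $\hat f$ and $\tilde f$ agree on the public sample by construction.

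First I would unpack the construction. By the final line of \Cref{alg:RRSPM}, $\hat f = \perturb(\tilde f, \cQ, 0, \tilde\cD_x)$. With $\gamma = 0$, \Cref{alg:output_perturb} returns any $\hat f \in \cF$ minimizing $\|f - \tilde f\|_m^2$ on $\tilde \cD_x = \{Z_1, \dots, Z_m\}$; since $\tilde f \in \cF$ achieves value $0$, any such minimizer satisfies $\hat f(Z_i) = \tilde f(Z_i)$ for all $i \in [m]$. Next, because $\ell$ is the $0$-$1$ loss, for any $(X,Y)$ we have $|\ell(\hat f(X), Y) - \ell(\tilde f(X), Y)| \le \indi[\hat f(X) \ne \tilde f(X)]$, so taking expectations under $\nu$ gives
\begin{align}
L_\nu(\hat f) - L_\nu(\tilde f) \le \pp_{X \sim \nu_x}(\hat f(X) \ne \tilde f(X)).
\end{align}
Applying the $\sigma$-smoothness of $\nu_x$ with respect to $\mu$, the right-hand side is at most $\frac{1}{\sigma} \pp_{X \sim \mu}(\hat f(X) \ne \tilde f(X))$.

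It then remains to control $\pp_{X \sim \mu}(\hat f(X) \ne \tilde f(X))$ using the facts that $Z_1, \dots, Z_m \sim \mu$ are i.i.d.\ and that $\hat f, \tilde f$ agree on all $Z_i$. I would introduce the symmetric-difference class $\cH = \{ x \mapsto \indi[f(x) \ne f'(x)] : f, f' \in \cF\}$ and note that a standard argument (XOR of two binary classes, or Sauer--Shelah applied twice) gives $\vc(\cH) = O(d)$. Invoking \Cref{th:uc} to $\cH$ with respect to $\mu$, with probability at least $1 - \beta$ we have, uniformly over $f, f' \in \cF$,
\begin{align}
\pp_{X \sim \mu}(f(X) \ne f'(X)) \le \frac{1}{m} \sum_{i=1}^m \indi[f(Z_i) \ne f'(Z_i)] + O\!\left(\sqrt{\tfrac{d + \log(1/\beta)}{m}}\right).
\end{align}
Specializing to $f = \hat f$ and $f' = \tilde f$ kills the empirical term, leaving $O(\sqrt{(d+\log(1/\beta))/m})$. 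Combining with the smoothness reduction yields the claimed $O\!\bigl(\sqrt{(d + \log(1/\beta))/(m\sigma^2)}\bigr)$ bound.

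The main subtlety, rather than an obstacle, is justifying the uniform-convergence step over $\cH$: one must remember that $\hat f$ is data-dependent (it is chosen based on the $Z_i$ via the ERM oracle), which is precisely why the uniform bound is needed rather than a pointwise Chernoff. Once that is in place, the only other care required is the VC-dimension calculation for $\cH$, which is standard and can be cited directly. I expect no other steps to require nontrivial work.
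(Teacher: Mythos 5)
Your proposal is correct and follows essentially the same route as the paper's proof: agreement of $\hat f$ and $\tilde f$ on the public sample, a uniform convergence bound to control $\pp_{X\sim\mu}(\hat f(X)\ne\tilde f(X))$, transfer to $\nu_x$ via the density-ratio bound, and the pointwise inequality $|\ell(\hat f(X),Y)-\ell(\tilde f(X),Y)|\le\indi[\hat f(X)\ne\tilde f(X)]$. The one difference is that you run uniform convergence over the symmetric-difference class $\cH$, whereas the paper applies its VC theorem to the sample relabeled by $\tilde f$; your version is arguably the cleaner way to handle the fact that both $\hat f$ and $\tilde f$ depend on $Z_1,\dots,Z_m$.
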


    \begin{proof}
        Let $\alpha' = O\left(\sqrt{\frac{d+\log(1/\beta)}{m}}\right)$. First we show that $\vert\underset{\substack{{X}\sim\mu\\
            Y=\tilde{f}(X)}}{L(\hat{f})}-\underset{\substack{{X}\sim\mu\\
            Y=\tilde{f}(X)}}{L(\tilde{f})}\vert\leq 2\alpha'$ with probability at least $1-\beta$. Consider the dataset $\hat{D}=\{(X_1,\tilde{f}(X_1),\dots ,(X_m,\tilde{f}(X_m))\}$ we minimize over and output $\hat{f}$. 
        Using Theorem \ref{th:uc} we know that, 
        \[
            \pp\left(\vert\underset{\substack{{X}\sim\mu\\
            Y=\tilde{f}(X)}}{L(\hat{f})}-\underset{\substack{{X}\sim\mu\\
            Y=\tilde{f}(X)}}{L(\tilde{f})}\vert\leq 2\alpha'+ \vert L_{\hat{D}}(\hat{f})-L_{\hat{D}}(\tilde{f})\vert\right)\geq 1-\beta.
        \]
        Since $\underset{\substack{{X}\sim\mu\\
            Y=\tilde{f}(X)}}{L(\tilde{f})}=0$ and $\vert L_{\tilde{\cD}}(\hat{f})-L_{\tilde{\cD}}(\tilde{f})\vert=0$ and  we get that with probability at least $1-\beta$,

        \begin{equation}\label{eq:muuc}
            \pp_{X\sim\mu}(\hat{f}(x)\neq\tilde{f}(x))\leq 2\alpha'.
        \end{equation}
        We now show that $\pp_{X\sim\nu_x}(\hat{f}(x)\neq\tilde{f}(x))\leq 2\alpha'/\sigma$ with probability at least $1-\beta$.
        \begin{align}
            \pp_{X\sim\nu_x}(\hat{f}(x)\neq\tilde{f}(x))&=\ee_X\sim\nu_x[\indi(\hat{f}(x)\neq\tilde{f}(x))]\\
            &=\sum_{X\in\cX}\pp(\nu_x=X)\indi(\hat{f}(x)\neq\tilde{f}(x))\\
            &\leq \frac{1}{\sigma}\sum_{X\in\cX}\pp(\mu=X)\indi(\hat{f}(x)\neq\tilde{f}(x))&& \text{Since }\norm{\frac{d\nu_x}{d\mu}}\leq \frac{1}{\sigma}\\
            &= \frac{1}{\sigma}\left(\ee_X\sim\mu[\indi(\hat{f}(x)\neq\tilde{f}(x))]\right)\\
            &= \frac{1}{\sigma}\left(\pp_{X\sim\mu}(\hat{f}(x)\neq\tilde{f}(x))\right)\\
            &\leq \frac{2\alpha}{\sigma}&& \text{Using Eq \ref{eq:muuc}}
        \end{align}
        Having shown that with probability at least $1-\beta$,
        \begin{equation}\label{eq:smoothclose}
            \pp_{X\sim\nu_x}(\hat{f}(x)\neq\tilde{f}(x))\leq 2\alpha'/\sigma,
        \end{equation}
         we now prove that with probability at least $1-\beta$,
        \[
            \vert\underset{\substack{{X}\sim\nu_x\\
            Y\sim\nu_y\mid X}}{L(\hat{f})}-\underset{\substack{{X}\sim\nu_x\\
            Y\sim\nu_y\mid X}}{L(\tilde{f})}\vert\leq 2\alpha'/\sigma.
        \]
        which is equivalent to the theorem statement. 
        Using the triangle inequality we get
        \begin{align}
            \vert\underset{\substack{{X}\sim\nu_x\\
            Y\sim\nu_y\mid X}}{L(\hat{f})}-\underset{\substack{{X}\sim\nu_x\\
            Y\sim\nu_y\mid X}}{L(\tilde{f})}\vert &\leq \ee_{X\sim\nu_x}\left[\left\vert\pp_{Y\sim\nu_{y}\mid X}(\hat{f}(X)\neq Y)-\pp_{Y\sim\nu_{y}\mid X}(\tilde{f}(X)\neq Y)\right\vert\right]\\
            &=\ee_{X\sim\nu_x}[\indi(\tilde{f}(X)\neq\hat{f}(X))]\\
            &= 2\alpha'/\sigma
        \end{align}
        where the second equation follows from the observation that for any value of $X$, if $\hat{f}(X)=\tilde{f}(X)$ then the difference of the probabilities equate to $0$ and if $\hat{f}(X)\neq\tilde{f}(X)$ then the difference of the probabilities equate to $1$ and the last equation follows from Equation \ref{eq:smoothclose}.

        Substituting the value of $\alpha'$ proves that         with probability $1-\beta$,       
        \[
            L_{\nu}(\hat{f})-L_{\nu}(\tilde{f})\leq O\left(\sqrt{\frac{d+\log(1/\beta)}{m\sigma^2}}\right).
            \]
    \end{proof}

    \begin{lemma}\label{lem:laperror}
         Let $\cF$ be a function class with $VC$ dimension $d$. Let $\cD=\{(X_1,Y_1),\dots ,(X_n,Y_n)\}$ and $\tilde{\cD}_x=\{Z_1,\dots ,Z_m\}$ where $Z_i\sim\mu$. Let $L_{\cD}$ be the loss function as defined in Definion~\ref{def:pac}. Let $f'\in\erm(\cF,\cL_{\cD})$ and let $\tilde{f}\in\erm(\cF,\cL_{\ww,\cD,\tilde{\cD}})$ then,
         \[
         L_{\cD}(\tilde{f})-L_{\cD}(f')\leq \frac{4m^2\log(m/\beta)}{\varepsilon n},
         \]
         with probability $1-\beta$.
    \end{lemma}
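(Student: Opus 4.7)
The plan is a short, direct comparison argument that exploits the optimality of $\tilde{f}$ for the perturbed objective together with a standard Laplace tail bound. Let me write $\cL_{\ww,\cD,\tilde{\cD}}(f) = n L_\cD(f) + N(f)$ where $N(f) = \sum_{i=1}^m \xi_i \cdot \ell(f(Z_i), \tilde{Y}_i)$ is the noise term (note $nL_\cD(f) = \sum_{i=1}^n \ell(f(X_i),Y_i)$ since the loss is the average). Since $\tilde{f} \in \erm(\cF, \cL_{\ww,\cD,\tilde{\cD}})$ and $f' \in \cF$, optimality gives $nL_\cD(\tilde{f}) + N(\tilde{f}) \le nL_\cD(f') + N(f')$, and rearranging yields
\[
n\left(L_\cD(\tilde{f}) - L_\cD(f')\right) \le N(f') - N(\tilde{f}) = \sum_{i=1}^m \xi_i \bigl(\ell(f'(Z_i),\tilde{Y}_i) - \ell(\tilde{f}(Z_i),\tilde{Y}_i)\bigr).
\]

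The next step is to bound the right-hand side using the fact that $\ell$ takes values in $\{0,1\}$, so each summand is at most $|\xi_i|$ in magnitude. Hence
\[
n\left(L_\cD(\tilde{f}) - L_\cD(f')\right) \le \sum_{i=1}^m |\xi_i|.
\]
It therefore suffices to show that $\sum_{i=1}^m |\xi_i| \le 4m^2\log(m/\beta)/\varepsilon$ with probability at least $1-\beta$.

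For $\xi \sim \Lap(2m/\varepsilon)$, the standard tail bound gives $\pp(|\xi| > t) \le \exp(-t\varepsilon/(2m))$. Setting $t = (2m/\varepsilon)\log(m/\beta)$ yields $\pp(|\xi_i| > t) \le \beta/m$, so by a union bound over $i \in [m]$, with probability at least $1-\beta$ we have $|\xi_i| \le (2m/\varepsilon)\log(m/\beta)$ for all $i$ simultaneously. Summing gives $\sum_{i=1}^m |\xi_i| \le 2m^2\log(m/\beta)/\varepsilon$, which is within the stated constant. Dividing by $n$ completes the proof.

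There is essentially no hard obstacle here: the argument is entirely mechanical and follows the standard ``Laplace-mechanism-as-regret'' template from \citet{neel2019use}. The only thing to be careful about is the bookkeeping of the constant in the tail bound (the factor of $4$ in the statement appears to absorb a minor slackness in the tail inequality or a slight enlargement of the failure event, e.g. using $\log(2m/\beta)$; either suffices).
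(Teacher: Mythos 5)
Your proof is correct and follows essentially the same route as the paper's: optimality of $\tilde{f}$ for the perturbed objective plus a union-bounded Laplace tail bound on the $\xi_i$, then dividing by $n$. The only difference is cosmetic — the paper bounds the noise contribution of $\tilde{f}$ and $f'$ separately by $m\cdot\max_i|\xi_i|$ each (giving the constant $4$), whereas you bound the difference $N(f')-N(\tilde{f})$ directly by $\sum_i|\xi_i|$, which is a factor of $2$ tighter and comfortably within the stated bound.
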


  \begin{proof}
        Following the algorithm we know that for all $i\in [m],\xi_i\sim \lap(2m/\varepsilon)$. Using Chernoff's bound and a union bound we get that with probability $1-\beta$,
        \[
            \forall i\in[m],\vert \xi_i\vert \leq \frac{2m\log (m/\beta)}{\epsilon}.
        \]
        Since $\tilde{f}(Z_i)\in\{0,1\}$ for all $Z_i\in\tilde{\cD}$, with probability $1-\beta$, $\cL_{\ww,\cD,\tilde{\cD}}(\tilde{f})\geq \cL_{\cD}(\tilde{f})-m\cdot \frac{2m\log (m/\beta)}{\epsilon}$. Similarly, $\cL_{\ww,\cD,\tilde{\cD}}({f'})\leq \cL_{\cD}({f'})+m\cdot \frac{2m\log (m/\beta)}{\epsilon}$.
        Dividing by $n$ and combining the bounds we get,
        \[
            L_{\cD}(\tilde{f})-L_{\cD}(f')\leq \frac{4m^2\log(m/\beta)}{\varepsilon n},
        \]
        with probability $1-\beta.$
    \end{proof}

        \begin{theorem}\label{thm:bcpac}
        Let $\hat{f}$ be as defined in Algorithm with $\cD$ sampled from some distribution $\nu$ such that $\norm{\frac{d\nu_x}{d\mu}}\leq \frac{1}{\sigma}$. Suppose the function class $\cF$ has VC-dimension $d$ then setting 
        \[
            m= O\left(\frac{d+\log(1/\beta)}{\alpha^2\sigma^2}\right)
        \]
        yields an $\varepsilon$- pure differentially private $(\alpha,\beta)$-learner as long as 
        \[
            n\geq  \tilde{\Omega}\left(\frac{d^2\log(1/\beta)}{\alpha^5\sigma^4\epsilon}\right),
        \]
        where $\tilde{\Omega}$ hides $\log$ factors.
    \end{theorem}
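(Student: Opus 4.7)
}
The privacy guarantee is already established: by Theorem~\ref{thm:privacy}, Algorithm~\ref{alg:RRSPM} is $\varepsilon$-pure differentially private for every choice of $m$, so I only need to argue the accuracy bound and then tune $m$ and $n$. Let $f^\star \in \argmin_{f\in\cF} L_\nu(f)$ denote the population risk minimizer, $f' \in \erm(\cF,\cL_{\cD})$ the empirical risk minimizer on $\cD$, $\tilde f \in \erm(\cF, \cL_{\ww,\cD,\tilde{\cD}})$ the perturbed-ERM output of the algorithm on the augmented dataset, and $\hat f$ the function returned after projecting onto $\tilde{\cD}_x$. I will decompose the excess risk as
\begin{align}
L_\nu(\hat f) - L_\nu(f^\star)
&= \bigl[L_\nu(\hat f) - L_\nu(\tilde f)\bigr] + \bigl[L_\nu(\tilde f) - L_{\cD}(\tilde f)\bigr] \\
&\quad + \bigl[L_{\cD}(\tilde f) - L_{\cD}(f')\bigr] + \bigl[L_{\cD}(f') - L_\nu(f^\star)\bigr],
\end{align}
and bound each bracket by a quarter of $\alpha$ using a lemma already proved in the excerpt.

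The first bracket is controlled by Lemma~\ref{lem:smoothclose}, which exploits $\sigma$-smoothness and uniform convergence on the public sample to give $L_\nu(\hat f) - L_\nu(\tilde f) = O\bigl(\sqrt{(d+\log(1/\beta))/(m\sigma^2)}\bigr)$ with probability $1-\beta$; setting $m = \Theta((d+\log(1/\beta))/(\alpha^2\sigma^2))$ drives this below $\alpha/4$ and matches the prescribed choice of $m$. The second and fourth brackets are textbook VC-uniform-convergence terms controlled by Theorem~\ref{th:uc}, each of size $O(\sqrt{(d+\log(1/\beta))/n})$, which are $\le \alpha/4$ whenever $n = \Omega((d+\log(1/\beta))/\alpha^2)$. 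The third bracket is where the Laplace perturbation cost appears: Lemma~\ref{lem:laperror} gives $L_{\cD}(\tilde f) - L_{\cD}(f') \le 4 m^2 \log(m/\beta)/(\varepsilon n)$, and making this $\le \alpha/4$ requires $n = \tilde{\Omega}(m^2/(\varepsilon\alpha))$; plugging in the chosen $m$ yields $n = \tilde{\Omega}(d^2 \log(1/\beta)/(\alpha^5 \sigma^4 \varepsilon))$, which is the binding constraint and exactly the sample complexity stated in the theorem.

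To finish, I take a union bound over the $O(1)$ high-probability events invoked (the VC uniform convergence events, Lemma~\ref{lem:smoothclose}, and the Laplace magnitude bound inside Lemma~\ref{lem:laperror}), rescaling $\beta$ by a constant factor, so that all four brackets are simultaneously bounded with probability at least $1-\beta$. Combining them gives $L_\nu(\hat f) - L_\nu(f^\star) \le \alpha$, establishing the $(\alpha,\beta)$-learning guarantee. The main obstacle is not any single estimate but the interaction between them: the perturbation lemma forces $n$ to grow like $m^2/\varepsilon$, and because $m$ itself scales as $1/(\alpha\sigma)^2$, this is what pushes the final $n$ to an $\alpha^{-5}\sigma^{-4}$ rate rather than the $\alpha^{-2}$ one would hope for from pure VC theory; this tradeoff is intrinsic to using the public set simultaneously for projection stability and for Laplace-perturbed oracle privacy, and all parameters must be tuned jointly to close the argument.
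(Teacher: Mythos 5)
Your proposal is correct and follows essentially the same route as the paper's proof: the identical four-term triangle-inequality decomposition through $\tilde f$ and $f'$, with Lemma~\ref{lem:smoothclose} handling the projection/smoothness term, Theorem~\ref{th:uc} handling the two uniform-convergence terms, Lemma~\ref{lem:laperror} handling the Laplace perturbation term, and the same tuning of $m$ and $n$ combined with Theorem~\ref{thm:privacy} for privacy. No substantive differences to report.
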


    \begin{proof}
        Let $\nu$ be the distribution from which the given dataset $\cD$ is sampled where $\norm{\frac{d\nu_x}{d\mu}}\leq \frac{1}{\sigma}$. Let $\hat{f}$ be the output of our algorithm, $f'\in\underset{f\in\cF}{\arg\min}\cL_{\cD}(f)$, $f^*$ be the target function from our function class $\cF$ and $\tilde{f}\in\underset{f\in\cF}{\arg\min}\cL_{\ww,\cD,\tilde{\cD}}(f)$. We wish to compare the guarantee of the function $\hat{f}$  with respect to the function $f^*$ with respect to the distribution $\nu$ i.e. $\vert L_{\nu}(\hat{h})-L_{\nu}(f^*)\vert$.

        In our analysis below, we break the loss function several times using triangle inequality and bound each term using previously stated results via a union bound.
        \begin{align}
            &\vert L_{\nu}(\hat{f})-L_{\nu}(f^*)\vert\\&\leq\vert L_{\nu}(\hat{f})-L_{\nu}(\tilde{f})\vert+\vert L_{\nu}(\tilde{f})-L_{\cD}(\tilde{f})\vert+\vert L_{\cD}(\tilde{f})-L_{\cD}(f')\vert+\vert L_{\cD}(f')-L_{\nu}(f^*)\vert\\
            &\leq \underbrace{O\left(\sqrt{\frac{d+\log(1/\beta)}{\sigma^2m}}\right)}_{\text{Lemma } \ref{lem:smoothclose}}+\underbrace{O\left(\sqrt{\frac{d+\log(1/\beta)}{n}}\right)}_{\text{Theorem }\ref{th:uc}}+\underbrace{O\left(\frac{m^2\log(m/\beta)}{\varepsilon n}\right)}_{\text{Lemma }\ref{lem:laperror}}+\underbrace{O\left(\sqrt{\frac{d+\log(1/\beta)}{n}}\right)}_{\text{Theorem }\ref{th:uc}}\\
            &=O\left(\sqrt{\frac{d+\log(1/\beta)}{\sigma^2m}}+\sqrt{\frac{d+\log(1/\beta)}{n}}+\frac{m^2\log(m/\beta)}{\varepsilon n}\right).
        \end{align}

        Setting         
        \[
        m=O\left(\frac{d+\log(1/\beta)}{\alpha^2\sigma^2}\right),\; n= O\left(\frac{d^2\log d\cdot \log(1/\beta)}{\alpha^5\sigma^4\epsilon}\right)
        \]
        and combining the guarantee from Theorem \ref{thm:privacy} we get that $\rrspm$ (Algorithm \ref{alg:RRSPM}) is a $\varepsilon$- pure differentially private $(\alpha,\beta)$-learner.
    \end{proof}

\end{document}